\documentclass{article}

\usepackage{microtype}
\usepackage{graphicx}
\usepackage{subcaption}
\usepackage{booktabs} 

\usepackage{hyperref}

\usepackage[accepted]{icml2026}

\usepackage{amsmath}
\usepackage{amssymb}
\usepackage{mathtools}
\usepackage{amsthm}

\usepackage[capitalize,noabbrev]{cleveref}

\theoremstyle{plain}
\newtheorem{theorem}{Theorem}[section]
\newtheorem{proposition}[theorem]{Proposition}

\theoremstyle{definition}

\theoremstyle{remark}

\usepackage[textsize=tiny]{todonotes}

\icmltitlerunning{Differentially Private Synthetic Data via APIs 4: Tabular Data}

\usepackage{multirow}
\usepackage{xspace}
\usepackage{array,multirow,graphicx}
\usepackage{dsfont}
\usepackage{hyperref}

\usepackage{titletoc}

\newcommand{\ours}{Tab-PE\xspace}
\usepackage{wrapfig,lipsum}

\begin{document}

\twocolumn[
  \icmltitle{Differentially Private Synthetic Data via APIs 4: Tabular Data}

  \icmlsetsymbol{equal}{*}

  \begin{icmlauthorlist}
    \icmlauthor{Toan Tran}{emory}
    \icmlauthor{Arturs Backurs}{equal,msr}
    \icmlauthor{Zinan Lin}{equal,msr}
    \icmlauthor{Victor Reis}{equal,msr}
    \icmlauthor{Li Xiong}{equal,emory}
    \icmlauthor{Sergey Yekhanin}{equal,msr}
  \end{icmlauthorlist}

  \icmlaffiliation{emory}{Emory University}
  \icmlaffiliation{msr}{Microsoft Research}
  \icmlkeywords{Machine Learning, ICML}

  \vskip 0.3in
]

\printAffiliationsAndNotice{\icmlEqualContribution}

\begin{abstract}
This paper investigates the problem of generating synthetic tabular data with differential privacy (DP) guarantees, enabling data sharing in sensitive domains. Despite extensive study, state-of-the-art methods often focus on minimizing low-order marginal query errors and overlook the challenges posed by high-order correlations. To address this gap, we extend the Private Evolution (PE) framework, originally developed for DP-compliant image and text synthesis, to tabular data. We introduce \ours $-$ an algorithm for synthetic tabular data generation under DP constraints. \ours iteratively improves a candidate dataset via an evolutionary process that leverages tabular-specialized operators  to produce variations, privately scores them, and selects the highest-quality samples to retain and propagate. In contrast to the original PE, which relies on large foundation models, \ours employs heuristic operators with significantly lower computational costs, making PE more practical and scalable for tabular data. Through extensive experiments on real-world and simulation datasets, we demonstrate that \ours substantially outperforms prior baselines on datasets exhibiting high-order correlations. Compared to the best baseline -- AIM, \ours improves classification accuracy by up to 10\% while running 28$\times$ faster.\footnote{Our code is available at \url{https://github.com/microsoft/DPSDA}}
\end{abstract}

\section{Introduction} 
Tabular data is a foundational data modality underlying applications across many domains. However, because it often contains sensitive information such as patient records and financial transactions, using and sharing such data are challenging due to potential risks of exposing private information~\citep{9998482}. To tackle the privacy concerns, generating synthetic tabular data with differential privacy (DP) guarantees~\cite{dpdwork} has been a long-standing and active research area ~\citep{haoran_dpsynthesizer, zhang2021privsyn, vietri2022private, mckenna2022aim, liu2023generating, tran2024llm, cormode2025synthetic,maddock2025,rosenblatt2026}. This synthetic data can be used for various purposes -- such as data analysis, machine learning model training, and sharing with third parties -- while still providing formal privacy guarantees for individual records in the original dataset. 

Despite the potential, generating realistic tabular data remains challenging due to difficulties in capturing complex multi-dimensional data distributions under the privacy constraints. State-of-the-art (SOTA) methods~\citep{mckenna2022aim, vietri2022private, liu2023generating} address this by estimating low-order statistical queries (typically marginals) and then stitching them together to approximate the full data distribution. It is known that these marginal-based methods have a fundamental limitation: they do not scale well to model high-order correlations as the number of queries grows exponentially with the order (i.e., the number of relevant attributes)~\citep{Hu2023SoKPD, p2025dpfydata}. Since DP requires adding noise to each query answer, the noise accumulates as the number of queries increases. Therefore, estimating a large number of queries under strict privacy constraints is challenging and often leads to low-quality measurements. Despite the theoretical limitation, marginal-based methods still achieve strong empirical performance on existing benchmarks~\citep{chen2025benchmark, tao2022benchmark}.

We found that \textbf{\textit{most prior evaluations sidestep the challenge of high-order correlations}}. Datasets widely used in the literature appear to be dominated by low-order dependencies. Intuitively, we measure the order of correlations in a dataset by considering the downstream performance gap of simple classifiers that capture only low-order correlations (e.g., shallow decision trees) versus complex classifiers that leverage high-order correlations (e.g., deep trees). When the performance gap between these two types of classifiers is small, the dataset primarily reflects low-order correlations. Indeed, many commonly used datasets such as Adult~\citep{adult_2}, Bank~\citep{bank_marketing_222}, and Census~\citep{ding2021retiring} have this property.
Varying the maximum depth of the XGBoost trees~\citep{xgboost} yields trivial performance differences (typically $<$1\%) (\autoref{fig:app-dataset-low-order},~App.~\ref{app:dataset}). This characteristic makes the existing leading methods using statistical queries appear highly effective, even though they do not model high-order correlations. Consequently, much of the field has been implicitly optimized for this favorable setting, while leaving open the question of whether the current methods can truly preserve high-order correlations that are not revealed by standard benchmarks.

\begin{figure}[!htp]
    \centering
    \includegraphics[width=0.68\linewidth]{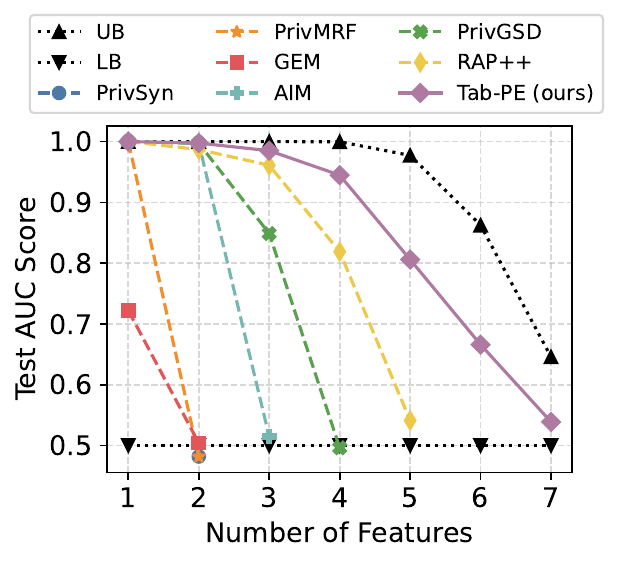}
    \caption{Stress test for high-order correlation modeling with XOR simulation datasets at $\epsilon = 1.0$. The binary label is assigned based on the parity of the number of positive features among all features, which requires capturing full-order correlations. UB stands for Upper Bound using private data. LB represents random guess.}
    \label{fig:xor}
\end{figure}

In this work, we focus on investigating this gap. We construct a stress test with XOR correlations, where labels are assigned by the parity of the number of positive features across all dimensions, requiring to capture full-order correlations. We show that SOTA methods quickly fail to capture such high-order correlations (\autoref{fig:xor}). To address this challenge, we propose a method based on the Private Evolution (PE) framework~\citep{lin2023differentially}, tailored for tabular data -- named \ours. PE is a breakthrough that has shown promising results in generating high-quality synthetic data in other domains such as images~\citep{lin2023differentially,lin2025differentially,wang2025synthesize} and texts~\citep{xie2024differentially,hou2024pretext,hou2025private,wang2025structbench}. It generates synthetic data through an iterative process of generating variations of the data and then selecting the best ones based on a DP voting mechanism. Previous methods proposed different ways to generate variations of images or text such as using foundation models~\citep{lin2023differentially, xie2024differentially,wang2025structbench} or using simulators~\citep{lin2025differentially}. For tabular data, \citet{swanberg2025api} argue that Private Evolution using LLMs for tabular generation does not perform satisfactorily.

\textbf{\textit{Remark 1 (API)}.} Following prior PE terminology, we use ``API" to mean a callable generation interface, not necessarily an external service. In Tab-PE, APIs are lightweight local operators that do not rely on external foundation models or expensive services.

We design simple yet effective and efficient heuristic operators/APIs for generating variations of tabular data \emph{without using any foundation models}. Building on the PE framework, \ours first initializes a random synthetic dataset and then iteratively refines it. In each iteration, we generate variations by simply adding controlled random noise to numerical features and resampling categorical features with a scheduled probability. The synthetic samples are then scored by a DP voting mechanism based on full-record nearest-neighbor matching to private data, which can implicitly capture complex, high-dimensional dependencies. High-scoring samples are selected for the next iteration, enabling an iterative refinement process. We show that \ours outperforms SOTA methods on a wide range of settings and is the most computationally efficient. Overall, our contributions can be summarized as follows:

\begin{itemize}
    \item We revisit the challenge of modeling high-order correlations in differentially private synthetic tabular data generation. Our stress test reveals that SOTA methods fail to capture such correlations.
    \item We propose \ours, a method based on the Private Evolution framework, with simple yet effective and efficient evolutionary operators for generating variations of tabular data without using any foundation models.
    \item We provide a broad collection of new datasets and settings that better reflect high-order correlations for evaluating differentially private tabular data generation methods, going beyond the standard benchmarks that are dominated by low-order correlations.
    \item Extensive experiments demonstrate that \ours consistently outperforms the baselines for high-order fidelity and downstream utility, especially under strict privacy regimes. \ours is also the most computationally efficient method and up to 28$\times$ faster than utility-competitive baselines without requiring GPUs.
\end{itemize}

\section{Related Works}
\textbf{Differentially Private Tabular Synthesis}. DP synthetic tabular data is a long-standing problem with many prior works~\citep{yang2024tabular, cormode2025synthetic}, spanning the general setting, constrained generation~\citep{ge2021kamino}, federated settings~\citep{flaim}, and relational databases~\citep{alimohammadi2025, clava}. In a real-world competition~\citep{nist2018dp_synth_data_challenge}, the winning solutions are dominated by methods that rely on marginal queries such as MST~\citep{mckenna2021mst}, DPSyn~\citep{li2021dpsyn}, and PrivBayes~\cite{zhang2017PrivBayes}. All these methods first answer the low-order marginal queries in a DP manner, then reconstruct the synthetic data from the noisy answers with different techniques, e.g., probabilistic graphical models (PGMs)~\citep{mckenna19pgm} and Bayesian networks. To improve this pipeline, more advanced methods (AIM~\citep{mckenna2022aim}, MRF~\citep{cai2021data}) dynamically select suitable marginal queries. Subsequently, RAP~\citep{vietri2022private}, RAP++~\citep{liu2021iterative}, PrivGSD~\citep{liu2023generating}, and PrivPGD~\citep{donhauser24privpgd} consider generation as an optimization process that iteratively refines the synthetic dataset to minimize the error on the noisy answers. Meanwhile, JAM~\citep{fuentes2024jam} aims to utilize publicly available data. \citet{maddock2025} and \citet{chen2025onesize} focus more on efficiency and scalability, with performance comparable to SOTA methods. Beyond the methods using statistical queries, there is a line of research that leverages machine learning methods for this problem. Inspired by the success of image generation, some works employ GANs~\citep{xie2018dpgan,yoon2018pategan}. Some recent works explore transformer-based architectures~\citep{castellon2023dptbart, sablayrolles2023private}, and large-language models~\citep{tran2024llm,rosenblatt2026}. Although these approaches narrow the gap to marginal-based methods compared with GANs, they still lag behind the marginal-based methods. A recent benchmark~\citep{chen2025benchmark} confirms that the marginal-based methods still dominate the field. In this work, we revisit the problem from the perspective of high-order correlations and propose a new efficient and effective framework that does not rely on statistical queries or model training.

\textbf{Private Evolution}. PE is a breakthrough for synthetic data generation with DP. PE was first introduced by \citet{lin2023differentially} for images. Unlike previous synthesizers, which require model training/fine-tuning on private data~\citep{kurakin2024harnessing, dockhorn2023differentially}, PE instead leverages inference API access to pretrained foundation models. By employing an evolutionary process that iteratively refines the synthetic data, PE achieves SOTA results while being computationally efficient. \citet{xie2024differentially} extended PE to text, demonstrating its effectiveness by significantly outperforming LLM DP fine-tuning baselines. \citet{zou2025contrastive} enhanced the performance for text by utilizing multiple LLMs via a weighted fusion mechanism. Moreover, the PE framework has been adapted to federated learning settings to reduce communication costs while achieving better utility for language modeling~\citep{hou2024pretext,hou2025private}. Additionally, \citet{zhang2025pcevolve} modified PE for few-shot generation, while \citet{gonzalez2025private} studied theoretical convergence aspects of PE. For tabular data, \citet{swanberg2025api} applied PE with LLM-guided APIs. However, the authors argue that PE with LLM API access does not perform satisfactorily. While our work does not contradict their message, we demonstrate that PE using heuristic operators (without any foundation models) and appropriate designs can be both effective and computationally efficient.

$\quad$ While most prior PE-based works rely on foundation models, Sim-PE~\cite{lin2025differentially} was the first to show that the PE framework can leverage arbitrary data generators, including non-differentiable generator tools such as simulators. Examples include computer graphics renderers for image generation and robotics simulators for robotics applications. This highlights a key distinction between the PE framework and traditional training-based or model-based approaches for DP synthetic data: PE is not limited to machine learning models and can naturally incorporate both model-based and non-model-based generators within the same framework.
In this work, we further extend this idea to tabular data generation. While our random and variation APIs (\cref{sec:method}) share some high-level similarities with Sim-PE~\cite{lin2025differentially}---generating random values in the random API and adding perturbations in the variation API---our APIs are even simpler as they do not require any external tools such as simulators.
\section{Preliminaries}
\textbf{Differential Privacy}.
$(\epsilon, \delta)$-differential privacy (DP) is a property of a randomized algorithm $\mathcal{M}$ that guarantees that the output of $\mathcal{M}$ does not change much whether we add or remove any particular entry in the input. More precisely, given any two neighboring datasets $\mathcal{D}, \mathcal{D'}$ (one can be obtained from the other by deleting a single entry) and any possible set of outputs $S$, it holds that $\Pr[\mathcal{M}(\mathcal{D})\in S]\leq e^{\epsilon}\Pr[\mathcal{M}(\mathcal{D'})\in S] + \delta$~\citep{dwork2014algorithmic}.

\textbf{High-order correlation}. We define high-order correlation through a multivariate extension of mutual information. Let $X = \{X_1, X_2, ... X_k\}$ be a set of random variables representing $k$ attributes in a dataset. The \emph{total correlation} (also called \emph{multi-information}) $I(X_1, X_2, ..., X_k)$~\citep{5392532} quantifies the total amount of information shared among all $k$ attributes. 

We call that a \textbf{$k$-way correlation} exists among the $k$ attributes if the mutual information between the set of $k$ attributes is significantly greater than the maximum mutual information of any subset of $k-1$ attributes. Formally, we say that there is a $k$-way correlation among attributes $X_1, X_2, ..., X_k$ if:
\begin{equation}
{
\small
\begin{aligned}
\textstyle
G_k &= I(X_1, \ldots, X_k) \\
    & - \max_{ i \in \{1,\ldots,k\}} I(\{X_1, \ldots, X_k\} \setminus \{X_i\}) > \Delta
\end{aligned}
}
\label{eq:correlation-def}
\end{equation}
This definition captures the idea that the joint behavior of all $k$ attributes contains unique information, which is not well explained by any subset of $k-1$ attributes. We provide a proposition in App.~\ref{app:high-order-correlation} to show that a non-trivial gap of two tree-based predictors with max depth of $k$ and $k-1$ implies the existence of $k$-way correlation. This connection motivates us to constrain tree-based predictors to determine whether a dataset exhibits high-order correlations.

\section{Methodology -- \ours}
\label{sec:method}

\begin{figure}[!htp]
    \centering
    \includegraphics[width=\linewidth]{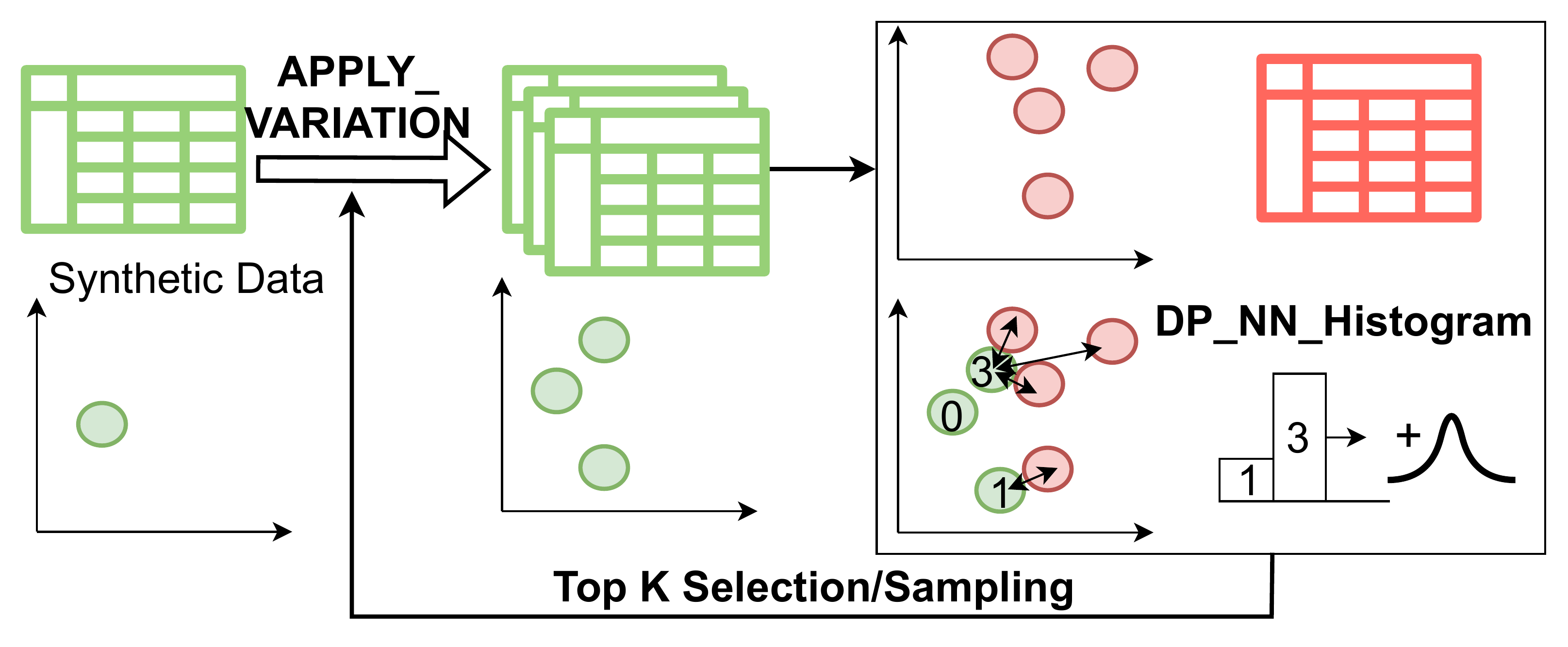}
    \caption{Illustration of \ours's workflow. The process starts with an initial set of synthetic samples and iteratively refines them through variations and private scoring.}
    \label{fig:method}
\end{figure}

\textbf{Overview}.
Sample $s = \{x_{\text{cat}^{(1)}}, \ldots, x_{\text{num}^{(1)}}, ..., c\}$ includes categorical attributes $x_{\text{cat}}$ and numerical attributes $x_{\text{num}}$, and class label $c$. $\mathcal{X}_{\text{cat}^{(i)}}$ and $\mathcal{X}_{\text{num}^{(j)}}$ denote the domains of categorical attribute $i$ and numerical attribute $j$, respectively.
Given a private dataset $\mathcal{D}_{\text{priv}}$ of samples, our goal is to generate a synthetic dataset $\mathcal{D}_{\text{syn}}$ that preserves the statistical properties of $\mathcal{D}_{\text{priv}}$ while ensuring DP. \ours consists of three main components: (1) a \texttt{RANDOM\_API} that generates an initial set of synthetic samples, (2) a \texttt{VARIATION\_API} that creates variations of existing samples to explore the sample space, and (3) a \texttt{DP\_NN\_HISTOGRAM} function that scores each sample in a DP manner. The overall process is an evolutionary loop, illustrated in \autoref{fig:method}. For each iteration, we generate variations of the current samples using \texttt{VARIATION\_API}, evaluate them using the private dataset with \texttt{DP\_NN\_HISTOGRAM}, and retain the top-scoring samples to form the next generation. This process continues for a predefined number of iterations $T$.

\textbf{\texttt{RANDOM\_API}}. The \texttt{RANDOM\_API} generates an initial set of synthetic samples. For categorical attribute $x_{\text{cat}^{(i)}}$, it samples a value uniformly at random from the set of possible categories $\mathcal{X}_{\text{cat}^{(i)}}$. For numerical attribute $x_{\text{num}^{(j)}}$, it uniformly samples values within the range $(\min_{\mathcal{X}_{\text{num}^{(j)}}}, \max_{\mathcal{X}_{\text{num}^{(j)}}})$. This ensures that the initial synthetic samples are diverse and cover the attribute space. \\
\begin{equation*}
{\small
\begin{aligned}
\texttt{RANDOM\_API}(n) &= \{s_1, s_2, \ldots, s_n\} \text{ where } \\
s_k &= \{x_{\text{cat}^{(1)}}, \ldots, x_{\text{num}^{(1)}}, \ldots, c\}, \\
x_{\text{cat}^{(i)}} &\sim \text{Uniform}(\mathcal{X}_{\text{cat}^{(i)}}), \\
x_{\text{num}^{(j)}} &\sim \text{Uniform}\!\Big(\min(\mathcal{X}_{\text{num}^{(j)}}), \max(\mathcal{X}_{\text{num}^{(j)}})\Big)
\end{aligned}
}
\end{equation*}
\textbf{\texttt{VARIATION\_API}}. The operator generates perturbed variations of existing samples to explore the sample space. The variation degree $m$ is the number of variations generated per sample. We implement a simple but effective \textit{\textbf{random walk}} strategy. For a categorical attribute $x_{\text{cat}^{(i)}} \in \mathcal{X}_{\text{cat}^{(i)}}$, the variation is produced by resampling from its domain with a controlled categorical mutation rate $\mu_{\text{cat}} \in [0, 1]$.

{
\small
\begin{equation}
x_{\text{cat}^{(i)}}' \sim 
\begin{cases}
x_{\text{cat}^{(i)}}, & \text{with probability } 1-\mu_{\text{cat}}, \\[6pt]
\text{Uniform}\!\left(\mathcal{X}_{\text{cat}^{(i)}}\right), & \text{with probability } \mu_{\text{cat}} .
\end{cases}
\end{equation}
}
For a numerical attribute $x_{\text{num}^{(j)}} \in \mathcal{X}_{\text{num}^{(j)}}$, the variation is generated by adding controlled Gaussian perturbation with scale controlled by a numerical mutation rate $\mu_{\text{num}} \in [0, 1]$ and projecting back into the valid range:
\[
\small
\begin{aligned}
x_{\text{num}^{(j)}}' &= \Pi_{\mathcal{X}_{\text{num}^{(j)}}}\!\big(x_{\text{num}^{(j)}} + \phi\big), 
\quad \phi \sim \mathcal{N}(0, \tau^2), \\
\tau &= \mu_{\text{num}} \cdot \big(\max(\mathcal{X}_{\text{num}^{(j)}}) - \min(\mathcal{X}_{\text{num}^{(j)}})\big).
\end{aligned}
\]

where $\Pi_{\mathcal{X}}(\cdot)$ denotes projection onto the feasible range of $\mathcal{X}_{\text{num}^{(j)}}$. To simplify, the numerical bounds are assumed known, as in the default configuration of a widely used DP library~\citep{diffprivlib}. These bounds can be estimated from the private data using DP methods if necessary~\citep{dwork2014algorithmic}. We also ensure the baselines have access to the same numerical bounds for a fair comparison in our experiments. 

Both mutation rates $\mu_{\text{cat}}$ and $\mu_{\text{num}}$ follow a polynomial decay as a function of the iteration index $t$ to balance exploration and exploitation. In the early stages, higher mutation rates encourage exploration of the sample space, while in later stages, lower rates focus on refining high-quality samples.
\begin{equation}
    \mu = \mu_{\text{init}} - (\mu_{\text{init}} - \mu_{\text{final}}) \cdot (t / T)^{\gamma}
\end{equation}

\begin{algorithm}
   \caption{\texttt{DP\_NN\_HISTOGRAM}}
   \label{alg:dp_nn_histogram}
\begin{algorithmic}[1]
   \STATE {\bfseries Input:} $\mathcal{D}_{\text{priv}}$, Population $P$, Noise multiplier $\sigma$
   \STATE {\bfseries Output:} Noisy histogram ${hist}$
   \STATE ${hist} \leftarrow [0, 0, \dots, 0]$
   \FOR{each sample $s \in \mathcal{D}_{\text{priv}}$}
       \STATE $i \leftarrow \text{argmin}_j \text{ distance}(s, P[j])$ 
       \STATE ${hist}[i] \leftarrow {hist}[i] + 1$
   \ENDFOR
   \FOR{each index $i$ in ${hist}$}
       \STATE ${hist}[i] \leftarrow {hist}[i] + \mathcal{N}(0, \sigma^2)$
   \ENDFOR
   \STATE {\bfseries return} ${hist}$
\end{algorithmic}
\end{algorithm}

\textbf{\texttt{DP\_NN\_HISTOGRAM}}. 
The \texttt{DP\_NN\_HISTOGRAM} scores synthetic samples in a DP manner. At each iteration $t$, \ours maintains a population $P$, which is a set of candidate synthetic samples, mainly generated by \texttt{VARIATION\_API}.  
We denote a histogram $hist$, where each bin $hist[i]$ corresponds to a sample $P[i]$ in $P$. The value $hist[i]$ represents the count of private samples in $\mathcal{D}_{\text{priv}}$ whose nearest neighbor in $P$ is $P[i]$. 
The pseudocode of \texttt{DP\_NN\_HISTOGRAM} is presented in Algo.~\ref{alg:dp_nn_histogram}. For each sample in the private dataset $\mathcal{D}_{\text{priv}}$, we find its nearest neighbor in $P$ and increment the corresponding bin (Algo.~\ref{alg:dp_nn_histogram}, Lines~4-7). 
To ensure DP, we add Gaussian noise to each bin of the histogram (Algo.~\ref{alg:dp_nn_histogram}, Line~9). As each private sample can only affect one bin, the sensitivity of this histogram query is 1. By adding noise drawn from $\mathcal{N}(0, \sigma^2)$ to each bin, we achieve $(\epsilon, \delta)$-DP, where $\epsilon$ and $\delta$ are determined by the noise multiplier $\sigma$ and the number of iterations $T$. The privacy analysis can be reused from the Gaussian mechanism and the composition theorem, as done in the original private evolution paper~\citep{lin2023differentially} and detailed in App.~\ref{app:privacy-analysis}. The distance metric between samples is the mixed-type distance defined as follows, where $\lambda$ is a hyperparameter to balance the contributions of categorical and numerical attributes.

\vspace{-0.5cm}
\begin{equation}
\small
\begin{split}
\text{distance}(s_a, s_b)
= 
\lambda \sum_{i} \mathds{1}\!\left(x_{\text{cat}^{(i)}}^{(a)} \neq x_{\text{cat}^{(i)}}^{(b)}\right) \\
\qquad\qquad + \sum_{j} \left(\frac{x_{\text{num}^{(j)}}^{(a)} - x_{\text{num}^{(j)}}^{(b)}}{\max_{\mathcal{X}_{\text{num}^{(j)}}} - \min_{\mathcal{X}_{\text{num}^{(j)}}}}\right)^2
\end{split}
\end{equation}

\begin{algorithm}[tb]
    {
   \caption{Tabular Private Evolution}
   \label{alg:pe}
\begin{algorithmic}[1]
   \STATE {\bfseries Input:} The set of classes $C$, Private dataset $\mathcal{D}_{\text{priv}}$, Noise multiplier $\sigma$, Number of iterations $T$, Number of sampling iterations $T_\text{sampling}$, Variation degree $m$, Number of synthetic samples $N$
   \STATE {\bfseries Output:} Synthetic dataset $\mathcal{D}_{\text{syn}}$
   
   \STATE $\mathcal{D}_{\text{syn}} \leftarrow \emptyset$
   \FOR{each class $c \in C$}
       \STATE $\mathcal{D}_{\text{priv}}^{(c)} \leftarrow$ subset of $\mathcal{D}_{\text{priv}}$ of class $c$
       \STATE $N^{(c)} \leftarrow N \cdot |\mathcal{D}_{\text{priv}}^{(c)}| / |\mathcal{D}_{\text{priv}}|$ \COMMENT{\textit{\small \# syn samples of class $c$}}
       \STATE $\mathcal{P}_0 \leftarrow$ \texttt{RANDOM\_API}($N^{(c)}$) \COMMENT{\textit{\small Initialize a population}}
       
       \FOR{$t = 0$ {\bfseries to} $T-1$}
           \STATE ${hist}_t \leftarrow$ \texttt{DP\_NN\_HISTOGRAM}($\mathcal{D}_{\text{priv}}^{(c)}, P_{t}, \sigma$)
           
           \IF{$t \leq T_{\text{sampling}}$}
               \STATE ${hist}_t[i] \leftarrow \max(0, {hist}_t[i])$ 
               \STATE $prob[i] \leftarrow {hist}_t[i] / \sum_j {hist}_t[j]$
               \STATE $\mathcal{D}_t \leftarrow$ sample $N^{(c)}$ samples from $P_t$ with replacement according to $prob$
           \ELSE
               \STATE $\mathcal{D}_t \leftarrow$ top $N^{(c)}$ samples of $P_t$ by ${hist}_t$
           \ENDIF

           \IF{$t < T_{\text{sampling}}$}
               \STATE $P_{t+1} \leftarrow$ \texttt{VARIATION\_API}($\mathcal{D}_{t}, 1$) 
           \ELSE
               \STATE $P_{t+1} \leftarrow$ \texttt{VARIATION\_API}($\mathcal{D}_{t}, m$) $\cup$ $\mathcal{D}_{t}$ 
           \ENDIF           
       \ENDFOR
       \STATE $\mathcal{D}_{\text{syn}} \leftarrow \mathcal{D}_{\text{syn}} \cup \mathcal{D}_{T-1}$
   \ENDFOR
   \STATE {\bfseries return} $\mathcal{D}_{\text{syn}}$
\end{algorithmic}
    }
\end{algorithm}

\textbf{Tabular Private Evolution}. The overall process of \ours is summarized in Algo.~\ref{alg:pe}. We first initialize a synthetic dataset $\mathcal{D}_\text{syn}$ with \texttt{RANDOM\_API}. Then we iteratively refine the synthetic samples over $T$ iterations. In each iteration, we generate a population of sample candidates using \texttt{VARIATION\_API}, score them with \texttt{DP\_NN\_HISTOGRAM}, and select the top samples to form the next generation. To enhance exploration and exploitation, we employ a two-stage approach: sampling with replacement in the early iterations, followed by ranking and selecting the top samples in later iterations (See Algo.~\ref{alg:pe}). In the first $T_{\text{sampling}}$ iterations, we sample new synthetic samples based on the noisy histogram-based probabilities. The variation degree $m$ is set to 1 (Algo.~\ref{alg:pe}, Line~18) to maintain a small population size, which yields higher average histogram counts (Algo.~\ref{alg:dp_nn_histogram}, Lines~4-7) and thus reduces sensitivity to noise (Algo.~\ref{alg:dp_nn_histogram}, Line~9). This leads to more reliable sampling probabilities (Algo.~\ref{alg:pe}, Line~12). In the second stage, we set $m$ to a higher value to encourage local refinement. The population $P$ now includes both the variations and the previous selected samples (Algo.~\ref{alg:pe}, Line~20). We then select the top $N^{(c)}$ samples based on their noisy histogram scores (Algo.~\ref{alg:pe}, Line~15). Intuitively, at the beginning, some samples may have significantly large counts and sampling with replacement allows these samples to be selected multiple times, which helps to quickly shift the distribution of synthetic samples towards the private data distribution. In the later stage, selecting the top samples helps to locally refine the synthetic dataset and improve its quality. This two-stage approach effectively exploits the strengths of both sampling and top selection, leading to better overall performance.

\footnotetext{We assume class distributions are known as in~\citep{lin2023differentially, xie2024differentially}. Additionally, experiments in App.~\ref{app:noisy-class-count} show \ours performs similarly either w/ or w/o this assumption.}

\textbf{Compute Efficiency}. The previous query-based methods require answering many queries. Each single query needs to scan the entire dataset. Moreover, high-dimensional queries involving many attributes are especially costly, as they create large multi-way count tables that consume significant memory and computation resources. Additionally, model fitting and optimization over these query measurements usually require iterative solvers that may scale poorly with the dimensionality. In contrast, \ours operates at the sample level, and each iteration only requires a single pass over the private dataset to conduct nearest neighbor search. While the query-based methods struggle to handle high-order correlations due to the exponential growth of queries, \ours leverages full-record nearest neighbor matching and iterative refinement that can implicitly capture complex, high-dimensional dependencies.

\section{Experiments and Results}

\begin{figure*}[h]
    \centering
    \includegraphics[width=0.8\linewidth]{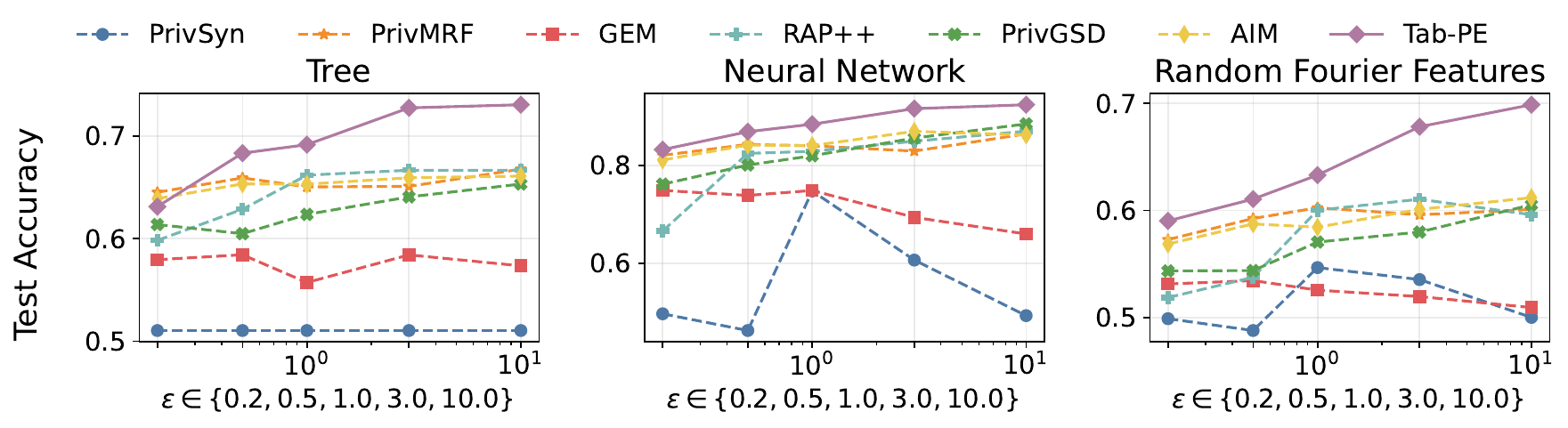}
    \caption{The test accuracy on SCM simulated datasets under various privacy budgets.}
    \label{fig:epsilon_scm}
    \vspace{-0.5cm}
\end{figure*}

\textbf{Overview}. As we focus on high-order correlation modeling in DP tabular data, we first examine the algorithmic capability of the baselines and \ours using an extreme case of XOR simulation datasets. We then conduct extensive experiments on realistic simulated datasets with multiple non-linear underlying functions and real-world datasets with high-order correlations, under various privacy constraint settings. We also evaluate the methods on widely-used real-world datasets with predominantly low-order correlations. Finally, we examine computational efficiency and analyze the technical design choices in \ours.

\subsection{Experiment Setup}

\textbf{Baselines}.
We consider several SOTA baselines in DP tabular data synthesizers, following a recent benchmark~\citep{chen2025benchmark}: PrivSyn~\citep{zhang2021privsyn}, PrivMRF~\citep{cai2021data}, GEM~\citep{liu2021iterative}, RAP++~\citep{vietri2022private}, PrivGSD~\citep{liu2023generating}, and the SOTA method -- AIM~\citep{mckenna2022aim}. For details of these methods, we refer readers to the original papers and recent surveys~\citep{yang2024tabular, cormode2025synthetic}. Additionally, we present the upper bound performance (UB), directly using the private dataset without DP guarantees.

\textbf{Datasets}. In total, we organize the datasets used in our experiments into four categories. \textbf{1) XOR}, simulation stress-test datasets. 
\textbf{2) Structural Causal Model (SCM) Simulation} generated from causal graphs (details in App.~\ref{app:dataset-sim}).
\textbf{3) Real-World Datasets with High-Order Correlations}, in which complex classifiers \emph{significantly} outperform simple ones, requiring synthetic data to capture high-order dependencies. \textbf{4) Real-World Datasets with Low-Order Correlations}, widely used in the literature, only low-order correlations are sufficient for high downstream accuracy.

\textbf{Evaluation Metrics}.
Following previous benchmarks~\citep{chen2025benchmark,tao2022benchmark,yuntaobench}, we evaluate the methods using Machine Learning (ML) Downstream Efficiency and Fidelity Error. 
For the fidelity, we calculate Wasserstein Distance (WD) between high-order joint distributions of the synthetic and private datasets, following prior work~\citep{yuntaobench}.
Compared to another popular choice -- Total Variation Distance (TVD), WD does not require discretizing continuous features into bins as TVD does. Moreover, WD is more reliable for high-order joint distributions, while TVD is often distorted by sparse bins. Additionally, we perform evaluations in a unified embedding space, derived from an autoencoder trained on the private data with a reconstruction objective. This high-dimensional space enables us to compare the synthetic and real distributions at the representation level rather than just marginals. We calculate distributional Precision and Recall~\citep{NEURIPS2018_f7696a9b}, measuring the fidelity of synthetic samples to the real distribution and the coverage of the real distribution by the synthetic data, respectively.
We present the details of the metrics in App.~\ref{app:metrics}

\textbf{Implementation Details}. We provide additional details and hyperparameters of \ours in App.~\ref{app:implementation}. For the baselines, we follow the original papers and a recent benchmark~\citep{chen2025benchmark} for the hyperparameter settings. We run all methods on three distinct data splits generated by different random seeds and report the average performance values with corresponding standard deviations. When running an $(\epsilon, \delta)$-DP algorithm on a dataset $D_\text{priv}$ of size $|D_\text{priv}|$, for all methods, we set $\delta=1/ \left(|D_\text{priv}| \cdot \ln |D_\text{priv}| \right)$, which is a common choice in the DP literature~\citep{yue2023synthetic}.

\begin{table*}[!htp]
    \centering
    \resizebox{\textwidth}{!}{
    \begin{tabular}{l|l|cc|ccc|cc}
        \multirow{2}{*}{\textbf{Dataset}} & \multirow{2}{*}{\textbf{Method}} 
        & \multicolumn{2}{c|}{\textbf{ML Downstream} ($\uparrow$)} 
        & \multicolumn{3}{c|}{\textbf{Fidelity} ($\downarrow$)} 
        & \multicolumn{2}{c}{\textbf{Embedding} ($\uparrow$)} \\
        \cline{3-9}
        & & Accuracy & Macro F1 
          & 5-WD & 6-WD & 7-WD
          & Precision & Recall \\
    \hline
    \multirow{7}{*}{\small Artificial} & \textit{UB} & \textit{80.80 {\tiny $\pm$ 0.44}} & \textit{79.87 {\tiny $\pm$ 0.65}} & \textit{0.088 {\tiny $\pm$ 0.011}} & \textit{0.107 {\tiny $\pm$ 0.011}} & \textit{0.124 {\tiny $\pm$ 0.012}} & \textit{98.09 {\tiny $\pm$ 0.15}} & \textit{98.66 {\tiny $\pm$ 0.27}} \\
    \multirow{7}{*}{\small Characters}
    & PrivSyn & 13.83 {\tiny $\pm$ 0.00} & 2.43 {\tiny $\pm$ 0.00} & 0.224 {\tiny $\pm$ 0.005} & 0.287 {\tiny $\pm$ 0.005} & 0.347 {\tiny $\pm$ 0.006} & 13.42 {\tiny $\pm$ 0.32} & 98.05 {\tiny $\pm$ 0.43} \\
    & PrivMRF & 13.63 {\tiny $\pm$ 0.28} & 4.72 {\tiny $\pm$ 3.24} & 0.218 {\tiny $\pm$ 0.004} & 0.279 {\tiny $\pm$ 0.004} & 0.337 {\tiny $\pm$ 0.004} & 13.93 {\tiny $\pm$ 0.18} & 97.33 {\tiny $\pm$ 0.59} \\
    & GEM & 10.13 {\tiny $\pm$ 0.86} & 5.62 {\tiny $\pm$ 0.46} & 0.337 {\tiny $\pm$ 0.014} & 0.402 {\tiny $\pm$ 0.015} & 0.465 {\tiny $\pm$ 0.015}  & 9.55 {\tiny $\pm$ 0.75} & 94.57 {\tiny $\pm$ 1.19} \\
    & RAP++ & 33.29 {\tiny $\pm$ 2.14} & 32.17 {\tiny $\pm$ 2.11} & 0.201 {\tiny $\pm$ 0.021} & 0.243 {\tiny $\pm$ 0.023} & 0.283 {\tiny $\pm$ 0.024} & \underline{28.45 {\tiny $\pm$ 4.49}} & 3.77 {\tiny $\pm$ 1.76} \\
    & PrivGSD & \underline{40.36 {\tiny $\pm$ 1.29}} & \underline{39.10 {\tiny $\pm$ 1.38}} & \underline{0.161 {\tiny $\pm$ 0.002}} & \underline{0.204 {\tiny $\pm$ 0.002}} & \underline{0.245 {\tiny $\pm$ 0.002}} & 26.98 {\tiny $\pm$ 0.36} & \textbf{98.40 {\tiny $\pm$ 0.22}} \\
    & AIM & 23.24 {\tiny $\pm$ 1.48} & 20.17 {\tiny $\pm$ 1.24} & 0.191 {\tiny $\pm$ 0.003} & 0.247 {\tiny $\pm$ 0.002} & 0.301 {\tiny $\pm$ 0.002}
 & 18.82 {\tiny $\pm$ 0.55} & \underline{98.06 {\tiny $\pm$ 0.21}} \\
    & \ours &  \textbf{49.38 {\tiny $\pm$ 0.46}} & \textbf{48.09 {\tiny $\pm$ 0.71}}
 & \textbf{0.158 {\tiny $\pm$ 0.011}} & \textbf{0.191 {\tiny $\pm$ 0.012}} & \textbf{0.220 {\tiny $\pm$ 0.013}} & \textbf{36.57 {\tiny $\pm$ 1.51}} & 89.77 {\tiny $\pm$ 3.09} \\
    \hline
\multirow{7}{*}{} & \textit{UB} & \textit{78.01 {\tiny $\pm$ 0.06}} & \textit{54.63 {\tiny $\pm$ 0.36}} & \textit{0.108 {\tiny $\pm$ 0.001}} & \textit{0.142 {\tiny $\pm$ 0.001}} & \textit{0.176 {\tiny $\pm$ 0.001}} & \textit{98.27 {\tiny $\pm$ 0.13}} & \textit{98.30 {\tiny $\pm$ 0.07}} \\
    \multirow{7}{*}{\small Person}
    & PrivSyn & 33.05 {\tiny $\pm$ 0.00} & 4.52 {\tiny $\pm$ 0.00} & 0.303 {\tiny $\pm$ 0.003} & 0.385 {\tiny $\pm$ 0.004} & 0.463 {\tiny $\pm$ 0.004} & 41.87 {\tiny $\pm$ 0.12} & 97.74 {\tiny $\pm$ 0.12} \\
    \multirow{7}{*}{\small Activity} & PrivMRF & 51.83 {\tiny $\pm$ 1.28} & 22.42 {\tiny $\pm$ 1.01} & 0.138 {\tiny $\pm$ 0.003} & 0.185 {\tiny $\pm$ 0.004} & 0.233 {\tiny $\pm$ 0.004} & \underline{88.85 {\tiny $\pm$ 0.37}} & \underline{98.11 {\tiny $\pm$ 0.14}} \\
    & GEM & 31.85 {\tiny $\pm$ 1.10} & 5.64 {\tiny $\pm$ 0.79} & 0.275 {\tiny $\pm$ 0.005} & 0.333 {\tiny $\pm$ 0.006} & 0.392 {\tiny $\pm$ 0.007} & 55.92 {\tiny $\pm$ 3.42} & 95.20 {\tiny $\pm$ 1.35} \\
    & RAP++ & 52.72 {\tiny $\pm$ 0.83} & 26.57 {\tiny $\pm$ 0.82} & 0.176 {\tiny $\pm$ 0.003} & 0.216 {\tiny $\pm$ 0.003} & 0.256 {\tiny $\pm$ 0.004} & 59.95 {\tiny $\pm$ 2.49} & 62.36 {\tiny $\pm$ 2.81} \\
    & PrivGSD & 56.47 {\tiny $\pm$ 0.36} & 29.25 {\tiny $\pm$ 0.53} & 0.161 {\tiny $\pm$ 0.001} & 0.199 {\tiny $\pm$ 0.001} & 0.237 {\tiny $\pm$ 0.002} & 80.06 {\tiny $\pm$ 0.74} & 93.74 {\tiny $\pm$ 0.58} \\
    & AIM & \underline{59.53 {\tiny $\pm$ 0.47}} & \underline{30.79 {\tiny $\pm$ 0.32}} & \underline{0.125 {\tiny $\pm$ 0.002}} & \underline{0.168 {\tiny $\pm$ 0.002}} & \underline{0.213 {\tiny $\pm$ 0.002}} & \underline{89.97 {\tiny $\pm$ 0.24}} & \textbf{98.73 {\tiny $\pm$ 0.07}} \\
    & \ours & \textbf{63.72 {\tiny $\pm$ 0.18}} & \textbf{35.09 {\tiny $\pm$ 0.19}} & \textbf{0.116 {\tiny $\pm$ 0.002}} & \textbf{0.150 {\tiny $\pm$ 0.002}} & \textbf{0.183 {\tiny $\pm$ 0.001}} & \textbf{90.93 {\tiny $\pm$ 0.88}} & 91.57 {\tiny $\pm$ 0.38} \\
    \end{tabular}
    }
    \caption{$\epsilon = 1.0$. The query degree hyperparameter of baselines vary from 2 to 5, the best-performing results of the baselines are reported.}
    \label{tab:high-order-real}
    \vspace{-0.7cm}
\end{table*}

\subsection{Curse of Dimensionality}
In this experiment, we examine the algorithmic capability of methods in modeling high-order correlation. We construct a simulated XOR dataset where all features are drawn from zero-centered uniform distributions. The label is assigned based on the parity of the number of positive feature values. In this dataset, the features themselves are mutually independent; the only dependency lies between the features and the label. This setup represents an extreme case where any single feature can flip the label. Consequently, failing to capture the contribution of only a single feature reduces the performance to random guessing (illustrated in~Figs.~\ref{fig:app-dataset-xor}\&~\ref{fig:app-xor-2-features}, App.~\ref{app:dataset-sim}). 
The baseline methods are set up with the ideal degree for marginal queries, i.e., $K = \text{num\_features} + 1$. 

\autoref{fig:xor} presents the AUC score of the classifier trained on the synthetic data generated by the methods at $\epsilon = 1.0$. As the number of features increases, the classification problem itself becomes more challenging, leading to the performance drop of the upper bound -- using private data ($\epsilon = \infty$). Intuitively, the number of marginal queries grows exponentially with the correlation order. 
For instance, assume each feature is binary-valued, to fully capture the correlation among $d$ features and the label, we need to answer $2^d$ queries of $d$-way marginals to cover all possible feature combinations. This is challenging to marginal query-based methods for modeling high-order correlations. Consequently, all the baselines fail completely at 5 features, delivering a downstream performance of random guess. In contrast, \ours successfully yields an AUC score of 0.8 for 5 features. This demonstrates that \textbf{\ours provides broader support for capturing high-order correlations}.

\subsection{Simulated Datasets by Structural Causal Models}

We adapt the simulation method from TabPFN~\citep{hollmann2025tabpfn}, which is a breakthrough in tabular data classification. The full pipeline is described in App.~\ref{app:dataset-sim}. Compared to the previous XOR setting, this simulation using Structural Causal Models (SCM) is a more realistic scenario: features are correlated; modeling a subset of the joint distribution can translate into gains for downstream tasks. We implement three non-linear prior functions (mapping features to labels): Tree, Neural Network~(NN), and Random Fourier Features~(RFF).

Across all prior functions, \ours achieves the best downstream performance at $\epsilon = 1.0$. 
See \autoref{tab:scm-table}, App.~\ref{app:scm-simulation} for numerical details. \ours achieves 89.4\%  accuracy and 96.4\% AUC for the neural network prior, significantly above the best baseline -- AIM (85.2\%, 93.3\%). For fidelity (\autoref{tab:scm-table-wd}, App.~\ref{app:scm-simulation}), AIM and \ours offer competitive performance and outperform the other baselines. In the embedding space, \ours consistently yields the highest precision~$\sim$98\% but the recall slightly lags at~$\sim$81\%. Overall, these results indicate that \ours most effectively captures high-order correlations to deliver the highest predictive downstream utility.

\autoref{fig:epsilon_scm} depicts the test accuracy under different privacy-budget settings. In general, \ours consistently outperforms the baselines under a variety of privacy settings. Most methods improve with larger $\epsilon$. Tree and RFF priors induce sharp, brittle high-order correlations that marginal-based methods cannot approximate well. This yields roughly 10\% accuracy gains for Tab-PE over the best-performing baselines. In contrast, the NN prior often produces smoother correlations, so the gap remains around 4\%. These results demonstrate that \ours is effective at modeling challenging high-order correlations and maintains significant performance gains over baselines under either strict or loose privacy settings.

\subsection{Real-world Datasets}

\begin{figure*}[!h]
    \centering
    \includegraphics[width=0.8\linewidth]{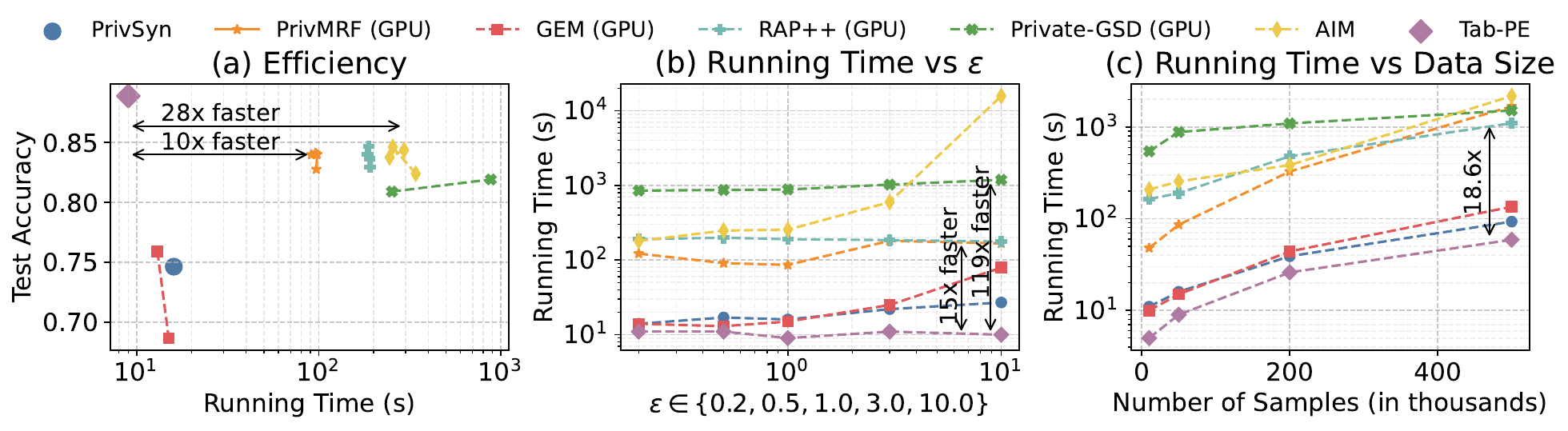}
    \caption{The runtime of the methods under different privacy budgets and dataset sizes. In the left figure, each method is shown with multiple markers, corresponding to various query degree settings. PrivSyn has only one marker as it does not have this hyperparameter.}
    \label{fig:runtime}
    \vspace{-0.5cm}
\end{figure*}

We evaluate on two real-world datasets with high-order correlations (details in App.~\ref{app:dataset}). Generally, the performance trends are consistent with the previous SCM simulated datasets, as shown in \autoref{tab:high-order-real}. \ours improves the downstream utilities by a large margin, e.g., +9.02\% accuracy and +8.99\% macro F1 on the Artificial Characters dataset, but still lags the non-private upper bound ($\sim$30\% accuracy gap). Moreover, consistent with the SCM datasets, \ours achieves the highest precision in the embedding space. \ours captures stronger high-order fidelity (from 5-way interactions onward), while being slightly behind AIM and MRF on low-order fidelity (1-way, 2-way, and 3-way marginals), detailed in App.~\ref{app:high-order-real}~(\autoref{tab:high-order-real-wd}). The main reason is that marginal-based methods (e.g., AIM, MRF) explicitly optimize low-order marginals, making them excel in low-order fidelity. The fact that many high-order joint distributions can share the same 1-way and 2-way marginals further explains why preserving low-order fidelity well does not necessarily translate to high-order and downstream utilities. Conversely, better matching high-order patterns can be slightly inferior at low-order marginals, as the method does not focus on modeling low-order correlations. For example, consider a dataset with two binary features including 50\% of ``00" and 50\% of ``11". A method focusing on low-order marginals may generate samples with 50\% of 01 and 50\% of 10 and perfectly preserves 1-way marginals. Meanwhile, a method focusing on high-order correlations would generate 60\% of 00 and 40\% of 11, better capturing the joint distribution but slightly deviating from the 1-way marginals.

Moreover, \autoref{fig:high-order-real-acc} illustrates the test accuracy under different privacy budgets. \ours consistently outperforms the baselines across the privacy settings. 
Due to space constraints, we present the results of low-order real-world datasets in App.~\ref{app:low-order-real} (\autoref{tab:low-order-real}). While \ours is primarily designed for high-order correlations, it remains competitive (only $\sim$1\% accuracy drop compared to AIM) on low-order datasets.

\begin{figure}[h]
    \centering
    \includegraphics[width=\linewidth]{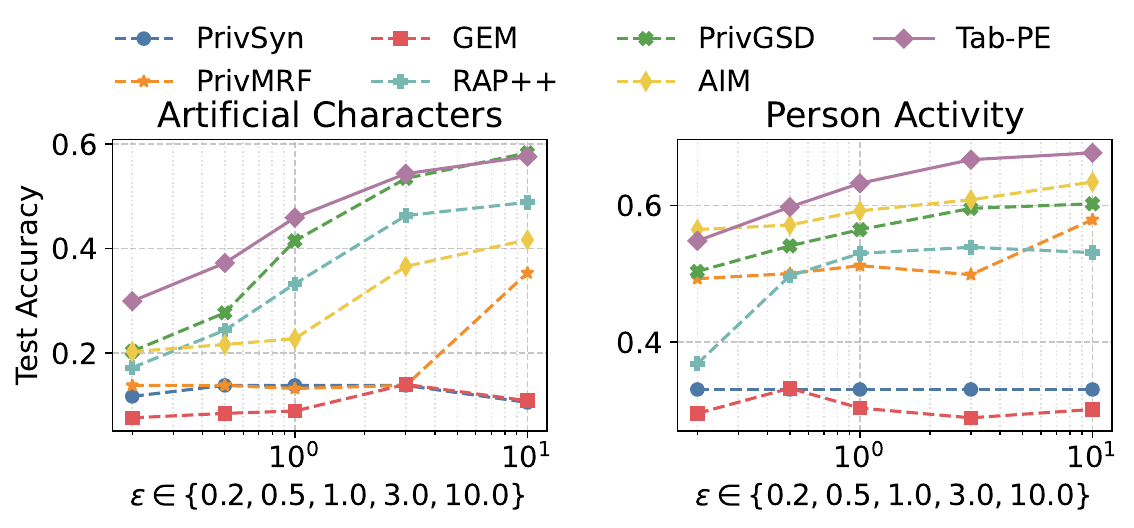}
    \caption{The test accuracy on real-world datasets under various privacy budgets.}
    \label{fig:high-order-real-acc}
    \vspace{-0.25cm}
\end{figure}

\subsection{Compute Efficiency \& Scalability}

We further study the compute efficiency and scalability of the methods. We run the experiments on the Neural Network prior simulation dataset using the same computing resources allocated by Slurm. While most baselines require GPUs, \textbf{\ours runs entirely on CPUs}. As shown in \autoref{fig:runtime}~(left), at $\epsilon = 1.0$, \ours is the most efficient method while achieving the best downstream utilities, 10$\times$ faster than PrivMRF and 28$\times$ faster than AIM. We also study the scalability of the baselines by varying the query degree, detailed in App.~\ref{app:compute_efficiency}, \autoref{fig:add-baseline-efficiency}. Generally, increasing the query degree does not bring significant performance gains for the baselines. However, it leads to an exponential increase in runtime for GEM and PrivGSD. Subsequently, most methods including \ours scale well with the privacy budget, as presented in \autoref{fig:runtime}~(middle). Meanwhile, AIM exhibits a rapid increase (60$\times$) in runtime as $\epsilon$ increases from 1.0 to 10.0, as the larger privacy budget allows it to issue more queries. Finally, we examine the scalability of the methods with the dataset size. As depicted in \autoref{fig:runtime}~(right), at $\epsilon = 1.0$, \ours is the fastest method across all dataset sizes. Notably, \ours runs 18.6$\times$ faster than the leading baselines (AIM, RAP++, GSD, and MRF) at 500K samples. Taken together, these results demonstrate that \textbf{\ours is highly efficient and scalable}.

\subsection{Findings \& Analyses}
\begin{figure}[!htp]
    \centering
    \includegraphics[width=\linewidth]{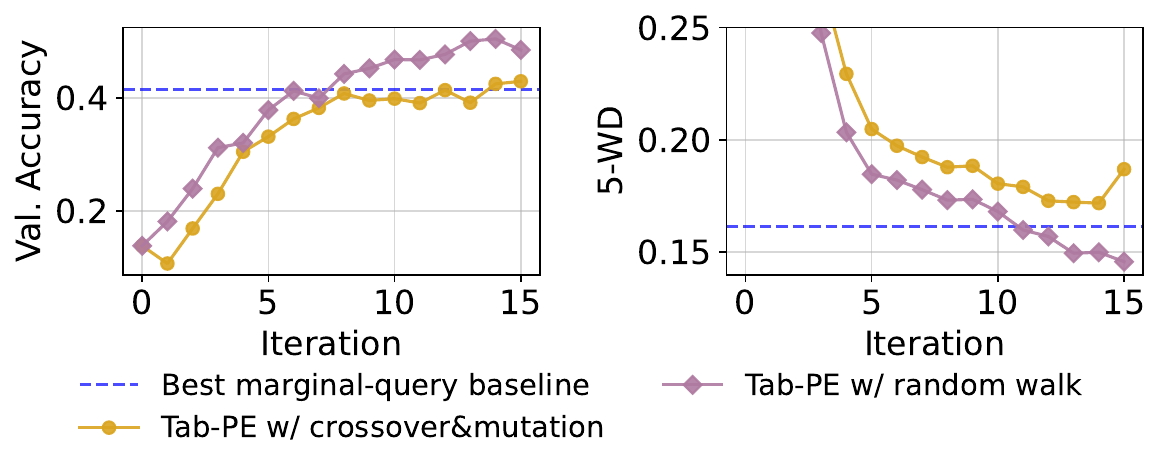}
    \caption{Comparing the proposed random-walk for variation generation with genetic algorithm operators.}
    \label{fig:app-genetic-algorithm}
    \vspace{-0.25cm}
\end{figure}

\textbf{Simple variation operator can be effective}. We adapt the genetic algorithm design (crossover and mutation) from PrivGSD \citep{liu2023generating} to \texttt{VARIATION\_API} in \ours. The detailed implementations are provided in App.~\ref{app:genetic-algorithm}. \autoref{fig:app-genetic-algorithm} shows that \ours with either API achieves higher accuracy compared to the best marginal-based method ($\sim$40\%). The simple random walk with scheduled probability decay boosts accuracy by 7\%, from 43\% to 50\%, and reduces 5-WD by around 21\%.

\textbf{Two-stage selection outperforms ranking- or sampling-only strategies}. \autoref{fig:two-stage-ablation} presents the ablation study on two-stage selection by comparing with ranking-only and sampling-only strategies. While the sampling selection can quickly preserve the distribution, which translates to the fidelity metric, the ranking selection is essential for local refinement to boost the downstream accuracy. The two-stage selection effectively combines the advantages of both strategies, leading to faster convergence and better performance. 
\begin{figure}[!htp]
    \centering
    \includegraphics[width=0.48\linewidth]{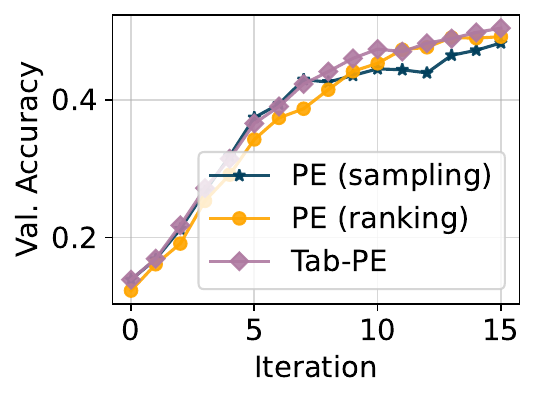}
    \includegraphics[width=0.48\linewidth]{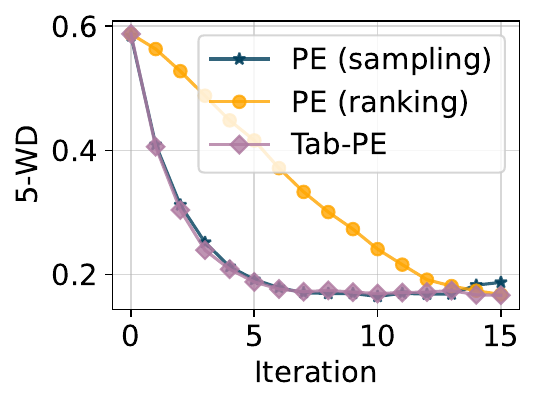}
    \caption{The performance of different selection strategies. \ours implements a two-stage strategy: 5 iterations for sampling and 10 iterations for ranking (Artificial Characters; $\epsilon = 1.0$).}
    \label{fig:two-stage-ablation}
    \vspace{-0.3cm}
\end{figure}

\textbf{Extremely High-Dimensional Dataset}. We experiment on flattened MNIST dataset with 196 attributes (rescaled to 14$\times$14 pixels) and 10 classes. This dataset is not only high-dimensional but also expresses complex high-order correlations, since forming digit shapes requires a significant number of pixels to be jointly considered. It is worth noting that we do not consider image-based methods (such as CNNs) that leverage the spatial structure (i.e., pixel position) while tabular methods do not. The experiment details are described in App.~\ref{app:mnist}. At this scale, \textbf{most baselines are not computationally feasible} due to the large required GPU memory and runtime. \autoref{tab:mnist} presents the test accuracy of the baselines and \ours at $\epsilon = 1.0$. \ours achieves a notable accuracy of 54\%, while the query-based methods fail to capture the complex high-order correlations of pixels and deliver random guess performance.

\begin{table}[!htp]
    \centering
    \begin{tabular}{l|ccc}
        \textbf{Method} ($\epsilon = 1.0$) & GEM & RAP & \ours \\ \hline
        Test Accuracy ($\uparrow$) & 12.06 & 9.82 & \textbf{54.05}
    \end{tabular}
    \caption{Classification accuracy on the flattened MNIST dataset.}
    \label{tab:mnist}
    \vspace{-0.3cm}
\end{table}

\begin{figure}[!htp]
    \centering
    \includegraphics[width=\linewidth]{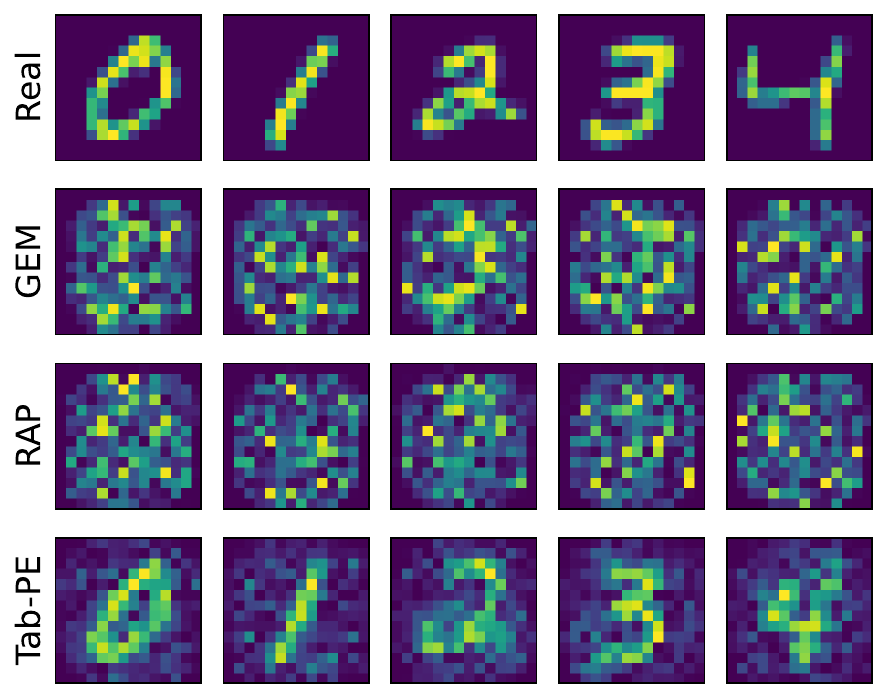}
    \caption{Samples generated by \ours and the baselines with $\epsilon = 1.0$. Other digits are shown in ~\autoref{fig:mnist-full},~\autoref{app:mnist}.}
    \label{fig:mnist-samples}
    \vspace{-0.3cm}    
\end{figure}

\textbf{Hyperparameter Sensitivity Analysis}.
We study the sensitivity of key hyperparameters in \ours. The detailed results are presented in App.~\ref{app:parameter-sensitivity}. Generally, \ours needs sufficient iterations (15-20) to converge and provide good utilities. Our ideal number of iterations is notably smaller than PrivGSD (also an evolutionary approach but operating on dataset level rather than sample level), which typically performs 200K iterations. The number of synthetic samples should be proportional to the dataset size to ensure an appropriate signal-to-noise ratio. At $\epsilon = 1.0$, \ours performs best when generating 10-20\% of the original size (\autoref{fig:parameter-num-samples}), but the synthetic data can be further enriched by oversampling algorithms (App.~\ref{app:oversampling}). The optimal hyperparameter setting is robust across $\epsilon$ settings (\autoref{fig:figure_hyperparameter_search_diff_eps}). We also show that \ours is fairly robust to the choice of categorical-numerical weight $\lambda$. Additionally, we find that the polynomial decay schedule works better than the linear decay as it can allow more exploration in the early stage and more exploitation in the later stage, presented in App.~\ref{app:decay-schedule}.
\section{Conclusion}
\label{sec:conclusion}

We revisit the challenges of modeling high-order correlations in synthetic tabular data generation under differential privacy constraints. We showed that existing methods struggle to capture these correlations. To address this, we introduced Tab-PE, a novel approach using Private Evolution. Our method effectively models high-order correlations while being lightweight and efficient. While Private Evolution has become an emerging paradigm showing promising performance for image and text synthesis~\citep{lin2023differentially, xie2024differentially,lin2025differentially}, our exploration shows its potential in tabular data generation. With appropriate design choices, Tab-PE outperforms existing state-of-the-art methods in capturing high-order fidelity, downstream utility, and computational efficiency.

\section*{Impact Statement}
We believe our work has positive ethical implications. By enabling the generation of high-quality synthetic tabular data with differential privacy guarantees, our methods can enable data sharing, application, and innovation in many fields where privacy concerns currently limit data access. However, we also acknowledge the potential risks of synthetic data, even with DP guarantees, loose configurations of privacy parameters may still lead to information leakage. We encourage users to carefully consider the privacy-utility trade-offs and choose appropriate privacy parameters for their specific use cases.

\section*{Acknowledgement}
Toan Tran and Li Xiong are supported by the National Science Foundation under Award Numbers CNS-2437345,  IIS-2302968, CNS-2124104, by the National Institutes of Health under Award Numbers R01ES033241 and R01LM013712. The views and opinions expressed in this paper are those of the authors and do not necessarily reflect the views of the U.S. Government or any agency thereof.

\bibliography{reference}

@misc{xie2024differentially,
      title={Differentially Private Synthetic Data via Foundation Model APIs 2: Text}, 
      author={Chulin Xie and Zinan Lin and Arturs Backurs and Sivakanth Gopi and Da Yu and Huseyin A Inan and Harsha Nori and Haotian Jiang and Huishuai Zhang and Yin Tat Lee and Bo Li and Sergey Yekhanin},
      year={2024},
      eprint={2403.01749},
      archivePrefix={arXiv},
      primaryClass={cs.CL},
      url={https://arxiv.org/abs/2403.01749}, 
}

@article{lin2025differentially,
  title={Differentially private synthetic data via apis 3: Using simulators instead of foundation model},
  author={Lin, Zinan and Baltrusaitis, Tadas and Wang, Wenyu and Yekhanin, Sergey},
  journal={arXiv preprint arXiv:2502.05505},
  year={2025}
}

@inproceedings{
lin2023differentially,
title={Differentially Private Synthetic Data via Foundation Model {API}s 1: Images},
author={Zinan Lin and Sivakanth Gopi and Janardhan Kulkarni and Harsha Nori and Sergey Yekhanin},
booktitle={The Twelfth International Conference on Learning Representations},
year={2024},
url={https://openreview.net/forum?id=YEhQs8POIo}
}

@inproceedings{zhang2021privsyn,
  title={PrivSyn: Differentially private data synthesis},
  author={Zhang, Zhikun and Wang, Tianhao and Li, Ninghui and Honorio, Jean and Backes, Michael and He, Shibo and Chen, Jiming and Zhang, Yang},
  booktitle={30th USENIX Security Symposium (USENIX Security 21)},
  pages={929--946},
  year={2021}
}

@article{cai2021data,
  title={Data synthesis via differentially private markov random fields},
  author={Cai, Kuntai and Lei, Xiaoyu and Wei, Jianxin and Xiao, Xiaokui},
  journal={Proceedings of the VLDB Endowment},
  volume={14},
  number={11},
  pages={2190--2202},
  year={2021},
  publisher={VLDB Endowment}
}

@inproceedings{
liu2021iterative,
title={Private Synthetic Data for Multitask Learning and Marginal Queries},
author={Giuseppe Vietri and Cedric Archambeau and Sergul Aydore and William Brown and Michael Kearns and Aaron Roth and Ankit Siva and Shuai Tang and Steven Wu},
booktitle={Advances in Neural Information Processing Systems},
year={2022}
}

@article{vietri2022private,
  title={Iterative methods for private synthetic data: Unifying framework and new methods},
  author={Liu, Terrance and Vietri, Giuseppe and Wu, Steven Z},
  journal={Advances in Neural Information Processing Systems},
  volume={34},
  pages={690--702},
  year={2021}
}

@inproceedings{liu2023generating,
  title={Generating private synthetic data with genetic algorithms},
  author={Liu, Terrance and Tang, Jingwu and Vietri, Giuseppe and Wu, Steven},
  booktitle={International Conference on Machine Learning},
  pages={22009--22027},
  year={2023},
  organization={PMLR}
}

@article{mckenna2022aim,
  title={Aim: An adaptive and iterative mechanism for differentially private synthetic data},
  author={McKenna, Ryan and Mullins, Brett and Sheldon, Daniel and Miklau, Gerome},
  journal={arXiv preprint arXiv:2201.12677},
  year={2022}
}

@article{yang2024tabular,
  title={Tabular data synthesis with differential privacy: A survey},
  author={Yang, Mengmeng and Chi, Chi-Hung and Lam, Kwok-Yan and Feng, Jie and Guo, Taolin and Ni, Wei},
  journal={arXiv preprint arXiv:2411.03351},
  year={2024}
}

@inproceedings{cormode2025synthetic,
  title={Synthetic Tabular Data: Methods, Attacks and Defenses},
  author={Cormode, Graham and Maddock, Samuel and Ullah, Enayat and Gade, Shripad},
  booktitle={Proceedings of the 31st ACM SIGKDD Conference on Knowledge Discovery and Data Mining V. 2},
  pages={5989--5998},
  year={2025}
}

@misc{chen2025benchmark,
      title={Benchmarking Differentially Private Tabular Data Synthesis}, 
      author={Kai Chen and Xiaochen Li and Chen Gong and Ryan McKenna and Tianhao Wang},
      year={2025},
      eprint={2504.14061},
      archivePrefix={arXiv},
      primaryClass={cs.CR},
      url={https://arxiv.org/abs/2504.14061}, 
}

@misc{tao2022benchmark,
      title={Benchmarking Differentially Private Synthetic Data Generation Algorithms}, 
      author={Yuchao Tao and Ryan McKenna and Michael Hay and Ashwin Machanavajjhala and Gerome Miklau},
      year={2022},
      eprint={2112.09238},
      archivePrefix={arXiv},
      primaryClass={cs.CR},
      url={https://arxiv.org/abs/2112.09238}, 
}

@inproceedings{
qu2025tabicl,
title={Tab{ICL}: A Tabular Foundation Model for In-Context Learning on Large Data},
author={Jingang Qu and David Holzm{\"u}ller and Ga{\"e}l Varoquaux and Marine Le Morvan},
booktitle={Forty-second International Conference on Machine Learning},
year={2025},
url={https://openreview.net/forum?id=0VvD1PmNzM}
}

@inproceedings{NEURIPS2018_f7696a9b,
 author = {Sajjadi, Mehdi S. M. and Bachem, Olivier and Lucic, Mario and Bousquet, Olivier and Gelly, Sylvain},
 booktitle = {Advances in Neural Information Processing Systems},
 title = {Assessing Generative Models via Precision and Recall},
 volume = {31},
 year = {2018}
}

@article{hollmann2025tabpfn,
 title={Accurate predictions on small data with a tabular foundation model},
 author={Hollmann, Noah and M{\"u}ller, Samuel and Purucker, Lennart and
         Krishnakumar, Arjun and K{\"o}rfer, Max and Hoo, Shi Bin and
         Schirrmeister, Robin Tibor and Hutter, Frank},
 journal={Nature},
 year={2025},
 month={01},
 day={09},
 doi={10.1038/s41586-024-08328-6},
 publisher={Springer Nature},
 url={https://www.nature.com/articles/s41586-024-08328-6},
}

@article{swanberg2025api,
  title={Is API Access to LLMs Useful for Generating Private Synthetic Tabular Data?},
  author={Swanberg, Marika and McKenna, Ryan and Roth, Edo and Cheu, Albert and Kairouz, Peter},
  journal={arXiv preprint arXiv:2502.06555},
  year={2025}
}

@article{dwork2014algorithmic,
author = {Dwork, Cynthia and Roth, Aaron},
title = {The Algorithmic Foundations of Differential Privacy},
year = {2014},
issue_date = {Aug 2014},
publisher = {Now Publishers Inc.},
address = {Hanover, MA, USA},
volume = {9},
number = {3–4},
issn = {1551-305X},
url = {https://doi.org/10.1561/0400000042},
doi = {10.1561/0400000042},
journal = {Found. Trends Theor. Comput. Sci.},
month = aug,
pages = {211–407},
numpages = {197}
}

@InProceedings{dpdwork,
author="Dwork, Cynthia",
editor="Bugliesi, Michele
and Preneel, Bart
and Sassone, Vladimiro
and Wegener, Ingo",
title="Differential Privacy",
booktitle="Automata, Languages and Programming",
year="2006",
publisher="Springer Berlin Heidelberg",
address="Berlin, Heidelberg",
pages="1--12",
isbn="978-3-540-35908-1"
}

@ARTICLE{9998482,
  author={Borisov, Vadim and Leemann, Tobias and Seßler, Kathrin and Haug, Johannes and Pawelczyk, Martin and Kasneci, Gjergji},
  journal={IEEE Transactions on Neural Networks and Learning Systems}, 
  title={Deep Neural Networks and Tabular Data: A Survey}, 
  year={2024},
  volume={35},
  number={6},
  pages={7499-7519},
  doi={10.1109/TNNLS.2022.3229161}}

@article{haoran_dpsynthesizer,
author = {Li, Haoran and Xiong, Li and Zhang, Lifan and Jiang, Xiaoqian},
title = {DPSynthesizer: differentially private data synthesizer for privacy preserving data sharing},
year = {2014},
issue_date = {August 2014},
publisher = {VLDB Endowment},
volume = {7},
number = {13},
issn = {2150-8097},
url = {https://doi.org/10.14778/2733004.2733059},
doi = {10.14778/2733004.2733059},
journal = {Proc. VLDB Endow.},
month = aug,
pages = {1677–1680},
numpages = {4}
}

@misc{tran2024llm,
      title={Differentially Private Tabular Data Synthesis using Large Language Models}, 
      author={Toan V. Tran and Li Xiong},
      year={2024},
      eprint={2406.01457},
      archivePrefix={arXiv},
      primaryClass={cs.LG},
      url={https://arxiv.org/abs/2406.01457}, 
}

@inproceedings{xgboost,
author = {Chen, Tianqi and Guestrin, Carlos},
title = {XGBoost: A Scalable Tree Boosting System},
year = {2016},
isbn = {9781450342322},
publisher = {Association for Computing Machinery},
address = {New York, NY, USA},
url = {https://doi.org/10.1145/2939672.2939785},
doi = {10.1145/2939672.2939785},
booktitle = {Proceedings of the 22nd ACM SIGKDD International Conference on Knowledge Discovery and Data Mining},
pages = {785–794},
numpages = {10},
location = {San Francisco, California, USA},
series = {KDD '16}
}

@article{dong2022gaussian,
  title={Gaussian differential privacy},
  author={Dong, Jinshuo and Roth, Aaron and Su, Weijie J},
  journal={Journal of the Royal Statistical Society Series B: Statistical Methodology},
  volume={84},
  number={1},
  pages={3--37},
  year={2022},
  publisher={Oxford University Press}
}

@inproceedings{balle2018improving,
  title={Improving the gaussian mechanism for differential privacy: Analytical calibration and optimal denoising},
  author={Balle, Borja and Wang, Yu-Xiang},
  booktitle={International conference on machine learning},
  pages={394--403},
  year={2018},
  organization={PMLR}
}

@misc{guvenir1992artificial,
  author       = {Guvenir, H. and Acar, B. and Muderrisoglu, H.},
  title        = {Artificial Characters},
  year         = {1992},
  howpublished = {\url{https://archive.ics.uci.edu/dataset/6/artificial+characters}},
  note         = {UCI Machine Learning Repository},
  doi          = {10.24432/C5303Z}
}

@misc{vidulin2010localization,
  author       = {Vidulin, V. and Lustrek, M. and Kaluza, B. and Piltaver, R. and Krivec, J.},
  title        = {Localization Data for Person Activity},
  year         = {2010},
  howpublished = {\url{https://archive.ics.uci.edu/dataset/201/localization+data+for+person+activity}},
  note         = {UCI Machine Learning Repository},
  doi          = {10.24432/C57G8X}
}

@inproceedings{
zou2025contrastive,
title={Contrastive Private Data Synthesis via Weighted Multi-{PLM} Fusion},
author={Tianyuan Zou and Yang Liu and Peng Li and Yufei Xiong and Jianqing Zhang and Jingjing Liu and Xiaozhou Ye and Ye Ouyang and Ya-Qin Zhang},
booktitle={Forty-second International Conference on Machine Learning},
year={2025},
url={https://openreview.net/forum?id=oRdfFS7xO5}
}

@inproceedings{hou2024pretext,
author = {Hou, Charlie and Shrivastava, Akshat and Zhan, Hongyuan and Conway, Rylan and Le, Trang and Sagar, Adithya and Fanti, Giulia and Lazar, Daniel},
title = {PrE-Text: training language models on private federated data in the age of LLMs},
year = {2024},
publisher = {JMLR.org},
booktitle = {Proceedings of the 41st International Conference on Machine Learning},
articleno = {766},
numpages = {19},
location = {Vienna, Austria},
series = {ICML'24}
}

@inproceedings{
hou2025private,
title={Private Federated Learning using Preference-Optimized Synthetic Data},
author={Charlie Hou and Mei-Yu Wang and Yige Zhu and Daniel Lazar and Giulia Fanti},
booktitle={Forty-second International Conference on Machine Learning},
year={2025},
url={https://openreview.net/forum?id=ZuaU2bYzlc}
}

@inproceedings{
zhang2025pcevolve,
title={{PCE}volve: Private Contrastive Evolution for Synthetic Dataset Generation via Few-Shot Private Data and Generative {API}s},
author={Jianqing Zhang and Yang Liu and JIE FU and Yang Hua and Tianyuan Zou and Jian Cao and Qiang Yang},
booktitle={Forty-second International Conference on Machine Learning},
year={2025},
url={https://openreview.net/forum?id=IKCfxWtTsu}
}

@misc{gonzalez2025private,
      title={Private Evolution Converges}, 
      author={Tomás González and Giulia Fanti and Aaditya Ramdas},
      year={2025},
      eprint={2506.08312},
      archivePrefix={arXiv},
      primaryClass={cs.LG},
      url={https://arxiv.org/abs/2506.08312}, 
}

@misc{nist2018dp_synth_data_challenge,
  author       = {NIST},
  title        = {2018 Differential Privacy Synthetic Data Challenge},
  howpublished = {\url{https://www.nist.gov/ctl/pscr/open-innovation-prize-challenges/past-prize-challenges/2018-differential-privacy-synthetic}},
  year         = {2018}
}

@misc{mckenna2021mst,
      title={Winning the NIST Contest: A scalable and general approach to differentially private synthetic data}, 
      author={Ryan McKenna and Gerome Miklau and Daniel Sheldon},
      year={2021},
      eprint={2108.04978},
      archivePrefix={arXiv},
      primaryClass={cs.CR},
      url={https://arxiv.org/abs/2108.04978}, 
}

@article{zhang2017PrivBayes,
author = {Zhang, Jun and Cormode, Graham and Procopiuc, Cecilia M. and Srivastava, Divesh and Xiao, Xiaokui},
title = {PrivBayes: Private Data Release via Bayesian Networks},
year = {2017},
issue_date = {December 2017},
publisher = {Association for Computing Machinery},
address = {New York, NY, USA},
volume = {42},
number = {4},
issn = {0362-5915},
url = {https://doi.org/10.1145/3134428},
doi = {10.1145/3134428},
journal = {ACM Trans. Database Syst.},
month = oct,
articleno = {25},
numpages = {41}
}

@misc{li2021dpsyn,
      title={DPSyn: Experiences in the NIST Differential Privacy Data Synthesis Challenges}, 
      author={Ninghui Li and Zhikun Zhang and Tianhao Wang},
      year={2021},
      eprint={2106.12949},
      archivePrefix={arXiv},
      primaryClass={cs.CR},
      url={https://arxiv.org/abs/2106.12949}, 
}

@inproceedings{mckenna19pgm,
  author={Ryan McKenna and Daniel Sheldon and Gerome Miklau},
  title={Graphical-model based estimation and inference for differential privacy},
  year={2019},
  cdate={1546300800000},
  pages={4435-4444},
  url={http://proceedings.mlr.press/v97/mckenna19a.html},
  booktitle={ICML},
}

@inproceedings{donhauser24privpgd,
author = {Donhauser, Konstantin and Abad, Javier and Hulkund, Neha and Yang, Fanny},
title = {Privacy-preserving data release leveraging optimal transport and particle gradient descent},
year = {2024},
publisher = {JMLR.org},
booktitle = {Proceedings of the 41st International Conference on Machine Learning},
articleno = {455},
numpages = {21},
location = {Vienna, Austria},
series = {ICML'24}
}

@misc{xie2018dpgan,
      title={Differentially Private Generative Adversarial Network}, 
      author={Liyang Xie and Kaixiang Lin and Shu Wang and Fei Wang and Jiayu Zhou},
      year={2018},
      eprint={1802.06739},
      archivePrefix={arXiv},
      primaryClass={cs.LG},
      url={https://arxiv.org/abs/1802.06739}, 
}

@inproceedings{
yoon2018pategan,
title={{PATE}-{GAN}: Generating Synthetic Data with Differential Privacy Guarantees},
author={Jinsung Yoon and James Jordon and Mihaela van der Schaar},
booktitle={International Conference on Learning Representations},
year={2019},
url={https://openreview.net/forum?id=S1zk9iRqF7},
}

@misc{castellon2023dptbart,
      title={DP-TBART: A Transformer-based Autoregressive Model for Differentially Private Tabular Data Generation}, 
      author={Rodrigo Castellon and Achintya Gopal and Brian Bloniarz and David Rosenberg},
      year={2023},
      eprint={2307.10430},
      archivePrefix={arXiv},
      primaryClass={cs.LG},
      url={https://arxiv.org/abs/2307.10430}, 
}

@misc{sablayrolles2023private,
      title={Privately generating tabular data using language models}, 
      author={Alexandre Sablayrolles and Yue Wang and Brian Karrer},
      year={2023},
      eprint={2306.04803},
      archivePrefix={arXiv},
      primaryClass={cs.LG},
      url={https://arxiv.org/abs/2306.04803}, 
}

@misc{fuentes2024jam,
      title={Joint Selection: Adaptively Incorporating Public Information for Private Synthetic Data}, 
      author={Miguel Fuentes and Brett Mullins and Ryan McKenna and Gerome Miklau and Daniel Sheldon},
      year={2024},
      eprint={2403.07797},
      archivePrefix={arXiv},
      primaryClass={cs.LG},
      url={https://arxiv.org/abs/2403.07797}, 
}

@inproceedings{zhang2016privtree,
author = {Zhang, Jun and Xiao, Xiaokui and Xie, Xing},
title = {PrivTree: A Differentially Private Algorithm for Hierarchical Decompositions},
year = {2016},
isbn = {9781450335317},
publisher = {Association for Computing Machinery},
address = {New York, NY, USA},
url = {https://doi.org/10.1145/2882903.2882928},
doi = {10.1145/2882903.2882928},
booktitle = {Proceedings of the 2016 International Conference on Management of Data},
pages = {155–170},
numpages = {16},
location = {San Francisco, California, USA},
series = {SIGMOD '16}
}

@misc{kurakin2024harnessing,
      title={Harnessing large-language models to generate private synthetic text}, 
      author={Alexey Kurakin and Natalia Ponomareva and Umar Syed and Liam MacDermed and Andreas Terzis},
      year={2024},
      eprint={2306.01684},
      archivePrefix={arXiv},
      primaryClass={cs.LG},
      url={https://arxiv.org/abs/2306.01684}, 
}

@article{
dockhorn2023differentially,
title={Differentially Private Diffusion Models},
author={Tim Dockhorn and Tianshi Cao and Arash Vahdat and Karsten Kreis},
journal={Transactions on Machine Learning Research},
issn={2835-8856},
year={2023},
url={https://openreview.net/forum?id=ZPpQk7FJXF},
note={}
}

@misc{wang2025structbench,
      title={Struct-Bench: A Benchmark for Differentially Private Structured Text Generation}, 
      author={Shuaiqi Wang and Vikas Raunak and Arturs Backurs and Victor Reis and Pei Zhou and Sihao Chen and Longqi Yang and Zinan Lin and Sergey Yekhanin and Giulia Fanti},
      year={2025},
      eprint={2509.10696},
      archivePrefix={arXiv},
      primaryClass={cs.CL},
      url={https://arxiv.org/abs/2509.10696}, 
}

@article{diffprivlib,
  title={Diffprivlib: the {IBM} differential privacy library},
  author={Holohan, Naoise and Braghin, Stefano and Mac Aonghusa, P{\'o}l and Levacher, Killian},
  year={2019},
  journal = {ArXiv e-prints},
  archivePrefix = "arXiv",
  volume = {1907.02444 [cs.CR]},
  primaryClass = "cs.CR",
  month = jul
}

@ARTICLE{5392532,
  author={Watanabe, Satosi},
  journal={IBM Journal of Research and Development}, 
  title={Information Theoretical Analysis of Multivariate Correlation}, 
  year={1960},
  volume={4},
  number={1},
  pages={66-82},
  doi={10.1147/rd.41.0066}}

@article{Hu2023SoKPD,
  title={SoK: Privacy-Preserving Data Synthesis},
  author={Yuzheng Hu and Fan Wu and Q. Li and Yunhui Long and Gonzalo Munilla Garrido and Chang Ge and Bolin Ding and David Forsyth and Bo Li and Dawn Xiaodong Song},
  journal={2024 IEEE Symposium on Security and Privacy (SP)},
  year={2023},
  pages={4696-4713},
  url={https://api.semanticscholar.org/CorpusID:259342074}
}

@misc{p2025dpfydata,
      title={How to DP-fy Your Data: A Practical Guide to Generating Synthetic Data With Differential Privacy}, 
      author={Natalia Ponomareva and Zheng Xu and H. Brendan McMahan and Peter Kairouz and Lucas Rosenblatt and Vincent Cohen-Addad and Cristóbal Guzmán and Ryan McKenna and Galen Andrew and Alex Bie and Da Yu and Alex Kurakin and Morteza Zadimoghaddam and Sergei Vassilvitskii and Andreas Terzis},
      year={2025},
      eprint={2512.03238},
      archivePrefix={arXiv},
      primaryClass={cs.CR},
      url={https://arxiv.org/abs/2512.03238}, 
}

@inproceedings{yuntaobench,
author = {Du, Yuntao and Li, Ninghui},
title = {Systematic Assessment of Tabular Data Synthesis},
year = {2025},
isbn = {9798400715259},
publisher = {Association for Computing Machinery},
address = {New York, NY, USA},
url = {https://doi.org/10.1145/3719027.3765067},
doi = {10.1145/3719027.3765067},
booktitle = {Proceedings of the 2025 ACM SIGSAC Conference on Computer and Communications Security},
pages = {2414–2428},
numpages = {15},
location = {Taipei, Taiwan},
series = {CCS '25}
}

@misc{ge2021kamino,
      title={Kamino: Constraint-Aware Differentially Private Data Synthesis}, 
      author={Chang Ge and Shubhankar Mohapatra and Xi He and Ihab F. Ilyas},
      year={2021},
      eprint={2012.15713},
      archivePrefix={arXiv},
      primaryClass={cs.DB},
      url={https://arxiv.org/abs/2012.15713}, 
}

@inproceedings{flaim,
author = {Maddock, Samuel and Cormode, Graham and Maple, Carsten},
title = {FLAIM: AIM-based Synthetic Data Generation in the Federated Setting},
year = {2024},
isbn = {9798400704901},
publisher = {Association for Computing Machinery},
address = {New York, NY, USA},
url = {https://doi.org/10.1145/3637528.3671990},
doi = {10.1145/3637528.3671990},
booktitle = {Proceedings of the 30th ACM SIGKDD Conference on Knowledge Discovery and Data Mining},
pages = {2165–2176},
numpages = {12},
location = {Barcelona, Spain},
series = {KDD '24}
}

@inproceedings{clava,
author = {Pang, Wei and Shafieinejad, Masoumeh and Liu, Lucy and Hazlewood, Stephanie and He, Xi},
title = {ClavaDDPM: multi-relational data synthesis with cluster-guided diffusion models},
year = {2024},
isbn = {9798331314385},
publisher = {Curran Associates Inc.},
address = {Red Hook, NY, USA},
booktitle = {Proceedings of the 38th International Conference on Neural Information Processing Systems},
articleno = {2657},
numpages = {27},
location = {Vancouver, BC, Canada},
series = {NIPS '24}
}

@misc{alimohammadi2025,
      title={Differentially Private Synthetic Data Generation for Relational Databases}, 
      author={Kaveh Alimohammadi and Hao Wang and Ojas Gulati and Akash Srivastava and Navid Azizan},
      year={2025},
      eprint={2405.18670},
      archivePrefix={arXiv},
      primaryClass={cs.LG},
      url={https://arxiv.org/abs/2405.18670}, 
}

@article{wang2025synthesize,
  title={Synthesize Privacy-Preserving High-Resolution Images via Private Textual Intermediaries},
  author={Wang, Haoxiang and Lin, Zinan and Yu, Da and Zhang, Huishuai},
  journal={arXiv preprint arXiv:2506.07555},
  year={2025}
}

@misc{adult_2,
  author       = {Becker, Barry and Kohavi, Ronny},
  title        = {{Adult}},
  year         = {1996},
  howpublished = {UCI Machine Learning Repository},
  note         = {{DOI}: https://doi.org/10.24432/C5XW20}
}

@misc{bank_marketing_222,
  author       = {Moro, S. and Rita, P. and Cortez, P.},
  title        = {Bank Marketing},
  year         = {2014},
  howpublished = {UCI Machine Learning Repository},
  doi          = {10.24432/C5K306},
  url          = {https://doi.org/10.24432/C5K306}
}

@article{ding2021retiring,
  title={Retiring Adult: New Datasets for Fair Machine Learning},
  author={Ding, Frances and Hardt, Moritz and Miller, John and Schmidt, Ludwig},
  journal={Advances in Neural Information Processing Systems},
  volume={34},
  year={2021}
}

@misc{yue2023synthetic,
      title={Synthetic Text Generation with Differential Privacy: A Simple and Practical Recipe}, 
      author={Xiang Yue and Huseyin A. Inan and Xuechen Li and Girish Kumar and Julia McAnallen and Hoda Shajari and Huan Sun and David Levitan and Robert Sim},
      year={2023},
      eprint={2210.14348},
      archivePrefix={arXiv},
      primaryClass={cs.CL},
      url={https://arxiv.org/abs/2210.14348}, 
}

@misc{rosenblatt2026,
      title={Privately Fine-Tuned LLMs Preserve Temporal Dynamics in Tabular Data}, 
      author={Lucas Rosenblatt and Peihan Liu and Ryan McKenna and Natalia Ponomareva},
      year={2026},
      eprint={2602.02766},
      archivePrefix={arXiv},
      primaryClass={cs.LG},
      url={https://arxiv.org/abs/2602.02766}, 
}

@misc{maddock2025,
      title={GEM+: Scalable State-of-the-Art Private Synthetic Data with Generator Networks}, 
      author={Samuel Maddock and Shripad Gade and Graham Cormode and Will Bullock},
      year={2025},
      eprint={2511.09672},
      archivePrefix={arXiv},
      primaryClass={cs.LG},
      url={https://arxiv.org/abs/2511.09672}, 
}

@misc{chen2025onesize,
      title={Beyond One-Size-Fits-All: Neural Networks for Differentially Private Tabular Data Synthesis}, 
      author={Kai Chen and Chen Gong and Tianhao Wang},
      year={2025},
      eprint={2511.13893},
      archivePrefix={arXiv},
      primaryClass={cs.LG},
      url={https://arxiv.org/abs/2511.13893}, 
}
\bibliographystyle{icml2026}

\onecolumn
\clearpage
\appendix
\newpage

\begin{center}
    {\Large \bfseries Appendix of ``Differentially Private Synthetic Data via APIs 4: Tabular Data"}
\end{center}
\vspace{1em}
\hrule
\vspace{1.5em}

Due to the space limit, we present additional details, results, and analyses in this Appendix. 

\startcontents[sections]
\printcontents[sections]{}{3}{} 

\section{Preliminaries}
\subsection{Analysis on high-order correlations}
\label{app:high-order-correlation}
In this section, we show that constraining tree-based predictor depth can effectively quantify the order of correlations.

Let $Y$ be the label and $X = \{X_1, X_2, \ldots, X_{k-1}\}$ be the feature set. 
We denote $X_{-Z}=\{X_1,X_2,\ldots,X_{k-1}\}\setminus \{Z\}$ for $Z\in X$.
Typically, the predictors are trained and evaluated by expected log-loss (i.e., cross entropy): $R_T = \mathbb{E}[-\log P_{f_T}(Y|X)]$, where $T$ is the depth of the tree-based predictor, and $f_T$ represents the predictor. 
We assume that each branch in the predictor $f_T$ can be an \emph{arbitrary} function on \emph{one} feature, and the predictor is trained to the optimal and $R^*_{T} = \min_{f_T} R_T$.


\begin{proposition}[Performance gap of tree-based predictors indicates high-order correlations]
    Assume increasing the tree depth from $k-2$ to $k-1$ can significantly improve the prediction performance, i.e., the training loss decreases by 
    $R^*_{k-2}-R^*_{k-1} > \Delta$,
    where $\Delta$ is a non-trivial positive constant.

    Then the set of attributes $\{Y, X_1, \ldots, X_{k-1}\}$ exhibits a $k$-way correlation.
    \label{lemma:performance-gap}
\end{proposition}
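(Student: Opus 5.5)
The plan is to express both optimal risks through conditional entropies, and then use chain rules for total correlation to turn the loss gap into a lower bound on $G_k$ from \eqref{eq:correlation-def}. Here $G_k$ is computed on the $k$ attributes $\{Y, X_1,\ldots,X_{k-1}\}$. I will work with discrete (or suitably discretized) attributes, so that all entropies are finite and the standard identities apply.

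\textbf{Step 1 (depth $k-1$).} For any predictor, $\mathbb{E}[-\log P_f(Y\mid X)] \ge H(Y\mid X)$, with equality when $P_f(\cdot\mid X)$ is the true conditional. Each branch may be an arbitrary function of one feature, so a depth-$(k-1)$ tree can split on $X_1, X_2, \ldots, X_{k-1}$ in turn along every path. Each leaf then corresponds to a full configuration of $X$ and can output $P(Y\mid X)$. Hence $R^*_{k-1} = H(Y\mid X)$.

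\textbf{Step 2 (depth $k-2$).} I only need an upper bound here. For each $i$, a depth-$(k-2)$ tree that splits sequentially on the features of $X_{-i}$ can output $P(Y\mid X_{-i})$ at its leaves. Therefore $R^*_{k-2} \le \min_i H(Y\mid X_{-i})$. Combining with Step 1 and the hypothesis gives, for every $i$,
\[
I(Y;X_i\mid X_{-i}) = H(Y\mid X_{-i}) - H(Y\mid X) \ge R^*_{k-2}-R^*_{k-1} > \Delta .
\]
I expect the main point needing care to be the representability claims in Steps 1--2, namely that ``arbitrary single-feature branches'' permit full multiway splits and arbitrary leaf distributions. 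For continuous features this should be justified by discretization or by stating it as part of the modeling assumption. The lower bound in Step 1 does not depend on this.

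\textbf{Step 3 (total correlation bookkeeping).} I use $I(\cdot)$ for total correlation, $\sum_j H(\cdot_j) - H(\text{joint})$. Its chain rule gives $I(Y,X_1,\ldots,X_{k-1}) = I(X) + I(Y;X)$. There are two kinds of $(k-1)$-subsets to compare against.

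\emph{Removing $Y$.} The difference is $I(Y;X) = H(Y) - H(Y\mid X) \ge H(Y\mid X_{-i}) - H(Y\mid X) > \Delta$.

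\emph{Removing some $X_i$.} We have $I(Y,X_{-i}) = I(X_{-i}) + I(Y;X_{-i})$. The difference therefore splits into two terms:
\[
\bigl[I(X) - I(X_{-i})\bigr] + \bigl[I(Y;X) - I(Y;X_{-i})\bigr] = I(X_i;X_{-i}) + I(Y;X_i\mid X_{-i}) .
\]
The first term is nonnegative, and the second exceeds $\Delta$ by Step 2.

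Taking the maximum over all removed attributes then yields $G_k > \Delta$. This is exactly the $k$-way correlation condition \eqref{eq:correlation-def} for $\{Y, X_1,\ldots,X_{k-1}\}$.
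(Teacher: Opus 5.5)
Your proof is correct and follows essentially the same route as the paper. Both arguments identify $R^*_{k-1}=H(Y\mid X)$ and $R^*_{k-2}\le\min_i H(Y\mid X_{-i})$, and then show that removing any single attribute lowers the total correlation by more than $\Delta$: by $I(X_i;X_{-i})+I(Y;X_i\mid X_{-i})$ when $X_i$ is removed, and by $I(Y;X)$ when $Y$ is removed. The differences are cosmetic: you use the chain rule $I(Y,X_1,\ldots,X_{k-1})=I(X)+I(Y;X)$, the bound $H(Y)\ge H(Y\mid X_{-i})$, and a per-attribute bound instead of a min of sums, and your remarks on representability and discretization make explicit assumptions the paper leaves implicit.
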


\textbf{Proof.} 
Since $X$ contains only $k-1$ features, the predictor requires at most $k-1$ layers to achieve the optimal prediction. Therefore,
\[
R^*_{k-1} = \min_{f_{k-1}} R_{k-1} = H(Y \mid X).
\]

For the predictor $f_{k-2}$, each prediction branch can depend on at most $k-2$ features. Consequently, its prediction error is no worse than that of a predictor $f'_{k-2}$ that selects the best subset of $k-2$ features from $X$ so as to minimize the training loss. This implies
\[
R^*_{k-2} \le \min_{Z \in X} H(Y \mid X_{-Z}).
\]

Combining the above inequalities, we obtain
\[
\min_{Z \in X} H(Y \mid X_{-Z}) - H(Y \mid X)
\;\ge\;
R^*_{k-2} - R^*_{k-1}
\;>\;
\Delta .
\]

Recall that, by definition, the set of attributes $\{Y, X_1, \ldots, X_{k-1}\}$ exhibits a $k$-way correlation if and only if
\[
I(Y, X_1, \ldots, X_{k-1})
- \max_{Z \in \{Y, X_1, \ldots, X_{k-1}\}}
I\big(\{Y, X_1, \ldots, X_{k-1}\} \setminus \{Z\}\big)
> \Delta .
\]
In what follows, we first show that
\[
I(Y, X_1, \ldots, X_{k-1})
- \max_{Z \in \{X_1, \ldots, X_{k-1}\}}
I\big(\{Y, X_1, \ldots, X_{k-1}\} \setminus \{Z\}\big)
> \Delta ,
\]
and then prove that
\[
I(Y, X_1, \ldots, X_{k-1}) - I(X_1, \ldots, X_{k-1}) > \Delta .
\]
Together, these two inequalities establish the desired conclusion.

\paragraph{First term.}
For
\[
I(Y, X_1, \ldots, X_{k-1})
- \max_{Z \in \{X_1, \ldots, X_{k-1}\}}
I\big(\{Y, X_1, \ldots, X_{k-1}\} \setminus \{Z\}\big),
\]
we have
\begin{align*}
& I(Y, X_1, \ldots, X_{k-1})
- \max_{Z \in \{X_1, \ldots, X_{k-1}\}}
I\big(\{Y, X_1, \ldots, X_{k-1}\} \setminus \{Z\}\big) \\
= {} & \min_{Z} I\big(X_Z ; \{Y\} \cup X_{-Z}\big) \\
= {} & \min_{Z} \Big( I(X_Z ; X_{-Z}) + I(Y ; X_Z \mid X_{-Z}) \Big) \\
\ge {} & \min_{Z} I(X_Z ; X_{-Z}) + \min_{Z} I(Y ; X_Z \mid X_{-Z}) \\
> {} & 0 + \Delta \\
= {} & \Delta,
\end{align*}
where we utilize the fact that $\min_{Z} I(Y ; X_Z \mid X_{-Z}) =\min_{Z\in X} H(Y|X_{-Z})-H(Y|X) > \Delta$.

\paragraph{Second term.}
For
\[
I(Y, X_1, \ldots, X_{k-1}) - I(X_1, \ldots, X_{k-1}),
\]
we obtain
\begin{align*}
& I(Y, X_1, \ldots, X_{k-1}) - I(X_1, \ldots, X_{k-1}) \\
= {} & I(Y ; X_1, \ldots, X_{k-1}) \\
= {} & I(Y ; X_{-X_1}) + I(Y ; X_1 \mid X_{-X_1}) \\
> {} & 0 + \Delta \\
= {} & \Delta,
\end{align*}
where we utilize the fact that $I(Y ; X_1 \mid X_{-X_1})  \geq \min_{Z} I(Y ; X_Z \mid X_{-Z}) > \Delta$.

This concludes the proof.

\section{Methodology}
\subsection{Privacy Analysis}
\label{app:privacy-analysis}

The privacy analysis of Algorithm~\ref{alg:pe} can be reused from~\cite{lin2023differentially} (see Section 4.3 there), as \ours changes only non-private steps of the original PE framework. For completeness, we include it here as well.  The DP guarantee of the \ours algorithm (Algorithm~\ref{alg:pe}) can be reasoned as follows:

\begin{itemize}
    \item  Step 1: The sensitivity of DP Nearest Neighbors Histogram (Algorithm~\ref{alg:dp_nn_histogram}). Each private sample only contributes one vote for histogram of one class. If we add or remove one sample, the resulting histogram for the corresponding class will change by at most $1$ in the $\ell_2$ norm. Therefore, the sensitivity is upper bounded by $1$.
    \item Step 2: Regarding each PE iteration as a Gaussian mechanism. The second for loop of Algorithm~\ref{alg:dp_nn_histogram} adds i.i.d.\ Gaussian noise with standard deviation $\sigma$ to each bin. This is a standard Gaussian mechanism (\cite{dwork2014algorithmic}) with noise multiplier $\sigma$.
    \item  Step 3: Regarding the entire PE algorithm as $T$ adaptive compositions of Gaussian mechanisms, as \ours is simply applying Algorithm~\ref{alg:dp_nn_histogram} $T$ times sequentially.
    \item Step 4: Regarding the entire \ours algorithm as one Gaussian mechanism with noise multiplier $\sigma/\sqrt T$. It is a standard result from~\cite{dong2022gaussian} (see Corollary 3.3 therein).
    \item  Step 5: Computing DP parameters $\epsilon$ and $\delta$. Since the problem is simply computing $\epsilon$ and $\delta$ for a standard Gaussian mechanism, we use the formula from \cite{balle2018improving} directly.
\end{itemize}

\section{Experimental Setup}
\subsection{Datasets}
\label{app:dataset}
\subsubsection{Real-world Datasets}
\label{app:dataset-real}
\paragraph{Quantifying high-order correlation through classifier performance gap} We aim to study how well the methods can capture high-order correlations. It is easy to be misled about high-dimensional correlations and high-dimensional datasets. While some datasets can have a large number of features, the features are often independent or only have low-order correlations (i.e., dependencies involving only a few features). We first propose a way to quantify the order of correlation in a dataset by considering the performance gap between simple classifiers, which only capture low-order correlations, and complex classifiers, which can leverage high-order correlations. The larger the gap, the more high-order correlations exist in the dataset. In practice, motivated by Proposition~\ref{lemma:performance-gap}, we vary the max depth of XGBoost, where the decision depth works as an upper bound on the order of captured correlations.

\paragraph{Widely used datasets are dominated by low-order correlations} We investigate a variety of datasets that have been widely used in prior evaluations~\citep{chen2025benchmark,tao2022benchmark}. We increase the max depth from 2 to 7, while keeping other hyperparameters as default. The results are shown in Figure~\ref{fig:app-dataset-low-order}. The gap of accuracy is trivial (typically smaller than 1\%). This indicates that the downstream tasks on these datasets are dominated by low-order correlations. This leads to the conclusion that these datasets are not suitable for evaluating the ability to capture high-order correlations because synthesizers that can only capture low-order correlations may already achieve good performance.

\begin{figure}[!htp]
    \centering
    \includegraphics[width=0.35\linewidth]{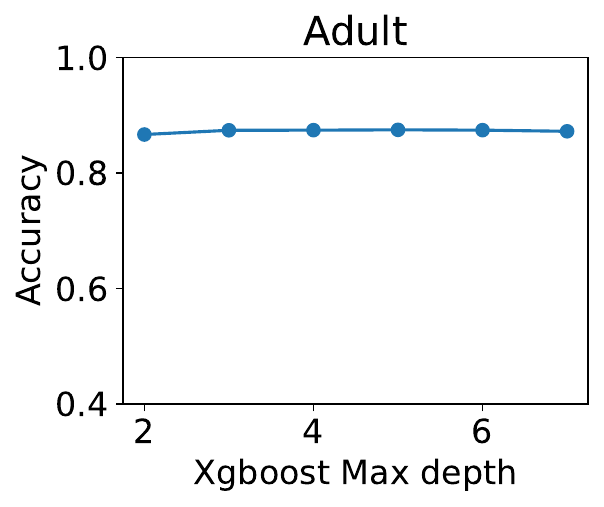} \hspace{1cm}
    \includegraphics[width=0.35\linewidth]{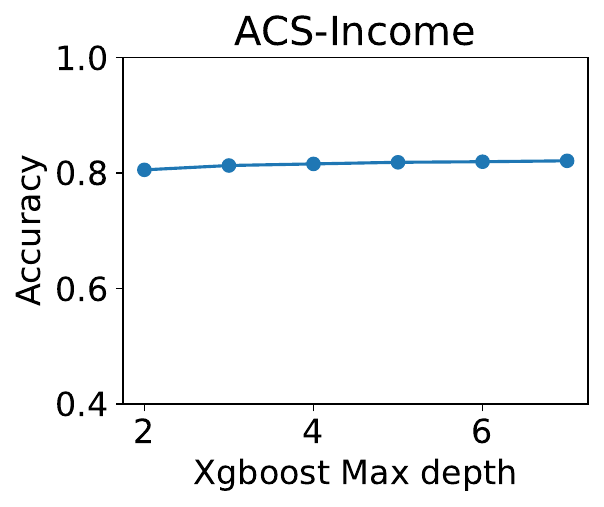} 
    \includegraphics[width=0.35\linewidth]{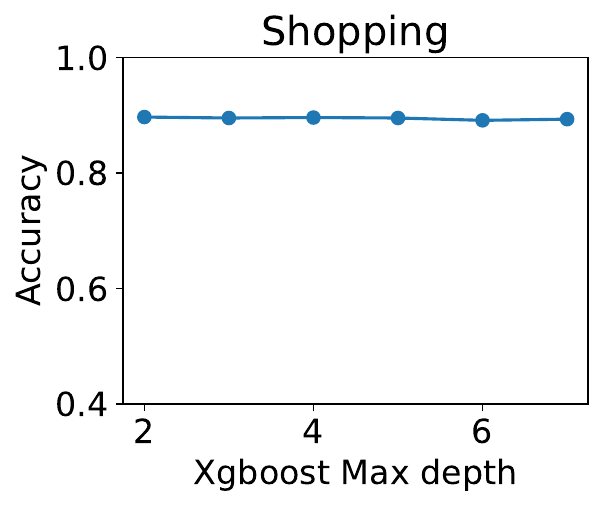}   \hspace{1cm}
    \includegraphics[width=0.35\linewidth]{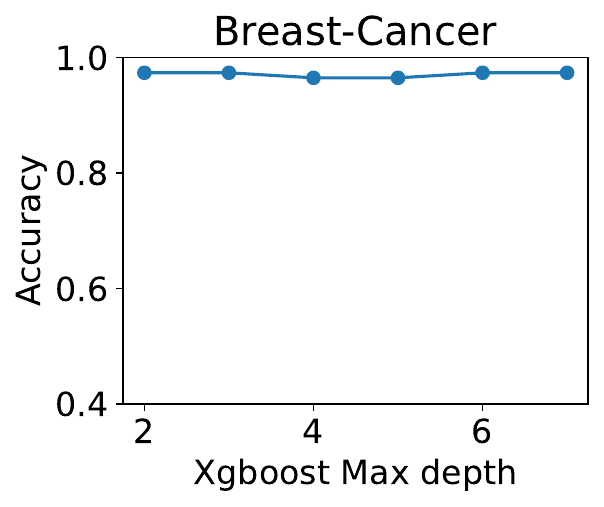}
    \includegraphics[width=0.35\linewidth]{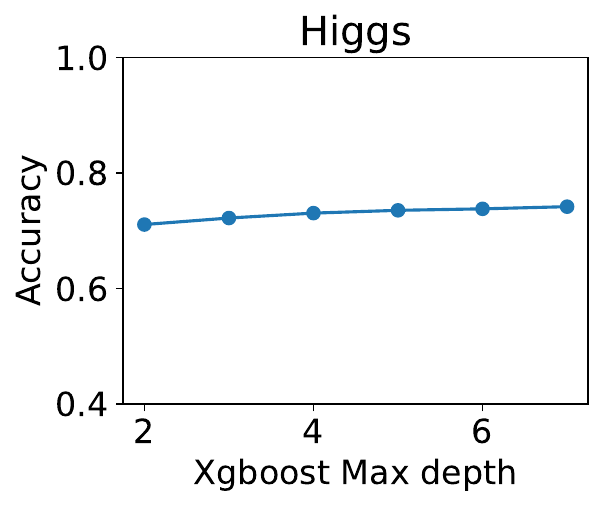}    \hspace{1cm}
    \includegraphics[width=0.35\linewidth]{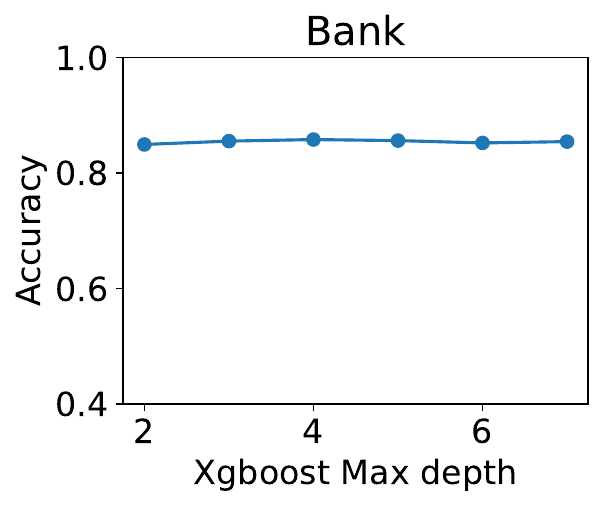}    
    \caption{Datasets with low-order correlations. These are widely used in prior evaluations.}
    \label{fig:app-dataset-low-order}
\end{figure}

\paragraph{Datasets with high-order correlations} 
We selected two datasets that yield significant differences in accuracy while varying the max depth of XGBoost, as depicted in Figure~\ref{fig:app-dataset-high-order}. In particular, we consider the Artificial Characters dataset~\citep{guvenir1992artificial}~\footnote{\url{https://www.openml.org/search?type=data&id=1459}} and the Person Activity dataset~\citep{vidulin2010localization}~\footnote{\url{https://www.openml.org/search?type=data&id=1483}}. The Artificial Characters dataset contains 10218 samples with 8 numerical features and 10 classes, while the Person Activity dataset includes 164860 samples with 2 categorical features, 6 numerical features, and 11 classes.

\begin{figure}[!htp]
    \centering
    \includegraphics[width=0.35\linewidth]{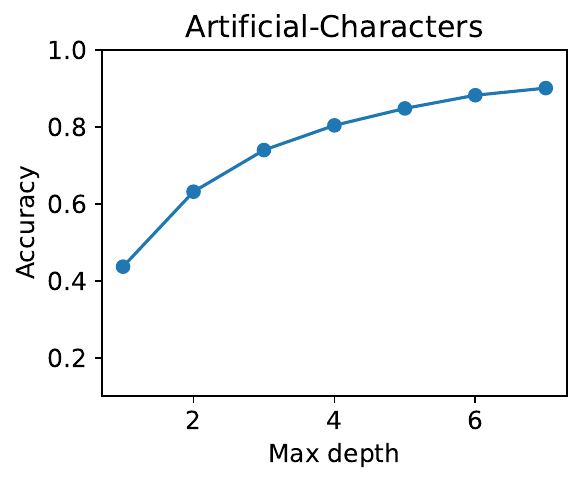} \hspace{1cm}
    \includegraphics[width=0.35\linewidth]{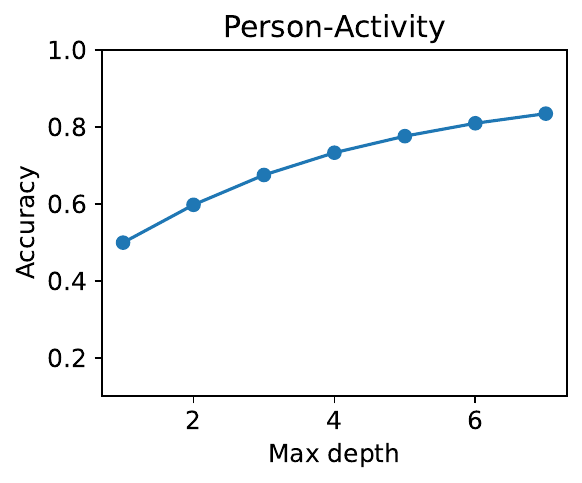}
    \caption{Datasets with high-order correlations -- our focus.}
    \label{fig:app-dataset-high-order}
\end{figure}

\subsubsection{Simulation Datasets}
\label{app:dataset-sim}
\paragraph{XOR correlations as a stress test} We consider XOR correlations as a stress test for capturing high-order correlations. The XOR function is a classic example that requires all input features to determine the output. Failing to model any single feature leads to random guessing. Each feature is drawn from a uniform distribution over $(-10, 10)$. The label is then determined by the parity of positive features.
\[  
    c = \begin{cases}
    1 & \text{if } \sum_{i=1}^{d} \mathds{1}(x_i > 0) \text{ is odd}\\
    0 & \text{otherwise}
    \end{cases}
\]
For each setting of the number of features, we generate 50K samples and ensure balanced binary classes. The dataset with two features is visualized in Figure~\ref{fig:app-xor-2-features}. Figure~\ref{fig:app-dataset-xor} presents the performance of XGBoost classifiers with varying max depths on the XOR datasets. The max depth of XGBoost must be equal to the number of features to achieve better-than-random accuracy. Therefore, the synthetic data must capture the full high-order correlations to achieve good downstream utilities.

\begin{figure}[!htp]
    \centering
    \includegraphics[width=0.45\linewidth]{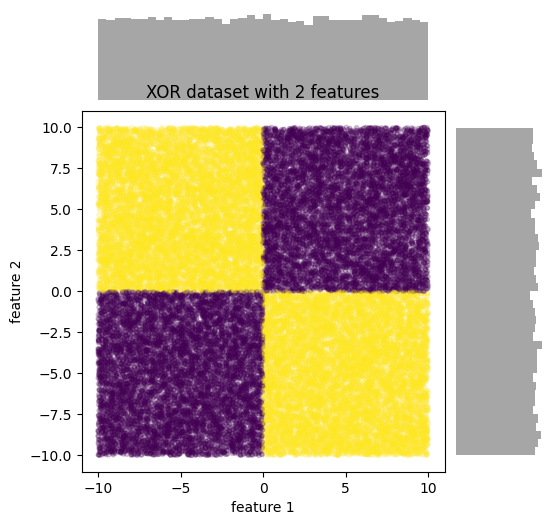}
    \caption{XOR dataset with 2 features. The colors represent classes.}
    \label{fig:app-xor-2-features}
\end{figure}

\begin{figure}[!htp]
    \centering
    \includegraphics[width=0.35\linewidth]{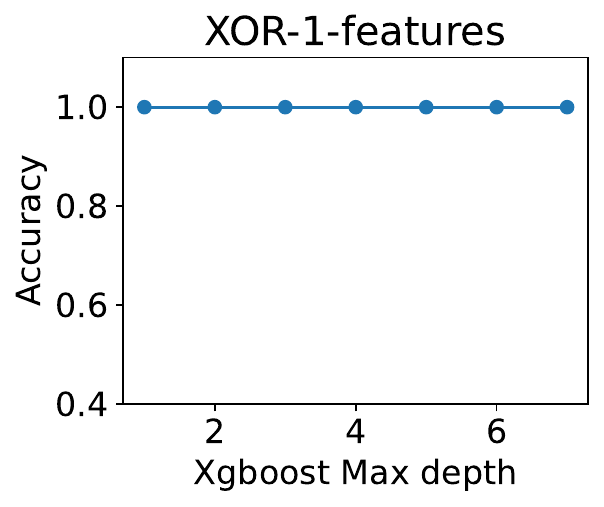} \hspace{1cm}
    \includegraphics[width=0.35\linewidth]{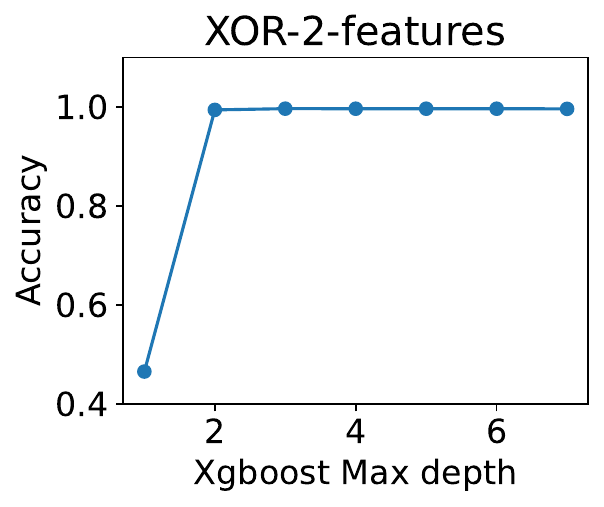} \\
    \includegraphics[width=0.35\linewidth]{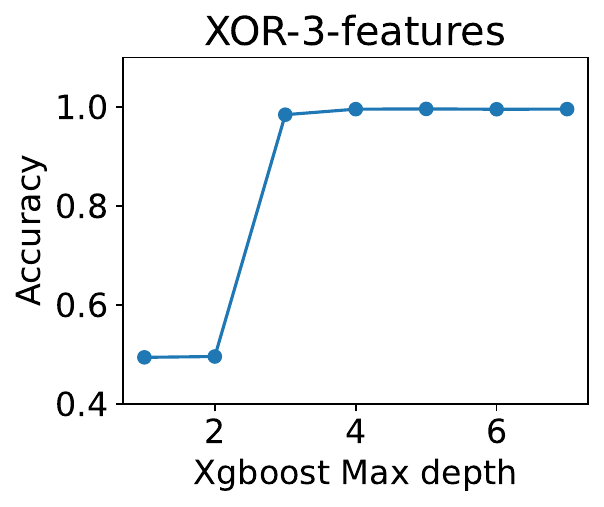} \hspace{1cm}    
    \includegraphics[width=0.35\linewidth]{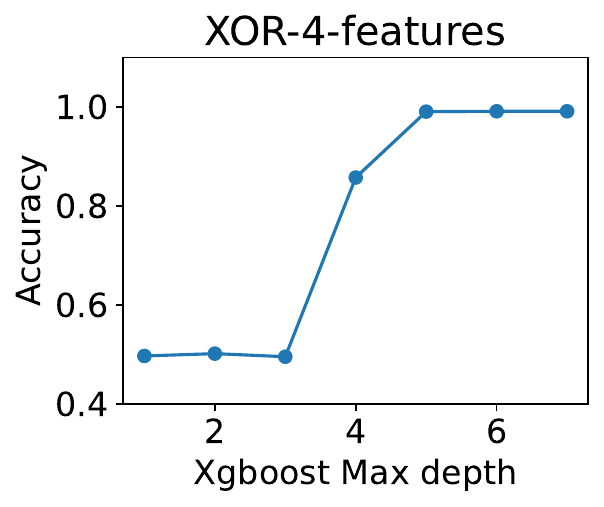} 
    \caption{XOR Simulation Datasets.}
    \label{fig:app-dataset-xor} 
\end{figure}

\paragraph{SCM simulation datasets offer sustainable high-order correlations} We adapt the simulation method from TabPFN~\citep{hollmann2025tabpfn}, which is a breakthrough in tabular data classification. TabPFN generates large-scale realistic simulation data and pretrains a foundation model for in-context learning. By learning on only the simulation data, TabPFN still offers strong generalization to real-world data. This simulation pipeline employs Structural Causal Models (SCMs). An SCM defines a directed acyclic graph where each node corresponds to a feature, and the edges capture causal dependencies. The features are then generated by sampling values according to these dependencies that can represent complex interactions and non-linear relationships. The label is calculated by a prior function of features, inducing high-order correlations between the label and the feature set. As a result, increasing the max depth of XGBoost can lead up to a 10\% accuracy gap~(Figure~\ref{fig:app-dataset-scm}, Appendix~\ref{app:dataset}). Compared to the previous XOR setting, this is a more realistic scenario: features are correlated; modeling a subset of the joint distribution can translate into gains for downstream tasks. In our experiments, we implement three non-linear prior function: Tree, Neural Network~(NN), and Random Fourier Features~(RFF). Each dataset includes 50K samples.

\begin{figure}[!htp]
    \centering
    \includegraphics[width=0.35\linewidth]{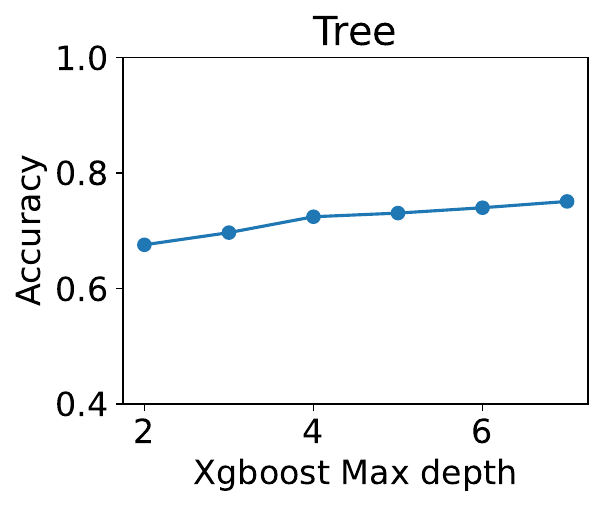} \hspace{1cm}
    \includegraphics[width=0.35\linewidth]{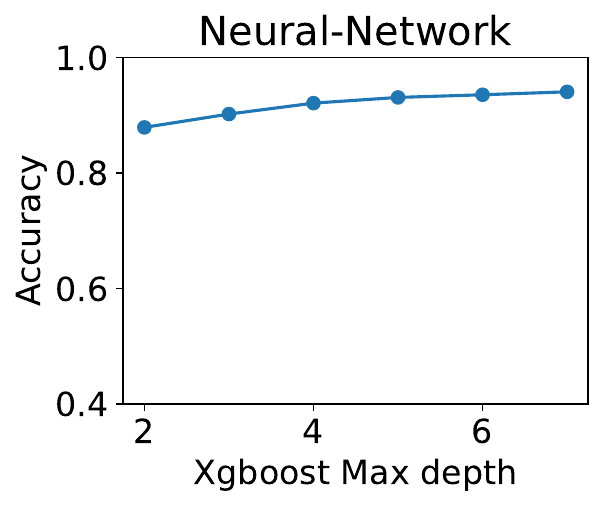}
    \includegraphics[width=0.35\linewidth]{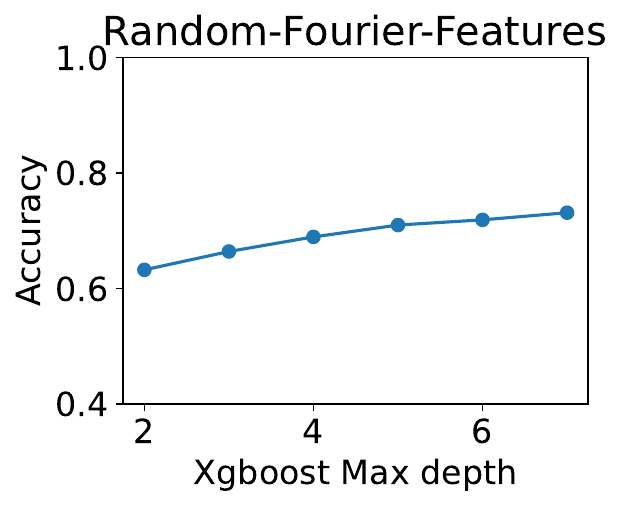}
    \caption{SCM Simulation Datasets.}
    \label{fig:app-dataset-scm}
\end{figure}

\subsection{Evaluation Metrics}
\label{app:metrics}
We consider the following metrics to evaluate the quality of synthetic data.
\paragraph{Downstream utility} The downstream utility reflects how well the synthetic data capture the correlation between features and labels. These metrics are the most important ones for studying high-order correlations. For consistency, we use the same SOTA classifier TabICL~\citep{qu2025tabicl} for all datasets. TabICL is a transformer-based foundation model that has been pretrained on 82 million tabular datasets with in-context learning. It has demonstrated superior performance on a wide range of tabular datasets while \textbf{requiring little-to-no hyperparameter tuning}. For all methods, we fit TabICL on the generated synthetic data and evaluate on the \emph{same} real test set.

\paragraph{Fidelity of statistical properties} To capture the statistical properties, we consider the Wasserstein distance. We denote $k$-WD as the average Wasserstein distance of all $k$-dimensional marginal distributions. The collection of $k$-attribute subsets:
\[
\mathcal{C}_k \;=\; \{\, S \subseteq \mathcal{F} \;:\; |S| = k \,\},
\]
where $\mathcal{F}$ is the set of features/attributes including the label. The average Wasserstein distance of $k$-dimensional marginals is defined as:
\[ 
    k-\text{WD}(\mathcal{D}, \mathcal{D}') = \dfrac{1}{|\mathcal{C}_k|} \sum_{S \in \mathcal{C}_k} WD\bigl( {\mathcal{D}}(X_S), {\mathcal{D}'}(X_S) \bigr),
\]
where $\mathcal{D}$ and $\mathcal{D}'$ are the real and synthetic datasets, respectively; $X_S$ is the subset of features in $S$; and $WD(\cdot, \cdot)$ is the Wasserstein distance between two distributions. We use the Python package \texttt{POT} to compute the Wasserstein distance.




\paragraph{Representation-level alignment} Evaluating the alignment on the representation space is common for text and image generation~\cite{lin2023differentially,xie2024differentially}. The alignment reflects how well the synthetic data cover the real data distribution in the representation space which can capture somewhat high-dimensional dependencies. While the representation space is achieved directly from foundation models in text and image domains, tabular data is challenged by strong distribution-shift across datasets. Therefore, we train an autoencoder for each dataset using the private dataset directly with a reconstruction loss. It is worth noting that the autoencoder here is only used for evaluation, and is not part of the synthesis process. By training on the private data, we ensure that the representation space is reliable and meaningful. We then calculate precision and recall~\cite{NEURIPS2018_f7696a9b} on the embeddings of real and synthetic data. Precision measures how many generated samples are actually close to the real data manifold, while Recall calculates how many real samples are covered by the generated data. The formulas of precision and recall are as follows:
\[
    \text{Precision} = \dfrac{1}{|\mathcal{D}_\text{syn}|} \sum_{x \in \mathcal{D}_\text{syn}} \mathds{1}(\exists y \in \mathcal{D}_\text{real}, \|\phi(x) - \phi(y)\|_2 \leq r_k(\phi(y), \phi(\mathcal{D}_\text{real})))
\]

where $\phi$ is the encoder of the autoencoder that maps the raw data to the representation space; $r_k(\phi(y), \phi(\mathcal{D}_\text{real}))$ is the distance from $\phi(y)$ to its $k$-th nearest neighbor in the set $\phi(\mathcal{D}_\text{real})$. We set $k=5$ in our experiments. Recall is defined symmetrically by swapping $\mathcal{D}_\text{syn}$ and $\mathcal{D}_\text{real}$.

\[ 
    \text{Recall} = \dfrac{1}{|\mathcal{D}_\text{real}|} \sum_{y \in \mathcal{D}_\text{real}} \mathds{1}(\exists x \in \mathcal{D}_\text{syn}, \|\phi(y) - \phi(x)\|_2 \leq r_k(\phi(x), \phi(\mathcal{D}_\text{syn})))
\]

\subsection{Implementations}
\label{app:implementation}

Our code, datasets, and instructions are available at~\url{https://anonymous.4open.science/r/tabpe-A11C}. We split each dataset into 70\% training, 15\% validation, and 15\% test sets, determined by fixed random seeds. All the methods are fitted on the same training set and evaluated on the same test set. The validation set is used for hyperparameter tuning for all methods. We generally do not account for the privacy budget for hyperparameter tuning. For the baselines, we reuse the code from a recent benchmark~\citep{chen2025benchmark} and follow their hyperparameter settings. For baselines that require discretization for numerical features, we employ PrivTree~\citep{zhang2016privtree}, which yields better performance than uniform binning, according to~\cite{chen2025benchmark}. For baselines using statistical queries, we use marginal queries, as they are the most commonly used in prior work and the most important for capturing high-order correlations. 
Rather than fixing the degree of marginal queries at 2, as is common in many previous setups, we treat it as a tunable hyperparameter (ranging from 2 to 5), since our datasets exhibit high-order correlations. This tuning maximizes the chance of capturing such correlations. The other hyperparameters of the baselines are presented in Table~\ref{tab:baseline-hyperparameters}.

\begin{table}[!htp]
    \centering
    \begin{tabular}{l|ll}
    \textbf{Method}  & \textbf{Hyperparameter} & \textbf{Value}  \\ \hline
    \multirow{2}{*}{PrivSyn} & Consistent Iteration & 501 \\
             & Max update iteration & 50 \\ \hline
    \multirow{5}{*}{PrivMRF} & Graph construction parameter & 6 \\
            & Sample size & 400 \\
            & Estimation iteration & 3000 \\
            & Size penalty & 1e-8 \\
            & Max clique size & 1e+7 \\ \hline
    \multirow{4}{*}{GEM} & Synthesis size & 1024 \\
            & Learning rate & 1e-3 \\
            & Max iteration & 500 \\ 
            & Max selection round & 5 $\cdot$ number of attributes \\ \hline
    \multirow{7}{*}{RAP++} & Random Projection Number & 2e+6 \\ 
            & Categorical optimization rate & 3e-3 \\
            & Numerical optimization rate & 6e-3 \\
            & Top q & 5 \\
            & Categorical optimization step &  1 \\
            & Numerical optimization step & 3 \\
            & Upsample rate & 10 \\ \hline
    \multirow{4}{*}{PrivGSD} & Mutation rate & 50 \\
            & Crossover rate & 50 \\
            & Upsample number & 1e+5 \\
            & Number of iterations & 1e+6 \\ \hline
    \multirow{3}{*}{AIM} & Max model size & 100 \\
            & Max iteration & 1000 \\
            & Max marginal size & 2.5e+5 \\
    \end{tabular}
    \caption{Hyperparameters of the baselines.}
    \label{tab:baseline-hyperparameters}
\end{table}

By default, we run \ours with the hyperparameters presented in Table~\ref{tab:my_label} if not specified. 
\begin{table}[!htp]
    \centering
    \begin{tabular}{l|l}
        \textbf{Hyperparameter} & \textbf{Value} \\ \hline
        Number of iterations $T$ & 15 \\
        Number of sampling iterations $T_\text{sampling}$ & 5 \\ 
        Variation degree $m$ & 3 \\
        Mutation rate initial value $\mu_\text{init}$ & 0.5 \\
        Mutation rate final value $\mu_\text{final}$ & 0.02 \\
        Categorical mutation rate $\mu_\text{cat}$ & Polynomial decay from $\mu_\text{init}$ to 0.02 \\
        Numerical mutation rate $\mu_\text{num}$ & Polynomial decay from $\mu_\text{init}$ to 0.02 \\
        Decay factor $\gamma$ & 0.2 \\
        Categorical weight $\lambda$ & 1/3 \\
        Privacy budget $\epsilon$ & 1.0 \\
        Privacy delta $\delta$ & $1/(|\mathcal{D}_\text{real}| \cdot \ln (|\mathcal{D}_\text{real}|))$ \\
    \end{tabular}
    \caption{Default hyperparameters of \ours.}
    \label{tab:my_label}
\end{table}
For the real-world datasets, we generate 1K samples for the Artificial Characters dataset and 5K samples for the Person Activity dataset. For the simulation data, we generate 2K samples by default.

\section{Additional Results}
\subsection{Data distribution of \ours over iterations}
\label{app:pe-iterations}

Figure~\ref{fig:app-pe-xor} illustrates the evolutionary process of synthetic datasets generated by \ours. At the beginning (iteration 0), the synthetic data is mostly random. As the algorithm progresses, the synthetic data gradually aligns with the private data distribution. 

\begin{figure}[!htp]
    \centering
    \includegraphics[width=\linewidth]{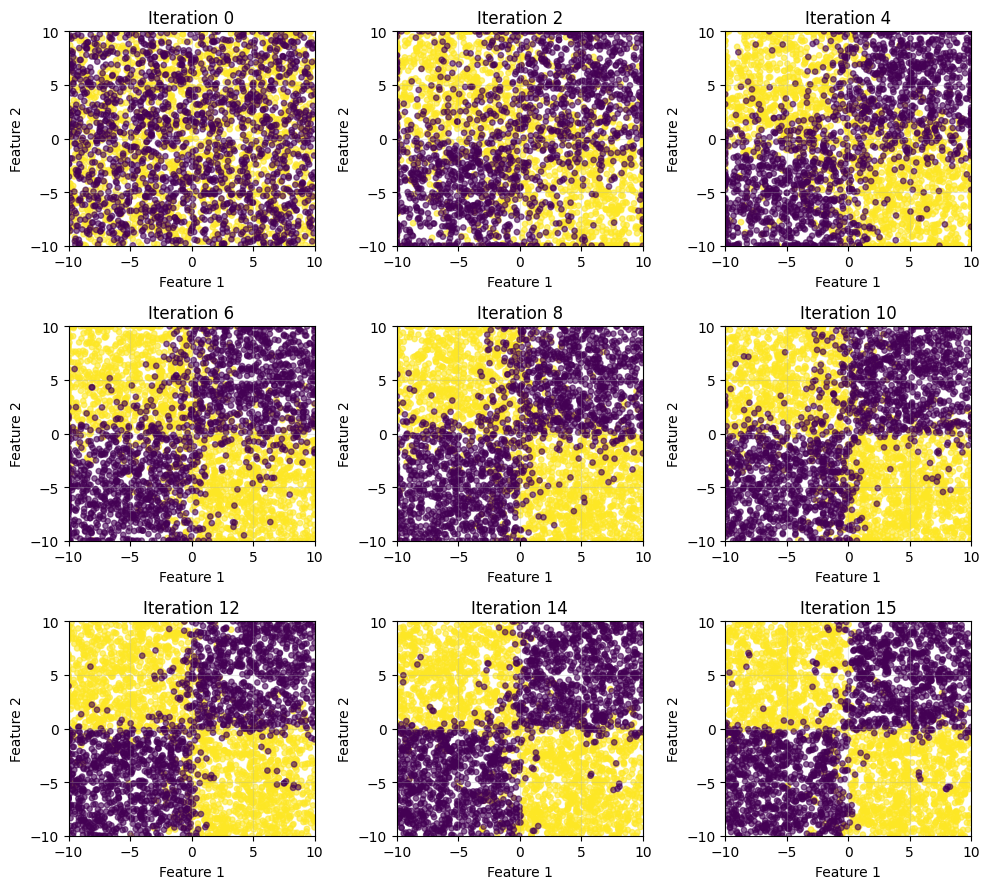}
    \caption{Synthetic Datasets generated by \ours over iterations for the XOR dataset with 2 features.}
    \label{fig:app-pe-xor}
\end{figure}

\subsection{SCM Simulation Datasets}
\label{app:scm-simulation}

Table~\ref{tab:scm-table} presents the performance of all methods on the SCM simulation datasets. \ours consistently outperforms all baselines for the downstream utility metrics. Table~\ref{tab:scm-table-wd} presents the fidelity metrics. For the fidelity metrics, \ours offers the best performance for the NN and RFF datasets, while AIM achieves the best for the Tree dataset. In the embedding space, \ours achieves the best precision, demonstrating that the synthetic samples generated by \ours are indeed close to the real data at the representation level. Overall, these results indicate the effectiveness of \ours in capturing high-order correlations.

\begin{table*}[!htp]
    \centering
    \resizebox{\textwidth}{!}{
    \begin{tabular}{l|l|cc|ccc|cc}
        \multirow{2}{*}{\textbf{Prior}} & \multirow{2}{*}{\textbf{Method}} 
        & \multicolumn{2}{c|}{\textbf{ML Downstream} ($\uparrow$)} 
        & \multicolumn{3}{c|}{\textbf{Fidelity} ($\downarrow$)} 
        & \multicolumn{2}{c}{\textbf{Embedding} ($\uparrow$)} \\
        \cline{3-9}
        & & Accuracy & AUC Score 
          & 5-WD & 6-WD & 7-WD  
          & Precision & Recall \\
    \hline
    \multirow{8}{*}{\small Tree } & \textit{UB} & \textit{81.41 {\tiny $\pm$ 0.09}} & \textit{90.17 {\tiny $\pm$ 0.45}} & \textit{0.126 {\tiny $\pm$ 0.004}} & \textit{0.164 {\tiny $\pm$ 0.005}} & \textit{0.202 {\tiny $\pm$ 0.005}} & \textit{98.05 {\tiny $\pm$ 0.22}} & \textit{97.74 {\tiny $\pm$ 0.09}} \\
    & PrivSyn & 51.04 {\tiny $\pm$ 0.00} & - & 0.182 {\tiny $\pm$ 0.008} & 0.235 {\tiny $\pm$ 0.007} & 0.288 {\tiny $\pm$ 0.007} & 65.61 {\tiny $\pm$ 0.98} & 98.08 {\tiny $\pm$ 0.17} \\
    & PrivMRF & \underline{65.68 {\tiny $\pm$ 0.52}} & \underline{71.26 {\tiny $\pm$ 0.82}} & 0.139 {\tiny $\pm$ 0.001} & 0.178 {\tiny $\pm$ 0.001} & 0.216 {\tiny $\pm$ 0.000} & 92.11 {\tiny $\pm$ 0.61} & \textbf{98.34 {\tiny $\pm$ 0.13}} \\
    & GEM & 56.66 {\tiny $\pm$ 0.67} & 60.47 {\tiny $\pm$ 0.47} & 0.187 {\tiny $\pm$ 0.008} & 0.241 {\tiny $\pm$ 0.008} & 0.295 {\tiny $\pm$ 0.008} & 59.65 {\tiny $\pm$ 1.15} & 97.92 {\tiny $\pm$ 0.18} \\
    & RAP++ & 64.77 {\tiny $\pm$ 1.09} & 69.77 {\tiny $\pm$ 1.49} & 0.190 {\tiny $\pm$ 0.013} & 0.234 {\tiny $\pm$ 0.015} & 0.278 {\tiny $\pm$ 0.015} & 93.21 {\tiny $\pm$ 1.11} & 23.94 {\tiny $\pm$ 3.32} \\
    & PrivGSD & 61.99 {\tiny $\pm$ 0.37} & 66.22 {\tiny $\pm$ 0.85} & 0.141 {\tiny $\pm$ 0.011} & 0.181 {\tiny $\pm$ 0.011} & 0.221 {\tiny $\pm$ 0.011} & 84.79 {\tiny $\pm$ 0.77} & 91.74 {\tiny $\pm$ 0.28} \\
    & AIM & 65.40 {\tiny $\pm$ 0.26} & 71.19 {\tiny $\pm$ 0.10} & \textbf{0.128 {\tiny $\pm$ 0.009}} & \textbf{0.167 {\tiny $\pm$ 0.009}} & \textbf{0.206 {\tiny $\pm$ 0.009}} & \underline{93.47 {\tiny $\pm$ 0.45}} & \underline{98.33 {\tiny $\pm$ 0.04}} \\
    & \ours & \textbf{68.78 {\tiny $\pm$ 0.30}} & \textbf{75.24 {\tiny $\pm$ 0.36}} & \underline{0.132 {\tiny $\pm$ 0.008}} & \underline{0.170 {\tiny $\pm$ 0.008}} & \underline{0.208 {\tiny $\pm$ 0.009}} & \textbf{98.63 {\tiny $\pm$ 0.31}} & 79.61 {\tiny $\pm$ 1.33} \\
    \hline
\multirow{8}{*}{NN} & \textit{UB} & \textit{96.58 {\tiny $\pm$ 0.10}} & \textit{96.58 {\tiny $\pm$ 0.10}} & \textit{0.119 {\tiny $\pm$ 0.012}} & \textit{0.156 {\tiny $\pm$ 0.012}} & \textit{0.192 {\tiny $\pm$ 0.012}} & \textit{98.02 {\tiny $\pm$ 0.16}} & \textit{97.77 {\tiny $\pm$ 0.05}} \\
    & PrivSyn & 51.97 {\tiny $\pm$ 16.18} & 50.59 {\tiny $\pm$ 22.57} & 0.204 {\tiny $\pm$ 0.010} & 0.255 {\tiny $\pm$ 0.009} & 0.306 {\tiny $\pm$ 0.008}  & 66.31 {\tiny $\pm$ 0.08} & 97.93 {\tiny $\pm$ 0.36} \\
     & PrivMRF & 84.78 {\tiny $\pm$ 0.71} & 92.75 {\tiny $\pm$ 0.77} & 0.131 {\tiny $\pm$ 0.011} & 0.171 {\tiny $\pm$ 0.013} & 0.211 {\tiny $\pm$ 0.014} & 92.89 {\tiny $\pm$ 0.29} & \textbf{98.41 {\tiny $\pm$ 0.25}} \\
    & GEM & 74.26 {\tiny $\pm$ 0.81} & 82.61 {\tiny $\pm$ 0.47} & 0.219 {\tiny $\pm$ 0.026} & 0.275 {\tiny $\pm$ 0.024} & 0.330 {\tiny $\pm$ 0.023} & 57.55 {\tiny $\pm$ 0.76} & 98.24 {\tiny $\pm$ 0.13} \\
    & RAP++ & 85.16 {\tiny $\pm$ 1.21} & 93.06 {\tiny $\pm$ 1.10} & 0.179 {\tiny $\pm$ 0.018} & 0.221 {\tiny $\pm$ 0.020} & 0.263 {\tiny $\pm$ 0.022} & \underline{94.00 {\tiny $\pm$ 0.26}} & 21.81 {\tiny $\pm$ 2.22} \\
    & PrivGSD & 82.47 {\tiny $\pm$ 0.40} & 90.86 {\tiny $\pm$ 0.49} & 0.147 {\tiny $\pm$ 0.004} & 0.187 {\tiny $\pm$ 0.004} & 0.227 {\tiny $\pm$ 0.005} & 84.96 {\tiny $\pm$ 0.42} & 91.17 {\tiny $\pm$ 0.56} \\
    & AIM & \underline{85.23 {\tiny $\pm$ 0.40}} & \underline{93.26 {\tiny $\pm$ 0.57}} & \underline{0.129 {\tiny $\pm$ 0.014}} & \underline{0.168 {\tiny $\pm$ 0.016}} & \underline{0.207 {\tiny $\pm$ 0.017}} & 93.10 {\tiny $\pm$ 0.26} & \underline{98.36 {\tiny $\pm$ 0.23}} \\ 
    & \ours & \textbf{89.36 {\tiny $\pm$ 0.42}} & \textbf{96.37 {\tiny $\pm$ 0.25}} & \textbf{0.125 {\tiny $\pm$ 0.010}} & \textbf{0.163 {\tiny $\pm$ 0.010}} & \textbf{0.199 {\tiny $\pm$ 0.010}} & \textbf{98.57 {\tiny $\pm$ 0.39}} & 81.27 {\tiny $\pm$ 1.75} \\ \hline
\multirow{8}{*}{RFF} & \textit{UB} & \textit{81.12 {\tiny $\pm$ 0.19}} & \textit{81.12 {\tiny $\pm$ 0.19}} & \textit{0.113 {\tiny $\pm$ 0.003}} & \textit{0.152 {\tiny $\pm$ 0.003}} & \textit{0.191 {\tiny $\pm$ 0.002}} &\textit{98.12 {\tiny $\pm$ 0.18}} & \textit{97.55 {\tiny $\pm$ 0.16}} \\ 
    & PrivSyn & 50.96 {\tiny $\pm$ 2.61} & 50.56 {\tiny $\pm$ 4.13} & 0.173 {\tiny $\pm$ 0.005} & 0.227 {\tiny $\pm$ 0.005} & 0.281 {\tiny $\pm$ 0.005} & 66.83 {\tiny $\pm$ 0.59} & 97.77 {\tiny $\pm$ 0.44} \\
     & PrivMRF & \underline{60.11 {\tiny $\pm$ 0.15}} & \underline{63.68 {\tiny $\pm$ 0.29}} & \underline{0.122 {\tiny $\pm$ 0.005}} & \underline{0.162 {\tiny $\pm$ 0.005}} & \underline{0.202 {\tiny $\pm$ 0.005}} & 93.06 {\tiny $\pm$ 0.16} & \textbf{98.55 {\tiny $\pm$ 0.07}} \\
    & GEM & 53.76 {\tiny $\pm$ 0.99} & 55.53 {\tiny $\pm$ 0.95} & 0.187 {\tiny $\pm$ 0.019} & 0.243 {\tiny $\pm$ 0.019} & 0.298 {\tiny $\pm$ 0.018} & 57.80 {\tiny $\pm$ 3.26} & \underline{98.51 {\tiny $\pm$ 0.21}} \\
    & RAP++ & 59.00 {\tiny $\pm$ 1.32} & 62.13 {\tiny $\pm$ 1.72} & 0.178 {\tiny $\pm$ 0.006} & 0.221 {\tiny $\pm$ 0.005} & 0.263 {\tiny $\pm$ 0.005} & 0.270 {\tiny $\pm$ 0.007} & 25.84 {\tiny $\pm$ 2.99} \\
    & PrivGSD & 57.08 {\tiny $\pm$ 0.07} & 59.70 {\tiny $\pm$ 0.15} & 0.137 {\tiny $\pm$ 0.004} & 0.179 {\tiny $\pm$ 0.004} & 0.220 {\tiny $\pm$ 0.003} & 85.13 {\tiny $\pm$ 0.24} & 90.80 {\tiny $\pm$ 0.79} \\
    & AIM & 59.60 {\tiny $\pm$ 1.32} & 62.76 {\tiny $\pm$ 1.41} & 0.125 {\tiny $\pm$ 0.004} & 0.165 {\tiny $\pm$ 0.005} & 0.204 {\tiny $\pm$ 0.005} & \underline{93.41 {\tiny $\pm$ 0.31}} & 98.50 {\tiny $\pm$ 0.16} \\
    & \ours & \textbf{64.10 {\tiny $\pm$ 0.40}} & \textbf{69.19 {\tiny $\pm$ 0.45}} & \textbf{0.120 {\tiny $\pm$ 0.008}} & \textbf{0.160 {\tiny $\pm$ 0.008}} & \textbf{0.198 {\tiny $\pm$ 0.008}} & \textbf{98.57 {\tiny $\pm$ 0.39}} & 81.27 {\tiny $\pm$ 1.75} \\
    \end{tabular}
    }
    \caption{The experiment is configured with $\epsilon = 1.0$. The degree hyperparameter of baselines varies from 2 to 5, the best-performing results of the baselines are reported. The best and second-best results are highlighted in \textbf{bold} and \underline{underline}, respectively.}
    \label{tab:scm-table}
\end{table*}
\begin{table*}[!htp]
    \centering
    \resizebox{\textwidth}{!}{
    \begin{tabular}{l|l|ccccccc}
        \multirow{2}{*}{\textbf{Prior}} & \multirow{2}{*}{\textbf{Method}} 
        & \multicolumn{7}{c}{\textbf{Fidelity} ($\downarrow$)}  \\
        \cline{3-9}
        & & 1-WD & 2-WD & 3-WD & 4-WD
          & 5-WD & 6-WD & 7-WD  \\
    \hline
    \multirow{8}{*}{\small Tree} & \textit{UB} & \textit{0.026 {\tiny $\pm$ 0.002}} & \textit{0.046 {\tiny $\pm$ 0.003}} & \textit{0.070 {\tiny $\pm$ 0.004}} & \textit{0.089 {\tiny $\pm$ 0.004}} & \textit{0.126 {\tiny $\pm$ 0.004}} & \textit{0.164 {\tiny $\pm$ 0.005}} & \textit{0.202 {\tiny $\pm$ 0.005}} \\
    & PrivSyn & \underline{0.030 {\tiny $\pm$ 0.004}} & 0.064 {\tiny $\pm$ 0.008} & 0.103 {\tiny $\pm$ 0.009} & 0.131 {\tiny $\pm$ 0.008} & 0.182 {\tiny $\pm$ 0.008} & 0.235 {\tiny $\pm$ 0.007} & 0.288 {\tiny $\pm$ 0.007} \\
    & PrivMRF & \underline{0.030 {\tiny $\pm$ 0.003}} & \underline{0.051 {\tiny $\pm$ 0.005}} & \underline{0.076 {\tiny $\pm$ 0.006}} & 0.102 {\tiny $\pm$ 0.002} & 0.139 {\tiny $\pm$ 0.001} & 0.178 {\tiny $\pm$ 0.001} & 0.216 {\tiny $\pm$ 0.000} \\
    & GEM & 0.036 {\tiny $\pm$ 0.006} & 0.070 {\tiny $\pm$ 0.009} & 0.109 {\tiny $\pm$ 0.011} & 0.135 {\tiny $\pm$ 0.009} & 0.187 {\tiny $\pm$ 0.008} & 0.241 {\tiny $\pm$ 0.008} & 0.295 {\tiny $\pm$ 0.008} \\
    & RAP++ & 0.052 {\tiny $\pm$ 0.001} & 0.088 {\tiny $\pm$ 0.004} & 0.124 {\tiny $\pm$ 0.008} & 0.146 {\tiny $\pm$ 0.011} & 0.190 {\tiny $\pm$ 0.013} & 0.234 {\tiny $\pm$ 0.015} & 0.278 {\tiny $\pm$ 0.015} \\
    & PrivGSD & 0.036 {\tiny $\pm$ 0.007} & 0.060 {\tiny $\pm$ 0.011} & 0.084 {\tiny $\pm$ 0.012} & 0.103 {\tiny $\pm$ 0.011} & 0.141 {\tiny $\pm$ 0.011} & 0.181 {\tiny $\pm$ 0.011} & 0.221 {\tiny $\pm$ 0.011} \\
    & AIM & \textbf{0.028 {\tiny $\pm$ 0.005}} & \textbf{0.048 {\tiny $\pm$ 0.009}} & \textbf{0.072 {\tiny $\pm$ 0.011}} & \textbf{0.091 {\tiny $\pm$ 0.008}} & \textbf{0.128 {\tiny $\pm$ 0.009}} & \textbf{0.167 {\tiny $\pm$ 0.009}} & \textbf{0.206 {\tiny $\pm$ 0.009}} \\
    & \ours & \underline{0.030 {\tiny $\pm$ 0.004}} & \underline{0.051 {\tiny $\pm$ 0.007}} & \underline{0.076 {\tiny $\pm$ 0.009}} & \underline{0.095 {\tiny $\pm$ 0.007}} & \underline{0.132 {\tiny $\pm$ 0.008}} & \underline{0.170 {\tiny $\pm$ 0.008}} & \underline{0.208 {\tiny $\pm$ 0.009}} \\
    \hline
\multirow{8}{*}{NN} & \textit{UB} & \textit{0.024 {\tiny $\pm$ 0.006}} & \textit{0.042 {\tiny $\pm$ 0.009}} & \textit{0.065 {\tiny $\pm$ 0.011}} & \textit{0.083 {\tiny $\pm$ 0.012}} & \textit{0.119 {\tiny $\pm$ 0.012}} & \textit{0.156 {\tiny $\pm$ 0.012}} & \textit{0.192 {\tiny $\pm$ 0.012}} \\
    & PrivSyn & \underline{0.028 {\tiny $\pm$ 0.004}} & 0.065 {\tiny $\pm$ 0.008} & 0.108 {\tiny $\pm$ 0.010} & 0.149 {\tiny $\pm$ 0.009} & 0.201 {\tiny $\pm$ 0.011} & 0.256 {\tiny $\pm$ 0.013} & 0.309 {\tiny $\pm$ 0.014}  \\
    & PrivMRF & 0.030 {\tiny $\pm$ 0.003} & 0.051 {\tiny $\pm$ 0.006} & 0.076 {\tiny $\pm$ 0.008} & \underline{0.093 {\tiny $\pm$ 0.009}} & 0.131 {\tiny $\pm$ 0.011} & 0.171 {\tiny $\pm$ 0.013} & 0.211 {\tiny $\pm$ 0.014} \\
    & GEM & 0.037 {\tiny $\pm$ 0.010} & 0.072 {\tiny $\pm$ 0.012} & 0.115 {\tiny $\pm$ 0.015} & 0.165 {\tiny $\pm$ 0.029} & 0.219 {\tiny $\pm$ 0.026} & 0.275 {\tiny $\pm$ 0.024} & 0.330 {\tiny $\pm$ 0.023} \\
    & RAP++ & 0.050 {\tiny $\pm$ 0.006} & 0.086 {\tiny $\pm$ 0.011} & 0.119 {\tiny $\pm$ 0.016} & 0.137 {\tiny $\pm$ 0.016} & 0.179 {\tiny $\pm$ 0.018} & 0.221 {\tiny $\pm$ 0.020} & 0.263 {\tiny $\pm$ 0.022} \\
    & PrivGSD & 0.036 {\tiny $\pm$ 0.003} & 0.058 {\tiny $\pm$ 0.004} & 0.083 {\tiny $\pm$ 0.004} & 0.108 {\tiny $\pm$ 0.006} & 0.147 {\tiny $\pm$ 0.004} & 0.187 {\tiny $\pm$ 0.004} & 0.227 {\tiny $\pm$ 0.005} \\
    & AIM & \textbf{0.024 {\tiny $\pm$ 0.002}} & \textbf{0.042 {\tiny $\pm$ 0.003}} & \textbf{0.064 {\tiny $\pm$ 0.003}} & 0.094 {\tiny $\pm$ 0.002} & \underline{0.129 {\tiny $\pm$ 0.002}} & \underline{0.165 {\tiny $\pm$ 0.003}} & \underline{0.201 {\tiny $\pm$ 0.003}} \\
    & \ours & \underline{0.028 {\tiny $\pm$ 0.005}} & \underline{0.048 {\tiny $\pm$ 0.008}} & \underline{0.071 {\tiny $\pm$ 0.010}} & \textbf{0.089 {\tiny $\pm$ 0.010}} & \textbf{0.125 {\tiny $\pm$ 0.010}} & \textbf{0.163 {\tiny $\pm$ 0.010}} & \textbf{0.199 {\tiny $\pm$ 0.010}} \\
    \hline
\multirow{8}{*}{RFF} & \textit{UB} & \textit{0.020 {\tiny $\pm$ 0.004}} & \textit{0.035 {\tiny $\pm$ 0.005}} & \textit{0.056 {\tiny $\pm$ 0.005}} & \textit{0.075 {\tiny $\pm$ 0.004}} & \textit{0.113 {\tiny $\pm$ 0.003}} & \textit{0.152 {\tiny $\pm$ 0.003}} & \textit{0.191 {\tiny $\pm$ 0.002}} \\ 
    & PrivSyn & 0.026 {\tiny $\pm$ 0.004} & 0.058 {\tiny $\pm$ 0.005} & 0.097 {\tiny $\pm$ 0.006} & 0.120 {\tiny $\pm$ 0.006} & 0.173 {\tiny $\pm$ 0.005} & 0.227 {\tiny $\pm$ 0.005} & 0.281 {\tiny $\pm$ 0.005} \\
    & PrivMRF & \underline{0.023 {\tiny $\pm$ 0.005}} & \underline{0.041 {\tiny $\pm$ 0.008}} & \underline{0.064 {\tiny $\pm$ 0.008}} & \underline{0.084 {\tiny $\pm$ 0.005}} & \underline{0.122 {\tiny $\pm$ 0.005}} & \underline{0.162 {\tiny $\pm$ 0.005}} & \underline{0.202 {\tiny $\pm$ 0.005}} \\
    & GEM & 0.036 {\tiny $\pm$ 0.005} & 0.068 {\tiny $\pm$ 0.008} & 0.105 {\tiny $\pm$ 0.010} & 0.135 {\tiny $\pm$ 0.020} & 0.187 {\tiny $\pm$ 0.019} & 0.243 {\tiny $\pm$ 0.019} & 0.298 {\tiny $\pm$ 0.018} \\
    & RAP++ & 0.050 {\tiny $\pm$ 0.005} & 0.084 {\tiny $\pm$ 0.008} & 0.116 {\tiny $\pm$ 0.009} & 0.136 {\tiny $\pm$ 0.007} & 0.178 {\tiny $\pm$ 0.006} & 0.221 {\tiny $\pm$ 0.005} & 0.263 {\tiny $\pm$ 0.005} \\
    & PrivGSD & 0.032 {\tiny $\pm$ 0.002} & 0.053 {\tiny $\pm$ 0.002} & 0.078 {\tiny $\pm$ 0.002} & 0.098 {\tiny $\pm$ 0.005} & 0.137 {\tiny $\pm$ 0.004} & 0.179 {\tiny $\pm$ 0.004} & 0.220 {\tiny $\pm$ 0.003} \\
    & AIM & 0.026 {\tiny $\pm$ 0.002} & 0.045 {\tiny $\pm$ 0.002} & 0.067 {\tiny $\pm$ 0.002} & 0.088 {\tiny $\pm$ 0.004} & 0.125 {\tiny $\pm$ 0.004} & 0.165 {\tiny $\pm$ 0.005} & 0.204 {\tiny $\pm$ 0.005} \\
    & \ours & \textbf{0.022 {\tiny $\pm$ 0.006}} & \textbf{0.039 {\tiny $\pm$ 0.009}} & \textbf{0.061 {\tiny $\pm$ 0.009}} & \textbf{0.081 {\tiny $\pm$ 0.008}} & \textbf{0.120 {\tiny $\pm$ 0.008}} & \textbf{0.160 {\tiny $\pm$ 0.008}} & \textbf{0.198 {\tiny $\pm$ 0.008}} \\
    \end{tabular}
    }
    \caption{The experiment is configured with $\epsilon = 1.0$. The degree hyperparameter of baselines varies from 2 to 5
    . The best and second-best results are highlighted in \textbf{bold} and \underline{underline}, respectively.}
    \label{tab:scm-table-wd}
\end{table*}

\subsection{Real-world Datasets with High-Order Correlations}
\label{app:high-order-real}
Table~\ref{tab:high-order-real-wd} presents the Wasserstein distances across low- to high-order ones. Consistently, \ours outperforms the baselines on high-order Wasserstein distances (5-WD, 6-WD, and 7-WD). This indicates that \ours is more effective in capturing high-order correlations.

\begin{table*}[!htp]
    \centering
    \resizebox{\textwidth}{!}{
    \begin{tabular}{l|l|ccccccc}
        \multirow{2}{*}{\textbf{Dataset}} & \multirow{2}{*}{\textbf{Method}} 
        & \multicolumn{6}{c}{\textbf{Fidelity} ($\downarrow$)} \\
        \cline{3-9}
        & & 1-WD & 2-WD & 3-WD 
          & 4-WD & 5-WD & 6-WD & 7-WD\\
    \hline
    \multirow{7}{*}{\small Artificial} & \textit{UB} & \textit{0.013 {\tiny $\pm$ 0.005}} & \textit{0.027 {\tiny $\pm$ 0.008}} & \textit{0.046 {\tiny $\pm$ 0.010}} & \textit{0.068 {\tiny $\pm$ 0.010}} & \textit{0.088 {\tiny $\pm$ 0.011}} & \textit{0.107 {\tiny $\pm$ 0.011}} & \textit{0.124 {\tiny $\pm$ 0.012}} \\
    \multirow{7}{*}{\small Characters}
    & PrivSyn & 0.016 {\tiny $\pm$ 0.003} & 0.053 {\tiny $\pm$ 0.004} & 0.103 {\tiny $\pm$ 0.004} & 0.162 {\tiny $\pm$ 0.004} & 0.224 {\tiny $\pm$ 0.005} & 0.287 {\tiny $\pm$ 0.005} & 0.347 {\tiny $\pm$ 0.006} \\ 
    & PrivMRF & \underline{0.013 {\tiny $\pm$ 0.002}} & 0.050 {\tiny $\pm$ 0.003} & 0.099 {\tiny $\pm$ 0.003} & 0.157 {\tiny $\pm$ 0.003} & 0.218 {\tiny $\pm$ 0.004} & 0.279 {\tiny $\pm$ 0.004} & 0.337 {\tiny $\pm$ 0.004} \\
    & GEM & 0.091 {\tiny $\pm$ 0.007} & 0.152 {\tiny $\pm$ 0.010} & 0.211 {\tiny $\pm$ 0.012} & 0.273 {\tiny $\pm$ 0.013} & 0.337 {\tiny $\pm$ 0.014} & 0.402 {\tiny $\pm$ 0.015} & 0.465 {\tiny $\pm$ 0.015} \\
    & RAP++ & 0.039 {\tiny $\pm$ 0.010} & 0.072 {\tiny $\pm$ 0.015} & 0.111 {\tiny $\pm$ 0.017} & 0.156 {\tiny $\pm$ 0.019} & 0.201 {\tiny $\pm$ 0.021} & 0.243 {\tiny $\pm$ 0.023} & 0.283 {\tiny $\pm$ 0.024} \\
    & PrivGSD & 0.026 {\tiny $\pm$ 0.000} & \underline{0.047 {\tiny $\pm$ 0.001}} & \textbf{0.078 {\tiny $\pm$ 0.001}} & \textbf{0.118 {\tiny $\pm$ 0.002}} & 0.161 {\tiny $\pm$ 0.002} & 0.204 {\tiny $\pm$ 0.002} & 0.245 {\tiny $\pm$ 0.002} \\
    & AIM & \textbf{0.011 {\tiny $\pm$ 0.000}} & \textbf{0.040 {\tiny $\pm$ 0.001}} & \underline{0.082 {\tiny $\pm$ 0.001}} & 0.134 {\tiny $\pm$ 0.001} & 0.191 {\tiny $\pm$ 0.003} & 0.247 {\tiny $\pm$ 0.002} & 0.301 {\tiny $\pm$ 0.002} \\
    & \ours & 0.029 {\tiny $\pm$ 0.006} & 0.056 {\tiny $\pm$ 0.009} & 0.087 {\tiny $\pm$ 0.010} & \underline{0.123 {\tiny $\pm$ 0.011}} & \textbf{0.158 {\tiny $\pm$ 0.011}} & \textbf{0.191 {\tiny $\pm$ 0.012}} & \textbf{0.220 {\tiny $\pm$ 0.013}}  \\
    \hline
\multirow{7}{*}{} & \textit{UB} & \textit{0.011 {\tiny $\pm$ 0.001}} & \textit{0.024 {\tiny $\pm$ 0.001}} & \textit{0.046 {\tiny $\pm$ 0.001}} & \textit{0.075 {\tiny $\pm$ 0.001}} & \textit{0.108 {\tiny $\pm$ 0.001}} & \textit{0.142 {\tiny $\pm$ 0.001}} & \textit{0.176 {\tiny $\pm$ 0.001}} \\
    \multirow{7}{*}{\small Person} & PrivSyn & 0.010 {\tiny $\pm$ 0.002} & 0.060 {\tiny $\pm$ 0.002} & 0.134 {\tiny $\pm$ 0.003} & 0.218 {\tiny $\pm$ 0.003} & 0.303 {\tiny $\pm$ 0.003} & 0.385 {\tiny $\pm$ 0.004} & 0.463 {\tiny $\pm$ 0.004} \\
    \multirow{7}{*}{\small Activity} & PrivMRF & \textbf{0.008 {\tiny $\pm$ 0.001}} & \underline{0.025 {\tiny $\pm$ 0.001}} & 0.055 {\tiny $\pm$ 0.001} & 0.094 {\tiny $\pm$ 0.002} & 0.138 {\tiny $\pm$ 0.003} & 0.185 {\tiny $\pm$ 0.004} & 0.233 {\tiny $\pm$ 0.004} \\
    & GEM & 0.065 {\tiny $\pm$ 0.003} & 0.114 {\tiny $\pm$ 0.001} & 0.164 {\tiny $\pm$ 0.002} & 0.218 {\tiny $\pm$ 0.003} & 0.275 {\tiny $\pm$ 0.005} & 0.333 {\tiny $\pm$ 0.006} & 0.392 {\tiny $\pm$ 0.007} \\
    & RAP++ & 0.034 {\tiny $\pm$ 0.001} & 0.063 {\tiny $\pm$ 0.001} & 0.097 {\tiny $\pm$ 0.002} & 0.135 {\tiny $\pm$ 0.002} & 0.176 {\tiny $\pm$ 0.003} & 0.216 {\tiny $\pm$ 0.003} & 0.256 {\tiny $\pm$ 0.004} \\
    & PrivGSD & 0.032 {\tiny $\pm$ 0.002} & 0.057 {\tiny $\pm$ 0.002} & 0.088 {\tiny $\pm$ 0.001} & 0.124 {\tiny $\pm$ 0.001} & 0.161 {\tiny $\pm$ 0.001} & 0.199 {\tiny $\pm$ 0.001} & 0.237 {\tiny $\pm$ 0.002} \\
    & AIM & \underline{0.009 {\tiny $\pm$ 0.001}} & \textbf{0.023 {\tiny $\pm$ 0.001}} & \textbf{0.049 {\tiny $\pm$ 0.001}} & \underline{0.085 {\tiny $\pm$ 0.002}} & 0.125 {\tiny $\pm$ 0.002} & 0.168 {\tiny $\pm$ 0.002} & 0.213 {\tiny $\pm$ 0.002} \\
    & \ours & 0.012 {\tiny $\pm$ 0.003} & 0.026 {\tiny $\pm$ 0.003} & \underline{0.050 {\tiny $\pm$ 0.003}} & \textbf{0.082 {\tiny $\pm$ 0.002}} & \textbf{0.116 {\tiny $\pm$ 0.002}} & \textbf{0.150 {\tiny $\pm$ 0.002}} & \textbf{0.183 {\tiny $\pm$ 0.001}} \\
    \end{tabular}
    }
    \caption{$\epsilon = 1.0$. Additional results on fidelity.}
    \label{tab:high-order-real-wd}
\end{table*}

\subsection{Real-world Datasets with Low-Order Correlations}
\label{app:low-order-real}
We further evaluate the methods on well-known real-world datasets with low-order correlations. Compared to Adult, Breast Cancer is a more challenging dataset with 30 features and only $\sim$500 samples. In this setting, we configure \ours to run for 30 iterations generating 2K samples and 20 iterations generating 100 samples, respectively for the Adult and Breast Cancer datasets. The results are presented in Table~\ref{tab:low-order-real}. Consistent with the prior evaluations~\citep{chen2025benchmark}, AIM offers the leading performance across most metrics. \ours remains competitive on the downstream utilities with only 1\% accuracy drop compared to AIM. For low-order correlations, the marginal-based methods are sufficient to capture the essential relationships between features and labels. This explains why AIM, RAP, GSD, and PrivMRF perform well on these datasets. Overall, these results indicate that while \ours is primarily designed for high-order correlations, it remains competitive on datasets dominated by low-order correlations.

\begin{table*}[!htp]
    \centering
    \resizebox{\textwidth}{!}{
    \begin{tabular}{l|l|cc|ccc|cc}
        \multirow{2}{*}{\textbf{Dataset}} & \multirow{2}{*}{\textbf{Method}} 
        & \multicolumn{2}{c|}{\textbf{ML Downstream} ($\uparrow$)} 
        & \multicolumn{3}{c|}{\textbf{Fidelity} ($\downarrow$)} 
        & \multicolumn{2}{c}{\textbf{Embedding} ($\uparrow$)} \\
        \cline{3-8}
        & & Accuracy & Macro F1 
          & 1-WD & 2-WD & 3-WD 
          & Precision & Recall \\
    \hline
    \multirow{8}{*}{\small Adult} & \textit{UB} & \textit{84.41 {\tiny $\pm$ 0.57}} & \textit{75.68 {\tiny $\pm$ 1.54}} & \textit{0.014 {\tiny $\pm$ 0.003}} & \textit{0.027 {\tiny $\pm$ 0.006}} & \textit{0.041 {\tiny $\pm$ 0.008}} & \textit{94.50 {\tiny $\pm$ 0.14}} & \textit{94.09 {\tiny $\pm$ 0.29}} \\
    & PrivSyn & 75.77 {\tiny $\pm$ 0.00} & 43.11 {\tiny $\pm$ 0.00} & \textbf{0.011 {\tiny $\pm$ 0.002}} & \underline{0.034 {\tiny $\pm$ 0.003}} & 0.061 {\tiny $\pm$ 0.003} & 45.34 {\tiny $\pm$ 1.54} & 89.34 {\tiny $\pm$ 0.17} \\
    & PrivMRF & \underline{83.15 {\tiny $\pm$ 0.43}} & \textbf{76.85 {\tiny $\pm$ 0.64}} & \underline{0.017 {\tiny $\pm$ 0.005}} & \textbf{0.033 {\tiny $\pm$ 0.009}} & \textbf{0.052 {\tiny $\pm$ 0.012}} & \underline{84.15 {\tiny $\pm$ 0.25}} & \textbf{93.74 {\tiny $\pm$ 0.21}} \\
    & GEM & 79.17 {\tiny $\pm$ 2.32} & 69.63 {\tiny $\pm$ 1.48} & 0.037 {\tiny $\pm$ 0.002} & 0.073 {\tiny $\pm$ 0.005} & 0.109 {\tiny $\pm$ 0.006} & 0.185 {\tiny $\pm$ 0.004} & 76.48 {\tiny $\pm$ 2.32} \\
    & RAP++ & 80.87 {\tiny $\pm$ 0.59} & 72.22 {\tiny $\pm$ 1.25} & 0.023 {\tiny $\pm$ 0.002} & 0.043 {\tiny $\pm$ 0.003} & 0.066 {\tiny $\pm$ 0.004} & 61.08 {\tiny $\pm$ 4.24} & 80.64 {\tiny $\pm$ 1.53} \\
    & PrivGSD & 82.09 {\tiny $\pm$ 0.11} & 75.90 {\tiny $\pm$ 0.43} & \underline{0.017 {\tiny $\pm$ 0.001}} & \textbf{0.033 {\tiny $\pm$ 0.002}} & \textbf{0.052 {\tiny $\pm$ 0.003}} & 74.45 {\tiny $\pm$ 0.47} & 80.81 {\tiny $\pm$ 0.41} \\
    & AIM & \textbf{83.36 {\tiny $\pm$ 0.41}} & \underline{76.10 {\tiny $\pm$ 1.41}} & \underline{0.017 {\tiny $\pm$ 0.001}} & \underline{0.034 {\tiny $\pm$ 0.001}} & \underline{0.053 {\tiny $\pm$ 0.001}} & \textbf{87.06 {\tiny $\pm$ 0.75}} & \underline{93.18 {\tiny $\pm$ 0.21}} \\
    & \ours & 82.22 {\tiny $\pm$ 0.51} & 73.66 {\tiny $\pm$ 0.87} & 0.049 {\tiny $\pm$ 0.004} & 0.086 {\tiny $\pm$ 0.007} & 0.118 {\tiny $\pm$ 0.010} & 34.27 {\tiny $\pm$ 1.57} & 77.25 {\tiny $\pm$ 7.42} \\  \hline
    \multirow{7}{*}{\small Breast} & \textit{UB} & \textit{97.68 {\tiny $\pm$ 1.64}} & \textit{97.56 {\tiny $\pm$ 1.73}} & \textit{0.078 {\tiny $\pm$ 0.006}} & \textit{0.130 {\tiny $\pm$ 0.011}} & \textit{0.175 {\tiny $\pm$ 0.016}} & \textit{97.48 {\tiny $\pm$ 0.73}} & \textit{95.64 {\tiny $\pm$ 0.78}} \\
    \multirow{7}{*}{\small Cancer}  & PrivSyn & 51.60 {\tiny $\pm$ 8.03} & 38.99 {\tiny $\pm$ 6.92} & 0.216 {\tiny $\pm$ 0.010} & 0.353 {\tiny $\pm$ 0.013} & 0.464 {\tiny $\pm$ 0.014} & 60.39 {\tiny $\pm$ 7.34} & \underline{21.78 {\tiny $\pm$ 7.84}} \\
    & PrivMRF & 60.41 {\tiny $\pm$ 3.37} & 37.63 {\tiny $\pm$ 1.33} & \underline{0.180 {\tiny $\pm$ 0.015}} & \underline{0.306 {\tiny $\pm$ 0.022}} & 0.410 {\tiny $\pm$ 0.027} & 51.01 {\tiny $\pm$ 11.29} & 16.83 {\tiny $\pm$ 8.23} \\
    & GEM & 50.74 {\tiny $\pm$ 13.88} & 44.91 {\tiny $\pm$ 12.27} & 0.181 {\tiny $\pm$ 0.006} & \underline{0.306 {\tiny $\pm$ 0.007}} & \underline{0.409 {\tiny $\pm$ 0.008}} & \textbf{69.60 {\tiny $\pm$ 11.83}} & \textbf{28.64 {\tiny $\pm$ 8.97}} \\
    & RAP++ & 84.81 {\tiny $\pm$ 4.43} & 83.97 {\tiny $\pm$ 4.45} & 0.227 {\tiny $\pm$ 0.023} & 0.369 {\tiny $\pm$ 0.032} & 0.483 {\tiny $\pm$ 0.039} & 64.62 {\tiny $\pm$ 3.48} & 9.63 {\tiny $\pm$ 4.71} \\
    & PrivGSD & 60.02 {\tiny $\pm$ 3.13} & 37.49 {\tiny $\pm$ 1.24} & 0.235 {\tiny $\pm$ 0.013} & 0.379 {\tiny $\pm$ 0.018} & 0.493 {\tiny $\pm$ 0.021} & 60.30 {\tiny $\pm$ 5.38} & 21.27 {\tiny $\pm$ 1.91} \\
    & AIM & \textbf{89.25 {\tiny $\pm$ 4.75}} & \textbf{87.82 {\tiny $\pm$ 6.17}} & 0.198 {\tiny $\pm$ 0.009} & 0.332 {\tiny $\pm$ 0.010} & 0.441 {\tiny $\pm$ 0.011} & 68.17 {\tiny $\pm$ 6.03} & 12.65 {\tiny $\pm$ 4.86} \\
    & \ours & \underline{88.48 {\tiny $\pm$ 3.53}} & \underline{87.01 {\tiny $\pm$ 4.74}} & \textbf{0.162 {\tiny $\pm$ 0.007}} & \textbf{0.272 {\tiny $\pm$ 0.010}} & \textbf{0.366 {\tiny $\pm$ 0.012}} & \underline{69.02 {\tiny $\pm$ 6.14}} & 15.24 {\tiny $\pm$ 3.11}
    \end{tabular}
    }
    \caption{Comparison on low-order real-world datasets under $\epsilon=1$. The best and second-best results are highlighted in \textbf{bold} and \underline{underline}, respectively. UB refers to the upper bound performance trained on real data.}
    \label{tab:low-order-real}
\end{table*}

\subsection{Compute Efficiency}
\label{app:compute_efficiency}

\begin{figure}[!htp]
    \centering
    \includegraphics[width=0.4\linewidth]{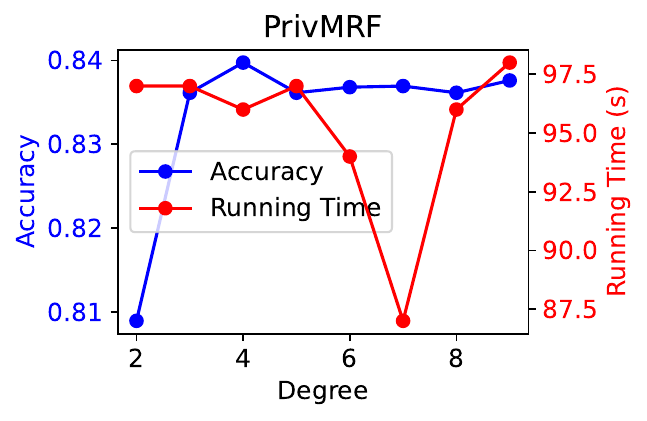} \hspace{0.5cm}
    \includegraphics[width=0.4\linewidth]{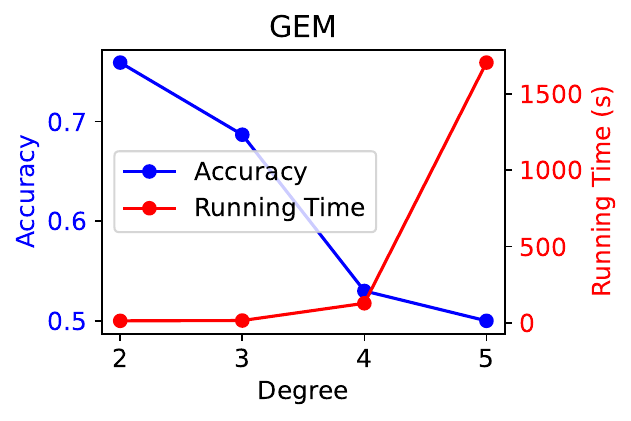} \\
    \includegraphics[width=0.4\linewidth]{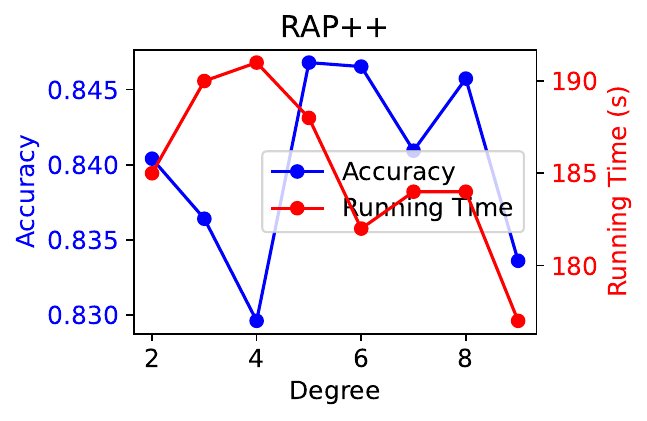} \hspace{0.5cm}
    \includegraphics[width=0.4\linewidth]{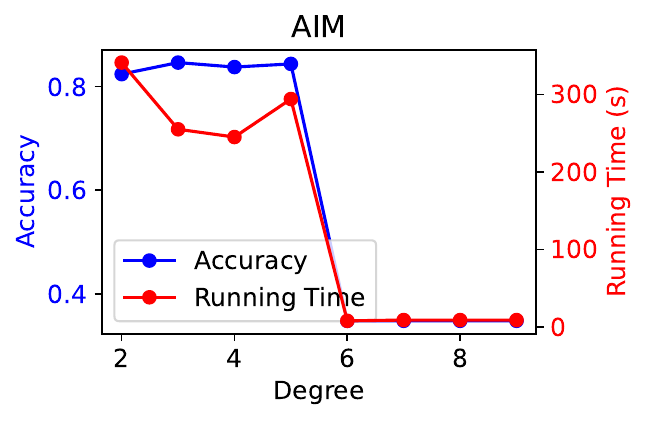}\\
    \includegraphics[width=0.4\linewidth]{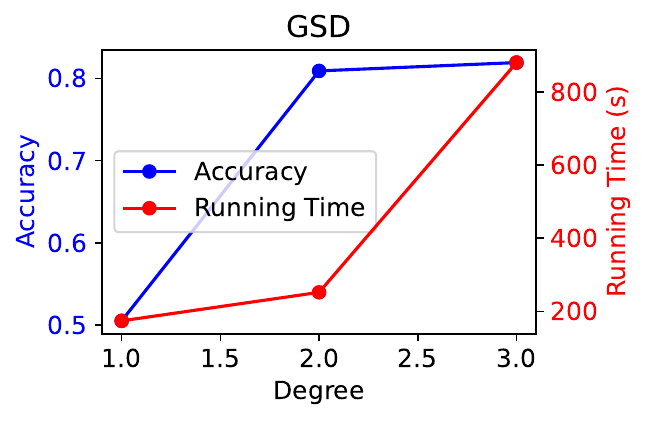}
    \caption{Running Time and Test Accuracy of the baselines while varying the degree of marginal queries. The trivial accuracy (random guessing) is 50\%.}
    \label{fig:add-baseline-efficiency}
\end{figure}

We present the running time and test accuracy of the baselines while varying the degree of marginal queries in Figure~\ref{fig:add-baseline-efficiency}. Generally, increasing the degree of marginal queries does not significantly improve the accuracy. As the degree increases, the number of queries grows exponentially. For PrivMRF, the test performance peaks at the degree of 4 at 84\% and remains stable at 83.5\%. For GEM, the accuracy drops as the degree increases, due to the high noise to answer the large number of queries. The running time of GEM significantly increases at the degree of 5. For RAP++, the accuracy slightly increases from 84 to 84.45 at the degree of 4 then drops as the degree increases. The running time of RAP++ is not significantly affected by the degree of queries. For AIM, the accuracy remains approximately at 82\% for all degrees that are less than 6. A degree that is too high ($\geq 6$) causes the method to collapse without producing any meaningful patterns. For GSD,  the maximum degree is 3 due to the high compute resource requirement. In particular, at the degree of 4, GSD requires more than 200GB of GPU memory, which is not affordable for us. Theoretically, the running time of methods that rely on marginal queries grows exponentially as the number of queries increases. However, in practice, the running time of some methods is still affordable and do not change significantly. This is because some implementations limit the number of queries to a fixed number to make sure the noise is not too large to yield reliable query answers. As a result, they may not be able to fully model the high-order correlations. In summary, these results indicate the limitations of marginal-based methods in both efficiency and effectiveness in capturing high-order correlations.


\subsection{\texttt{VARIATION\_API} Study}
For consistency, this section presents the results on the Artificial Characters dataset with $\epsilon=1.0$.
\subsubsection{Decay Schedule Study}
\label{app:decay-schedule}
Figure~\ref{fig:app-decay-schedule} illustrates the linear and polynomial decay schedules for the mutation rate. Generally, the polynomial decay provides a higher mutation rate at the early stage for exploration while maintaining a smaller mutation rate at the later iterations for better refinement. This leads to better performance of the polynomial schedule over the linear decay, as shown in Figure~\ref{fig:power-ablation}.

\textbf{Polynomial schedule outperforms linear decay}. We study the impact of different probability decay schedules in the \texttt{VARIATION\_API}. As shown in \autoref{fig:power-ablation}, App.~\ref{app:decay-schedule}, the polynomial schedule consistently outperforms the linear decay, used in~\citet{lin2025differentially}, across all metrics. The polynomial schedule allows more aggressive exploration at the beginning and more focused refinement at the end, leading to better overall performance, illustrated in \autoref{fig:app-decay-schedule}, App.~\ref{app:decay-schedule}. We present additional performance analysis on different decay factors in App.~\ref{app:decay-schedule} (\autoref{fig:parameter-mutation-rate-init} and~\ref{fig:parameter-mutation-rate-gamma}). Generally, a moderate initial mutation rate $\mu_\text{init}$ (0.5-0.7) and decay factor $\gamma$ (0.2 - 0.5) yield the best performance and consistently outperform the linear decay ($\gamma = 1.0$).

\begin{figure}[!htp]
    \centering
    \includegraphics[width=0.35\linewidth]{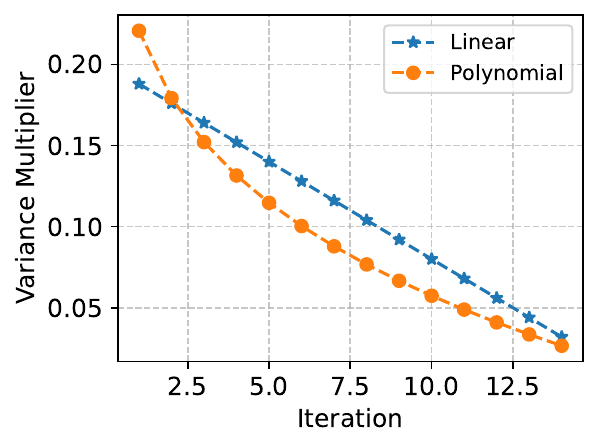}
    \caption{Visualization of different decay schedules.}
    \label{fig:app-decay-schedule}
\end{figure}

\begin{figure}[!htp]
    \centering
    \includegraphics[width=0.35\linewidth]{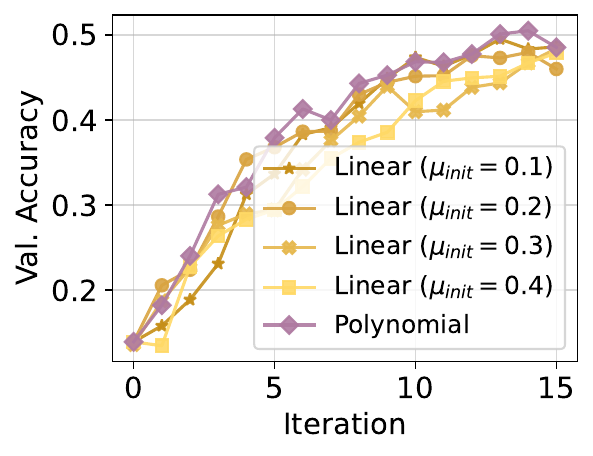} \hspace{0.5cm}
    \includegraphics[width=0.35\linewidth]{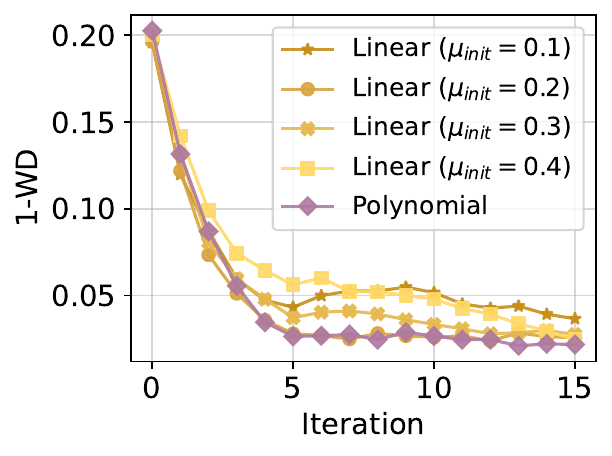}
    \includegraphics[width=0.35\linewidth]{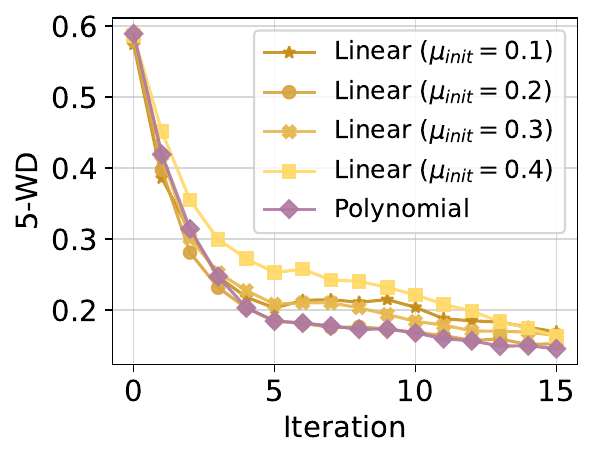}
    \caption{The performance of \ours with different mutation rate decay schedules.}
    \label{fig:power-ablation}
\end{figure}

\subsubsection{Variation operators}
\label{app:genetic-algorithm}
We compare the proposed random walk strategy and a genetic algorithm-based design from an existing work -- PrivGSD~\citep{liu2023generating}. It is worth noting there are significant differences between Private Evolution and PrivGSD. PrivGSD is a method that heavily relies on marginal queries. PrivGSD first defines a set of marginal queries and privately answers them. Then it uses a genetic algorithm to search for a synthetic dataset that minimizes the error compared to the noisy answers. Therefore, PrivGSD still inherits the limitations of marginal query-based approaches in capturing high-order correlations. In contrast, \ours does not rely on marginal queries, our evolutionary process is directly guided by the private data at each iteration. In this comparison, we adapt the genetic algorithm-based design from PrivGSD to our \texttt{VARIATION\_API} interface. More specifically, the original operators in PrivGSD work at dataset levels, while \ours's \texttt{VARIATION\_API} requires sample-level operators. To achieve this, we remove the random selection of samples from the dataset, and instead we apply the operators to all samples in the synthetic dataset. Additionally, PrivGSD performs mutation only one attribute at a time, which leads to very slow convergence ($\sim$200K iterations). This is not affordable for Private Evolution, as the privacy budget is consumed at each iteration. Therefore, we modify the mutation operator to allow all attributes at once. The crossover operator is kept the same as in PrivGSD. Figure~\ref{fig:app-genetic-algorithm} presents the results. This confirms that the simple random-walk design in \ours is effective and efficient.



\subsection{Extremely High-Dimensional Dataset (Flattened MNIST)}
\label{app:mnist}

In this experiment, we rescale the original MNIST images from $28 \times 28$ to $14 \times 14$ and then flatten them into 196-dimensional vectors. We set the privacy budget $\epsilon$ to 1.0 and generate 300 synthetic images. We set the order of marginal queries to 2 for the baselines, as higher-order queries are not affordable for such a high-dimensional dataset. For \ours, we run for 100 iterations with 30 sampling iterations and 20 variations. Such a large number of iterations and variations is necessary for this extremely high-dimensional dataset, as the search space is huge. For the classification accuracy, we employ a tiny CNN model with only 2 convolutional layers and 1 fully-connected layer. \autoref{fig:mnist-full} provides some samples generated by the methods. It is expected that the marginal methods are not able to successfully reproduce the digit patterns, which requires extremely high-order understanding of pixels. In contrast, \ours can capture the high-order correlations and generate samples that are visually similar to the real data. This demonstrates the effectiveness of \ours in capturing high-order correlations even in extremely high-dimensional datasets.

\begin{figure}[!htp]
    \centering
    \includegraphics[width=\linewidth]{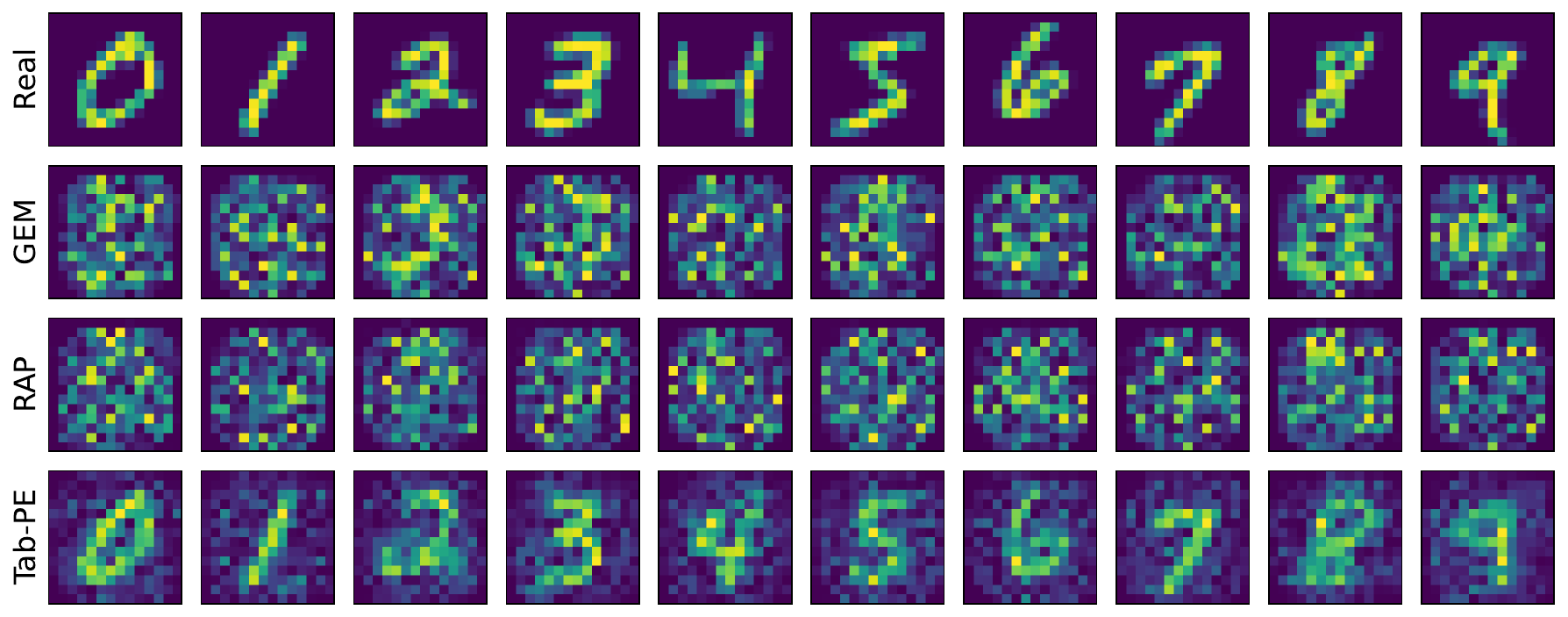}
    \caption{Samples generated by the methods for the flattened MNIST dataset with $\epsilon = 1.0$ . The first row corresponds to the real data. The remaining rows correspond to the synthetic data generated by the baselines and \ours.}
    \label{fig:mnist-full}
\end{figure}

\subsection{Additional High-Order Real-World Datasets}
\label{app:additional-high-order}
We compare \ours with AIM on several high-order real-world datasets, including Insurance\footnote{https://www.kaggle.com/datasets/mosapabdelghany/medical-insurance-cost-dataset}, Monk\footnote{https://archive.ics.uci.edu/dataset/70/monk+s+problems}, and Walking Activity\footnote{https://archive.ics.uci.edu/dataset/286/user+identification+from+walking+activity}. \autoref{tab:additional-high-order} provides the results on the downstream utility. \ours consistently outperforms AIM on all datasets, demonstrating the effectiveness of \ours in capturing high-order correlations in real-world scenarios.

\begin{table}[!htp]
    \centering
    \begin{tabular}{l|lc}
        Dataset & Method & Test Accuracy \\ \hline
            \multirow{2}{*}{Insurance} & AIM & 89.86 \\
            & \ours & 93.24 \\ \hline
            \multirow{2}{*}{Monk} & AIM & 61.54 \\
            & \ours & 64.84 \\ \hline
            \multirow{2}{*}{Walking Activity} & AIM & 40.56 \\
            & \ours & 47.80 \\
    \end{tabular}
    \caption{Performance of \ours and AIM on additional high-order real-world datasets.}
    \label{tab:additional-high-order}
\end{table}

\subsection{Hyperparameter Sensitivity Analysis}
\label{app:parameter-sensitivity}
For consistency, this section presents the results on the Artificial Characters dataset with $\epsilon=1.0$. For the hyperparameter sensitivity analysis, we vary one hyperparameter while keeping the others fixed as the default ones presented in the implementation if not specified. Due to expensive computational costs of calculating high-order Wasserstein distances, we use 1-WD only. The high-order Wasserstein distances' trends are usually consistent with the downstream utility, as shown in the main paper.

\paragraph{Number of synthetic samples} With fewer samples, the histogram counts are generally larger, which causes the noise to be less significant. However, too few samples may not be able to represent all high-dimensional correlations. If the number of samples is too large, the noise has more impact and can change the order of sample rankings. Figure~\ref{fig:parameter-num-samples} presents the performance of \ours while varying the number of synthetic samples. At $\epsilon = 1.0$, 10\% and 20\% of the size of the private dataset achieve the best performance.

\begin{figure}[!htp]
    \centering
    \includegraphics[width=0.37\linewidth]{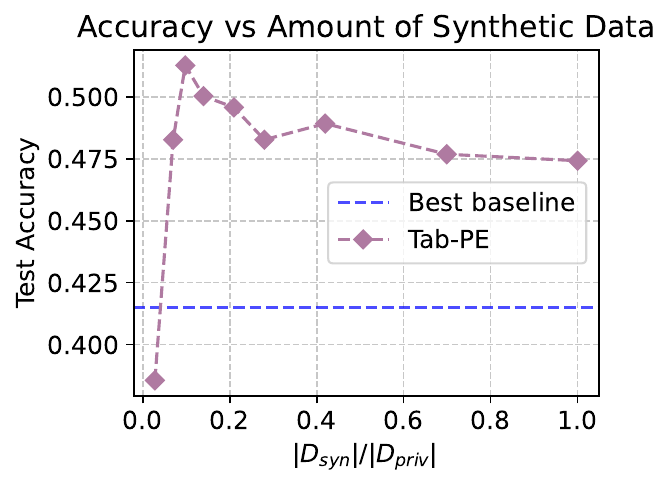} \hspace{1cm}
    \includegraphics[width=0.36\linewidth]{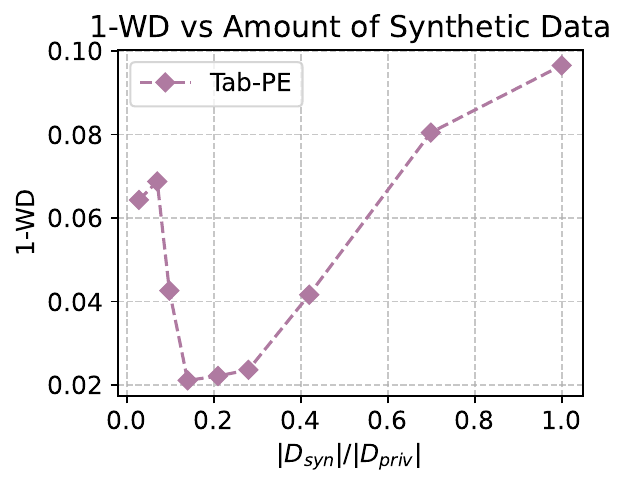}    
    \caption{\ours performance while varying the number of synthetic samples $\mathcal{D}_\text{syn}$.}
    \label{fig:parameter-num-samples}
\end{figure}

\paragraph{Number of iterations} 
A larger number of iterations $T$ allows more refinement of the synthetic data, but also leads to a larger noise scale $\sigma$. Figure~\ref{fig:parameter-num-epochs} presents the performance of \ours under different settings of the number of iterations $T$. \ours needs around 15-20 iterations to achieve good performance.

\begin{figure}[!htp]
    \centering
    \includegraphics[width=0.37\linewidth]{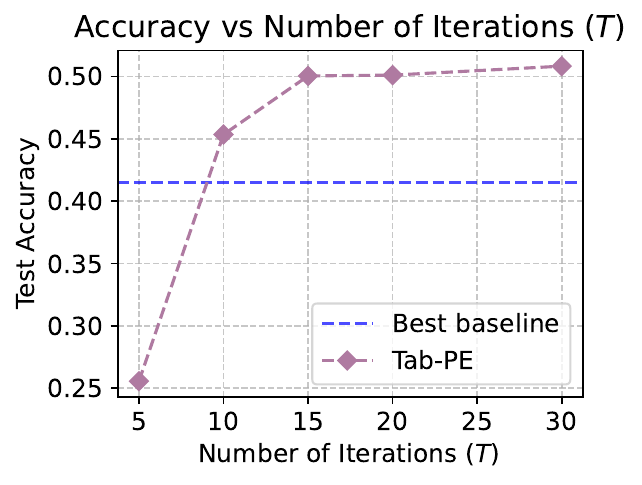} \hspace{0.5cm}
    \includegraphics[width=0.36\linewidth]{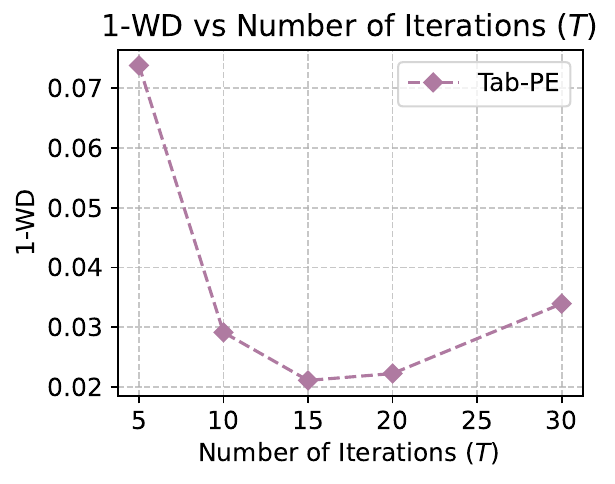}    
    \caption{\ours performance while varying the number of iterations $T$.}
    \label{fig:parameter-num-epochs}
\end{figure}

\paragraph{Number of sampling iterations} The number of sampling iterations $T_\text{sampling}$ controls how many times we employ the sampling-with-replacement strategy. Figure~\ref{fig:parameter-num-sampling-pochs} presents the performance of \ours under different configurations of $T_\text{sampling}$. Generally, combining both sampling and ranking (i.e., $0 < T_\text{sampling} < T$) yields better performance than only ranking (i.e., $T_\text{sampling} = 0$) or only sampling (i.e., $T_\text{sampling} = T$). The best performance is achieved when using 20-40\% of iterations for the sampling strategy. 

\begin{figure}[!htp]
    \centering
    \includegraphics[width=0.37\linewidth]{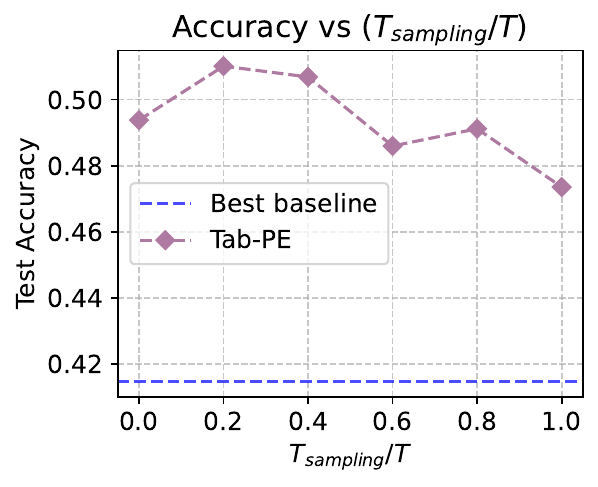} \hspace{0.5cm}
    \includegraphics[width=0.37\linewidth]{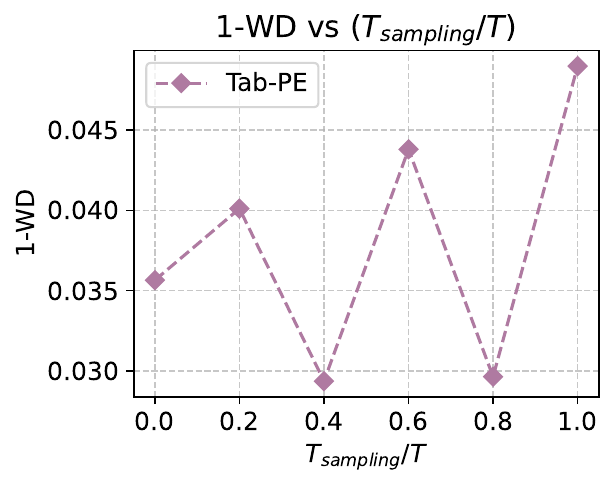}    
    \caption{\ours performance while varying the number of sampling iterations $T_\text{sampling}$.}
    \label{fig:parameter-num-sampling-pochs}
\end{figure}

\paragraph{Mutation rate initial value $\mu_\text{init}$} This parameter controls the noise level in the random walk strategy. A larger mutation rate enables more exploration, but also makes it harder for local refinement. Figure~\ref{fig:parameter-mutation-rate-init} presents the performance of \ours with various values of $\mu_\text{init}$. A moderate value around 0.5-0.8 provides the best utility.

\begin{figure}[!htp]
    \centering
    \includegraphics[width=0.37\linewidth]{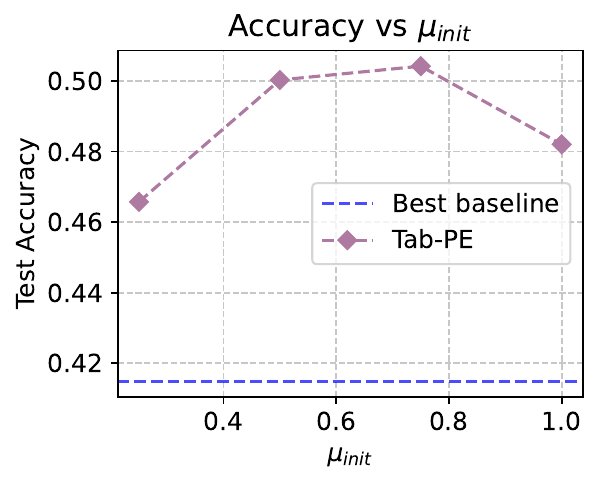} \hspace{0.5cm}
    \includegraphics[width=0.37\linewidth]{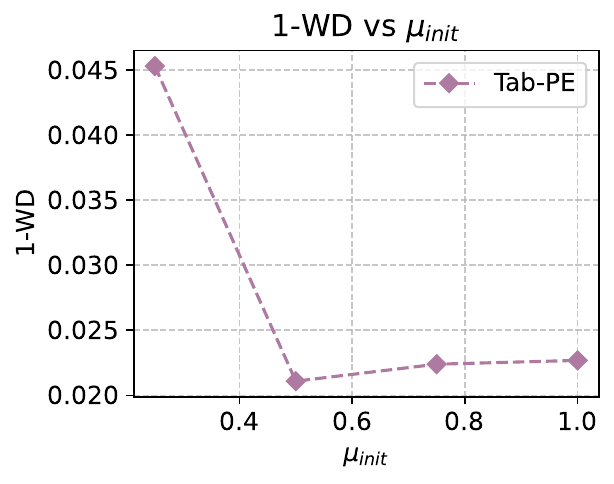}    
    \caption{\ours performance while varying the mutation initial rate $\mu_{init}$ in \texttt{VARIATION\_API}.}
    \label{fig:parameter-mutation-rate-init}
\end{figure}

\paragraph{Decay factor $\gamma$} This parameter controls how fast the mutation rate decays. A smaller $\gamma$ leads to a faster decay. A value at 1.0 is equal to a linear decay. Figure~\ref{fig:parameter-mutation-rate-gamma} presents the performance of \ours with different settings of $\gamma$. A value around 0.5-0.75 achieves the best performance and outperforms the linear decay.

\begin{figure}[!htp]
    \centering
    \includegraphics[width=0.37\linewidth]{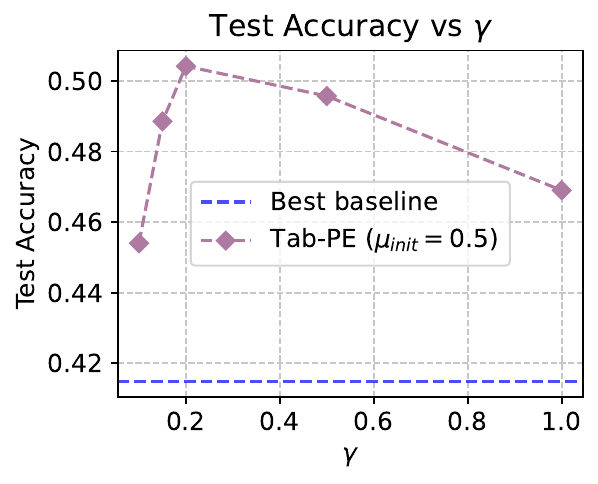} \hspace{1cm}
    \includegraphics[width=0.37\linewidth]{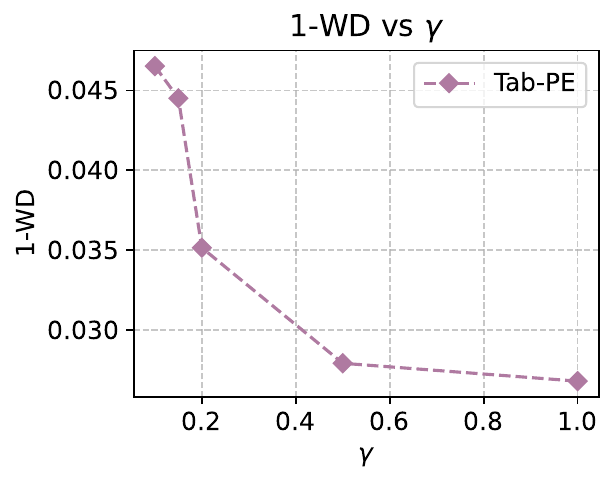}    
    \caption{\ours performance while varying $\gamma$ in the mutation rate schedule decay.}
    \label{fig:parameter-mutation-rate-gamma}
\end{figure}

\paragraph{Categorical-numerical weight $\lambda$} This parameter controls the relative importance of categorical features and numerical features in the variation generation. A larger $\lambda$ means more focus on categorical features. A value of 0 means only numerical features are considered. \autoref{tab:parameter-categorical-weight} presents the performance of \ours with different settings of $\lambda$. Without considering categorical features (i.e., $\lambda = 0$), the performance is significantly worse. The performance is fairly stable across different values of $\lambda$ that are larger than 0, which indicates the robustness of \ours to this parameter. A moderate value around 0.333 achieves the best performance.

\begin{table}[!htp]
    \centering
    \begin{tabular}{l|cccc}
        $\lambda$ & 0.0 & 0.1 & 0.(333) & 1.0 \\ \hline
        Val AUC & 0.510 & 0.630 & \textbf{0.631} & 0.630 \\
        Val F1 & 0.266 & 0.350 & \textbf{0.354} & 0.348 \\
    \end{tabular}
    \caption{Performance of \ours while varying the categorical weight $\lambda$ on the Artificial Characters dataset with $\epsilon=1.0$.}
    \label{tab:parameter-categorical-weight}
\end{table}

\paragraph{Hyperparameter search} In Figure~\ref{fig:figure_hyperparameter_search}, we explore $144$ settings of hyperparameters for the second stage (\ours with ranking), the hyperparameters are chosen from the following sets: number of iterations (epochs) $\in \{15, 20, 30, 50\}$, $\text{num\_samples}\in \{2k, 5k, 10k\}$, variation degree $(m = \text{num\_variations}) \in \{3, 5, 7\}$, and mutation rate initial value $(\mu_{init})\in \{0.15, 0.25, 0.35, 0.5\}$. The mutation rate in this experiment is set by a linear decay schedule. Note that from the figure, smaller values of all hyperparameters generally do better. This inspires us to employ the polynomial decay schedule for the mutation rate, which enables a larger mutation rate at early iterations and a smaller mutation rate at later iterations.

\begin{figure}[!htp]
    \centering
    \includegraphics[width=\textwidth]{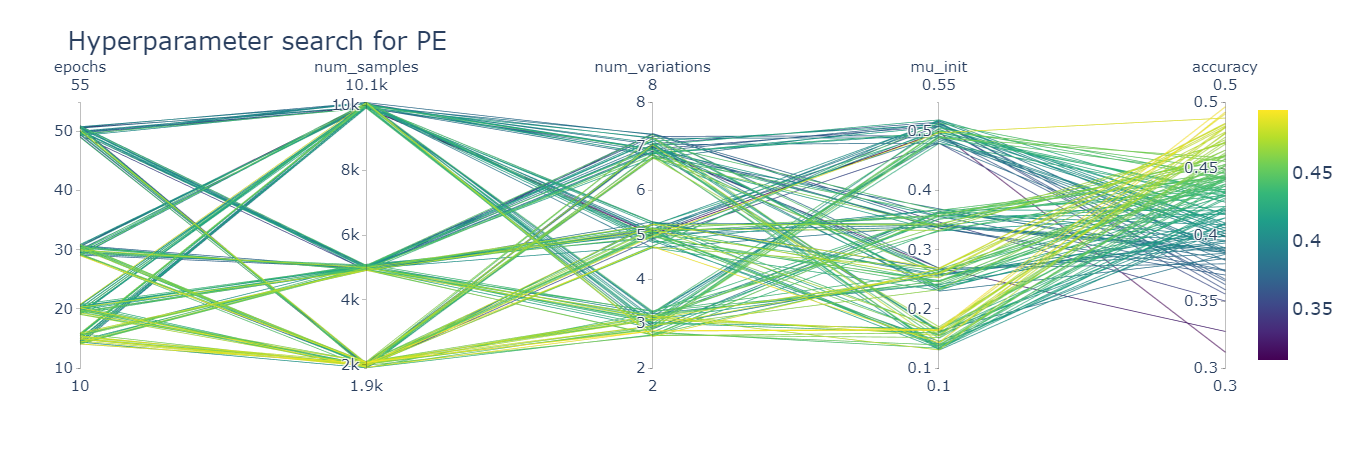}
    \caption{Hyperparameter search for \ours on the Artificial Characters dataset for $\epsilon=1.0$.}
    \label{fig:figure_hyperparameter_search}
\end{figure}

\paragraph{Hyperparameter configuration across privacy settings} In Figure~\ref{fig:figure_hyperparameter_search_diff_eps} for $\epsilon=1.0$ (left-most column) we order all $144$ hyperparameters according to their achieved accuracy. $0$ corresponds to the best setting of hyperparameters, $1$ corresponds to the second best setting, and so on. The lines are colored according to their performance on $\epsilon=1.0$. The same line corresponds to the same setting of hyperparameters. We note that 
the same hyperparameters that are good for $\epsilon=1.0$ are also good for $\epsilon=3.0$ and $\epsilon=10.0$.

\begin{figure}[!htp]
    \centering
    \includegraphics[width=\textwidth]{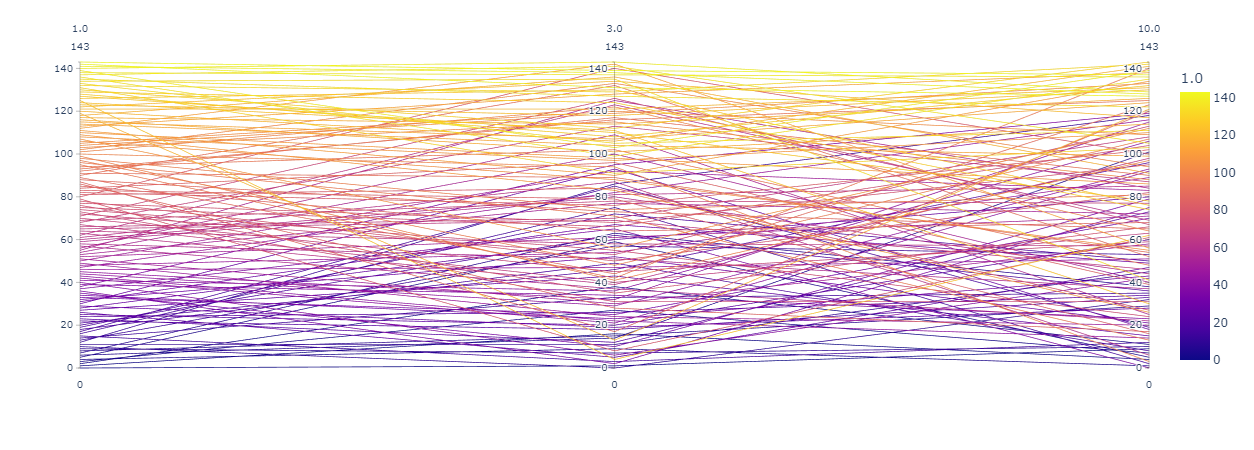}
    \caption{Ordering of the best hyperparameters for $\epsilon=1.0$, $\epsilon=3.0$ and $\epsilon=10.0$.}
    \label{fig:figure_hyperparameter_search_diff_eps}
\end{figure}

\subsection{Oversampling Study}
\label{app:oversampling}
Following the simple recipe from PrivGSD~\citep{liu2023generating}, we conduct oversampling by randomly duplicating the samples. Figure~\ref{fig:oversampling} shows that the performance does not change significantly by oversampling.
\begin{figure}[!htp]
    \centering
    \includegraphics[width=0.35\linewidth]{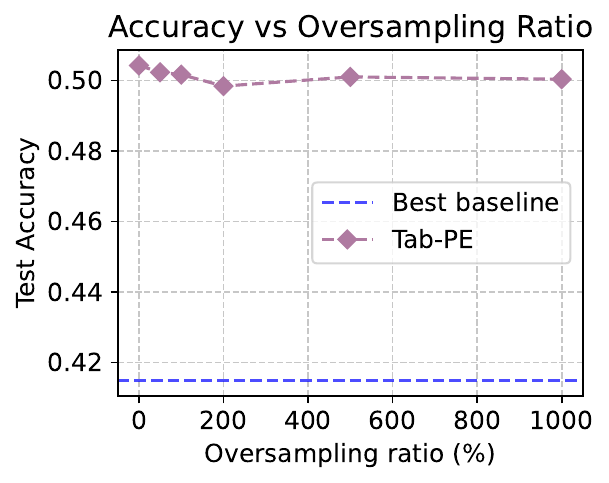} \hspace{1cm}
    \caption{\ours performance while enhancing by oversampling.}
    \label{fig:oversampling}
\end{figure}

\subsection{Noisy class distribution}
\label{app:noisy-class-count}
We remove the assumption that the class distribution is public. Instead, we spend a bit of privacy budget to estimate the class distribution. To simplify, we spend $\epsilon_\text{count}$ out of $\epsilon$ for this estimation. Let $N_c$ be the count vector where $N_c[i]$ corresponds to the number of samples of class $i$. Since each sample is only counted once, the sensitivity of this counting process is 1. To achieve $(\epsilon_\text{count}, \delta)$-DP, we simply add noise, drawn from $\mathcal{N}(0, \sigma^2)$ to each count, where the noise multiplier is calculated by the analytic Gaussian Mechanism~\citep{balle2018improving}. In practice, our implementation uses the \texttt{diffprivlib} library to calculate this noise scale. We conduct an experiment spending 0.02 of the total privacy budget ($\epsilon = 1.0$) to privately estimate the class counts. Figure~\ref{fig:noisy-counts} presents the results of this experiment. Overall, the performance does not change much with the assumption that the class distribution is publicly available.

\begin{figure}[!htp]
    \centering
    \includegraphics[width=0.37\linewidth]{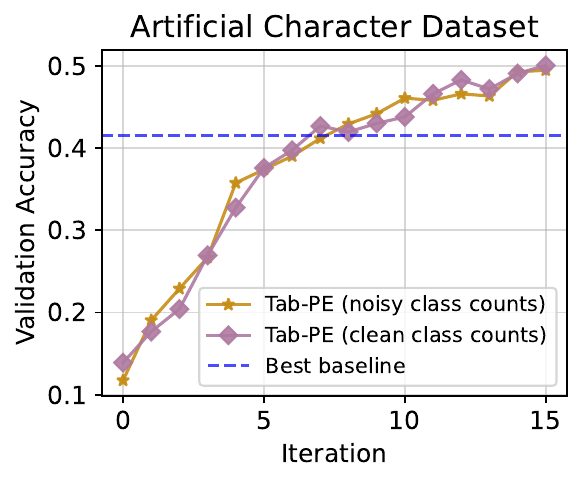} \hspace{1cm}
    \includegraphics[width=0.37\linewidth]{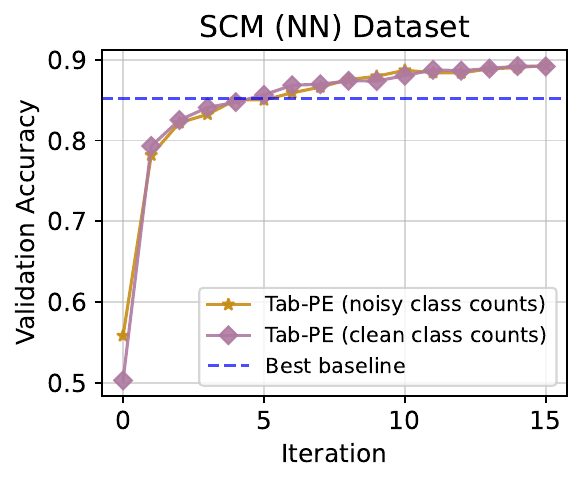}
    \caption{\ours performance with noisy and clean class counts.}
    \label{fig:noisy-counts}
\end{figure}

\section{Limitations \& Future Work}

Although the results are promising, there are still limitations. First, while Tab-PE consistently outperforms the baselines in capturing high-order correlations with better ML utilities, it underperforms on low-order fidelity, which primarily reflects low-order statistics. Second, the gap between Tab-PE and the upper bound (non-private) remains large. This gap is significantly larger than the current gaps of datasets with low-order correlations. Therefore, there is still room 
for further improvements. Third, \ours currently implements a basic distance function on raw attribute scaled values without any embedding or attribute weighting. This can suffer in extreme cases where the data is sparse and most of the attributes are uninformative. While embedding in image and text domains can be achieved by pretrained foundation models, tabular data is very challenging because of strong distribution shifts across datasets. We leave this for future work. Additionally, we only explored simple designs of the Private Evolution APIs. More advanced APIs may further boost the performance.

\end{document}